\documentclass{article} 
\IfFileExists{iclr2027_conference.sty}
  {\usepackage{iclr2027_conference,times}}
  {\usepackage{iclr2026_conference,times}}

\usepackage{amsmath,amssymb,amsthm}
\usepackage{booktabs,graphicx}
\usepackage{enumitem}
\usepackage{microtype,xcolor}
\usepackage{colortbl}
\usepackage{hyperref,url}
\usepackage{placeins}
\usepackage{xspace}
\usepackage{cleveref}
\usepackage{float}
\usepackage{tabularx}
\usepackage{algorithm}
\usepackage{algpseudocode}
\usepackage{booktabs}
\usepackage{caption}
\usepackage{algpseudocode}
\makeatletter
\@ifpackageloaded{graphicx}{}{\usepackage{graphicx}}
\@ifpackageloaded{xcolor}{}{\usepackage{xcolor}}
\@ifpackageloaded{tikz}{}{\usepackage{tikz}}
\makeatother
\usetikzlibrary{arrows.meta}

\providecommand{\egofigdir}{tikz/figures}
\providecommand{\egofigwidth}{\linewidth}

\providecolor{egoink}{HTML}{1F2430}
\providecolor{egoblue}{HTML}{2D6CDF}
\providecolor{egoteal}{HTML}{00A08A}
\providecolor{egoplum}{HTML}{8A4FBE}
\providecolor{egoamber}{HTML}{E8A317}
\providecolor{egored}{HTML}{D64545}

\makeatletter
\AtBeginDocument{%
  \@ifpackageloaded{pifont}
    {\providecommand{\egocheck}{\ding{51}}%
     \providecommand{\egocross}{\ding{55}}%
     \providecommand{\egostar}{\ding{72}}}
    {\providecommand{\egocheck}{\ensuremath{\checkmark}}%
     \providecommand{\egocross}{\ensuremath{\times}}%
     \providecommand{\egostar}{\ensuremath{\star}}}%
}
\makeatother

\newtheorem{proposition}{Proposition}
\newtheorem{lemma}{Lemma}
\newtheorem{corollary}{Corollary}
\title{Evolutionary One-Step Generators: Fast and Diverse Sampling for Discrete Design}
\author{
  \textbf{Marcus Vukojevic\textsuperscript{1}} \quad
  \textbf{Erik Nielsen\textsuperscript{1}} \quad
  \textbf{Veronica Lachi\textsuperscript{2}} \\
  \textbf{Andrea Passerini\textsuperscript{1}} \quad
  \textbf{Giovanni Iacca\textsuperscript{1}} \\[0.5em]
  \textnormal{\textsuperscript{1}University of Trento, Italy} \\
  \textnormal{\textsuperscript{2}UiT The Arctic University of Norway,
  Troms\o{}, Norway} \\[0.5em]
  \textnormal{\textsuperscript{1}\texttt{\{firstname.lastname\}@unitn.it}} \\
 \textnormal{\textsuperscript{2}\texttt{veronica.lachi@uit.no}}
}

\iclrfinalcopy

\newcommand{\methodname}{\textsc{EGO}\xspace}
\newcommand{\methodnamez}{\textsc{EGO}$_0$\xspace}

\begin{document}

\maketitle
\lhead{Preprint}

\begin{abstract}
Several discrete design tasks, such as molecular discovery, require diverse collections of useful candidates at low computational cost. High validity alone does not guarantee a useful candidate library: repeatedly generating the same valid structures leaves few distinct alternatives. Training for both feasibility and diversity is challenging because many relevant criteria can only be evaluated after hard decoding. To address this challenge, we propose \methodname (Evolutionary Generators with One-step inference), a framework for training compact generators directly on discrete outputs. The method combines distribution matching with structural constraints and optional diversity or history-dependent rewards, using antithetic low-rank evolution strategies without requiring criterion-specific differentiable surrogates. Once trained, the generator produces the entire graph in a single neural-network evaluation.
On molecular generation benchmarks, our compact generator achieves over $50\times$ the valid-and-unique yield per estimated dense operation compared to recent one-step flow-map baselines while retaining high chemical validity. 
In scaffold completion, \methodname achieves an observed $44.3\times$
speedup over MoLeR in generation to SMILES and produces approximately
$10\times$ as many filter-passing proposals within matched time budgets
for generation and screening.
Beyond chemistry, \methodname produces $1.54\times$ as many distinct held-out elite architectures as relaxed gradient training on NAS-Bench-101. 
The low generation cost may enable real-time candidate generation across discrete design tasks, supporting interactive exploration of constrained design spaces and rapid construction of candidate sets for downstream evaluation.
\end{abstract}

\section{Introduction}

Many discrete generative applications ultimately require not an unlimited stream of plausible samples, but a \emph{finite collection of useful candidates} \citep{DBLP:journals/jcisd/RenzLK24}. 
Molecular discovery constructs libraries of compounds for downstream screening, while neural architecture search considers a limited set of computational graphs for further evaluation \citep{brown2019guacamol}. 
In such settings, distributional fidelity and validity remain important, but they are not sufficient: invalid outputs waste evaluation budget, repeated outputs add little to the candidate pool, and expensive sampling limits how many proposals can be considered. A useful generator must therefore produce many \emph{distinct qualifying
candidates} \citep{DBLP:conf/iclr/XieX0M23} under a finite proposal and compute budget.

This is challenging for discrete generative models because many relevant criteria are naturally defined only on the final discrete object. 
Connectivity, graph validity, exact identity, duplicate detection, and history-dependent rewards are straightforward to evaluate on generated structures, while optimizing them through categorical sampling typically requires task-specific relaxations or surrogate gradients.
Continuous relaxations and straight-through estimators instead optimize differentiable surrogates \citep{bengio2013estimating,jang2017gumbel}, which can be highly effective when the surrogate faithfully represents the desired hard-output criterion. 
Score-function estimators provide direct access to hard rewards, but introduce different optimization trade-offs.

One-step generators are particularly appealing in this setting because,
once trained, they map latent noise and optional conditions directly
to a complete proposal in a single learned-network evaluation
\citep{decao2018molgan,roos2026categoricalflowmaps,deng2026drifting}.
The central training question is:
\emph{How can this one-step proposal distribution be shaped directly
by criteria evaluated on its final hard outputs?}

We address this problem by proposing \methodname, an objective-direct framework for training one-step discrete generators. 
Complete hard samples are scored using distribution-matching terms, structural constraints, and optional set- or history-dependent rewards, without requiring these criteria to be differentiable. 
We optimize the resulting objective with antithetic evolution strategies (ES) using shared sampling noise and low-rank parameter perturbations.

Building on established ES methods
\citep{salimans2017es,sarkar2025eggroll},
we contribute a training interface for reusable one-step discrete
generators, a finite-budget formulation, and an empirical study of
when the resulting training--inference trade-off is favorable.
Population evaluation increases training cost, but after training each proposal requires only one evaluation of the central generator. The \methodname workflow is shown in \Cref{fig:ego-overview}.

Our experiments show that this trade-off can be favorable in structured candidate-generation tasks.
On molecular benchmarks, \methodname achieves high distinct valid yield at low estimated inference cost: relative to CFM-ECLD \citep{roos2026categoricalflowmaps}, its valid-and-unique yield per estimated dense operation is approximately $50\times$ higher on QM9 \citep{ramakrishnan2014qm9} and $200\times$ higher on ZINC250K \citep{irwin2012zinc}, with a measurable distributional-fidelity trade-off. 
Under a matched 30-minute active-training budget on ZINC250K,
\methodname reaches a mean strict valid--unique--novel yield of $77.9\%$, compared with $5.8\%$ for relaxed training, despite the gradient methods completing approximately six times as many optimizer updates.
We further explore the potential of \methodname as a reusable candidate generator through pilot experiments.
On 100 held-out scaffold queries, \methodname produces an average of
$547.3$ filter-passing proposals per query, compared with $54.5$ for
MoLeR \citep{maziarz2022moler}, under matched generation and screening
budgets averaging $6.08$ seconds per query,  giving a $10.05\times$
gain in the number of filter-passing proposals.
In a complementary evaluation at $100000$ proposals,
\methodname returns $20245$ distinct filter-passing completions
versus $16684$ for MoLeR, a $1.21\times$ increase with an observed speedup of $44.3\times$.
On NAS-Bench-101 \citep{ying2019nasbench101}, it produces $1.54\times$ as many distinct held-out elite architectures as relaxed training at the same proposal budget, although relaxed training achieves higher mean accuracy and closer feature-distribution matching. 

Together, these results suggest that objective-direct evolutionary training is most useful when hard feasibility and candidate diversity are poorly represented by a differentiable surrogate, and when low-cost repeated sampling is important. To summarize, our main \textbf{contributions} are:
\begin{enumerate}[leftmargin=*]
\item We formulate discrete generation as a finite-budget candidate-generation problem, showing that distinct successful yield depends both on success probability and on how probability mass is
distributed across successful identities.
\item We introduce an objective-direct training framework for one-step discrete generators that supports distributional, structural, and history-dependent criteria evaluated
on complete hard samples.
\item We instantiate this interface with antithetic low-rank evolution
strategies and evaluate it against diverse graph-generation baselines,
demonstrating its effectiveness for distinct candidate generation with
one-step inference while documenting the associated trade-offs.
\end{enumerate}

The appendix further examines this trade-off through variations in
optimization recipe and model scale, as well as controlled
non-molecular experiments. It also discusses related work and
provides full implementation and evaluation details for reproducibility.

\begin{figure}[t]
  \centering
  \input{tikz/ego_pipeline.tex}
  \caption{\methodname. One evaluation of $f_\theta$ turns $(\mathbf{z},n)$ into a batch of complete graphs (1). The criterion is read off those hard graphs: distribution match, feasibility, and rarity (2). The update compares opposite probes $\theta\pm\varepsilon$ scored under identical noise (3). Adam carries the step back to the generator along the loop below. After training, each proposal costs one network evaluation (4).}
  \label{fig:ego-overview}
\end{figure}

\section{Finite-Budget Discrete Generation}
\label{sec:problem_formulation}

We consider generation over a discrete space $\mathcal{X}$, with attributed
graphs as our primary setting. When generation is conditional, let
$c\in\mathcal{C}$ denote the condition. A stochastic generator with parameters $\theta$ induces the distribution
\begin{equation}
q_\theta(x \mid c) = \Pr_\theta(X=x \mid c),
\label{eq:generator_distribution}
\end{equation}
for $x \in \mathcal{X}$.
For unconditional generation, we omit $c$. The following finite-budget
analysis applies to independent complete samples regardless of the
generator's sampling procedure. \Cref{sec:method} introduces
\methodname, which generates each proposal in one network evaluation.

Many applications evaluate a generator from a fixed number of samples rather than a single output.
Let $X_1,\ldots,X_N$ be $N$ independent samples for a given condition $c$. We denote by
$a(x,c)\in\{0,1\}$ whether $x$ satisfies the evaluation criterion. To identify repeated outputs, let
$\kappa_c:\mathcal{X}\rightarrow\mathcal{K}_c$ map each output to the identity used for duplicate
detection. The number of distinct successful outputs is
$U_N=\left|\{\kappa_c(X_i):a(X_i,c)=1,\;i\in[N]\}\right|$, with corresponding yield
$Y_N=U_N/N$. What counts as a successful output depends on the task. For example, it may be a
valid, connected, training-novel molecule, an architecture in a target performance set, or a scaffold
completion passing a set of chemical filters. Distinct yield complements, rather than replaces, measures
of distributional fidelity, coverage, and average quality.

At finite $N$, not only the probability of success but also how this probability is distributed across
successful identities affects the resulting candidate set. Let
$\mathcal{A}_c=\{\kappa_c(x):x\in\mathcal{X},\,a(x,c)=1\}$ and define
$p_\theta(u\mid c)=\Pr_\theta(a(X,c)=1,\kappa_c(X)=u\mid c)$ for
$u\in\mathcal{A}_c$.

\begin{proposition}[Finite-budget distinct yield]
\label{prop:finite_budget_yield}
For $N$ independent samples from $q_\theta(\cdot\mid c)$,
\begin{equation}
    \mathbb{E}[U_N\mid c]
    =
    \sum_{u\in\mathcal{A}_c}
    \left[1-(1-p_\theta(u\mid c))^N\right].
    \label{eq:expected_distinct_count}
\end{equation}
Let
$\alpha_\theta(c)=\sum_{u\in\mathcal{A}_c}p_\theta(u\mid c)$
denote the probability of producing a successful output, and let
$\Gamma_\theta(c)=\sum_{u\in\mathcal{A}_c}p_\theta(u\mid c)^2$.
Then
\[
    \max\!\left\{
        0,\,
        N\alpha_\theta(c)
        -
        \binom{N}{2}\Gamma_\theta(c)
    \right\}
    \leq
    \mathbb{E}[U_N\mid c]
    \leq
    N\alpha_\theta(c).
\]
\end{proposition}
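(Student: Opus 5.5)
The plan is to write $U_N$ as a sum of indicators and use linearity of expectation. For each $u\in\mathcal{A}_c$, let $I_u$ be the indicator that some $i\in[N]$ satisfies $a(X_i,c)=1$ and $\kappa_c(X_i)=u$. Then $U_N=\sum_{u\in\mathcal{A}_c} I_u$. Because the $X_i$ are independent and identically distributed given $c$, the event that no sample hits $u$ has probability $(1-p_\theta(u\mid c))^N$. Taking expectations term by term gives \eqref{eq:expected_distinct_count}. If $\mathcal{A}_c$ is countably infinite, the exchange of sum and expectation is justified by monotone convergence, since all the indicators are nonnegative.

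For the bounds I will work per identity. Write $p=p_\theta(u\mid c)\in[0,1]$ and $g(p)=1-(1-p)^N$.

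\textbf{Upper bound.} Bernoulli's inequality $(1-p)^N\ge 1-Np$ gives $g(p)\le Np$. Summing over $u$ yields $\mathbb{E}[U_N\mid c]\le N\alpha_\theta(c)$. Equivalently, this is a union bound over the $N$ samples.

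\textbf{Lower bound.} I will use the second-order Bonferroni inequality
\[
(1-p)^N\le 1-Np+\binom{N}{2}p^2 ,
\]
which gives $g(p)\ge Np-\binom{N}{2}p^2$. To prove it, consider $h(p)=(1-p)^N-1+Np-\binom{N}{2}p^2$, which satisfies $h(0)=0$. Its derivative is
\[
h'(p)=N\bigl[1-(1-p)^{N-1}-(N-1)p\bigr],
\]
and this is $\le 0$ by Bernoulli's inequality applied with exponent $N-1$. Hence $h\le 0$ on $[0,1]$. Alternatively, the same inequality is $\Pr(\bigcup_i E_i)\ge\sum_i\Pr(E_i)-\sum_{i<j}\Pr(E_i\cap E_j)$ with $E_i=\{X_i \text{ hits } u\}$, using $\Pr(E_i\cap E_j)=p^2$ by independence. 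Summing over $u$ gives $\mathbb{E}[U_N\mid c]\ge N\alpha_\theta(c)-\binom{N}{2}\Gamma_\theta(c)$. The $\max\{0,\cdot\}$ then follows because $U_N\ge 0$.

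The only step requiring care is this second-order inequality. The derivative (or Bonferroni) argument handles it cleanly, and the cases $N=1$ and $p\in\{0,1\}$ need no special treatment.
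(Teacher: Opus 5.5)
Your proof is correct and follows the paper's argument essentially step for step: the indicator decomposition $U_N=\sum_{u\in\mathcal{A}_c} I_u$ with linearity of expectation (Tonelli/monotone convergence for countable $\mathcal{A}_c$), the union bound for the upper bound, the second-order Bonferroni inequality with $\Pr(E_{i,u}\cap E_{j,u})=p_u^2$ for the lower bound, and nonnegativity of $U_N$ for the $\max\{0,\cdot\}$. Your calculus check of $(1-p)^N\le 1-Np+\binom{N}{2}p^2$ via $h'(p)\le 0$ is a valid self-contained alternative to citing Bonferroni, but it does not change the route.
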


The proposition separates two effects of the learned proposal distribution.
The quantity $\alpha_\theta(c)$ measures how much probability mass is assigned to successful outputs,
whereas $\Gamma_\theta(c)$ is the probability that two independent proposals produce the same
successful identity. Thus, a high probability of success alone does not imply a large distinct candidate
set at finite budget. The proof is given in Appendix~\ref{app:proofs}.
This can also be seen directly from Eq.~\eqref{eq:expected_distinct_count}. Let
$f_N(p)=1-(1-p)^N$, which is concave for $N\geq2$. Hence,
$f_N(p_1)+f_N(p_2)\leq2f_N((p_1+p_2)/2)$. Replacing the probabilities of two successful
identities by their average, while keeping their total probability fixed, therefore cannot decrease the
expected number of distinct outputs.

\textbf{Training on discrete samples.}
The criterion used for training need not coincide with the metric used for final evaluation. At iteration
$t$, let $S_B=\{(X_i,c_i)\}_{i=1}^{B}$, with $X_i\sim q_\theta(\cdot\mid c_i)$, be a batch of
$B$ generated discrete objects. Let $R$ be an optional reference batch and let $H_t$ denote an optional
state accumulated before iteration $t$. We consider criteria of the form
\begin{equation}
    J_t(\theta\mid H_t)
    =
    \mathbb{E}\!\left[
        \mathcal{L}(S_B,R,H_t)
        \mid H_t
    \right],
    \label{eq:hard_objective}
\end{equation}
where the expectation is over the sampled conditions, generated objects, and reference data used at
iteration $t$. A training criterion may contain some or all of the following components:
\begin{equation}
    \mathcal{L}(S_B,R,H_t)
    =
    D(S_B,R)
    +
    \sum_j \lambda_j C_j(S_B)
    -
    \sum_r \beta_r \mathcal{R}_r(S_B,H_t).
    \label{eq:objective_decomposition}
\end{equation}
Here, $D$ measures the discrepancy between generated and reference data, $C_j$ are structural or
feasibility penalties, and $\mathcal{R}_r$ are optional task-, set-, or history-dependent rewards.
The state $H_t$ is held fixed while the samples used for the current update are evaluated. All terms
in Eq.~\eqref{eq:objective_decomposition} are evaluated on discrete generated objects and need not
be differentiable.

\textbf{Hard objectives and gradient approximations.}
Discrete sampling prevents ordinary backpropagation through
the sampled choices, but still permits gradient-based
optimization of the expected hard objective.
A continuous relaxation replaces discrete variables with
soft counterparts and optimizes a differentiable proxy. We refer to this relaxation-based approach as Soft.
The straight-through estimator (STE) instead retains the hard
forward computation while substituting derivatives during
backpropagation~\citep{bengio2013estimating,jang2017gumbel}. Soft generally changes the optimized objective, whereas STE's substituted derivatives do not generally correspond to the gradient of Eq.~\eqref{eq:hard_objective}.
Score-function estimators such as
REINFORCE~\citep{williams1992reinforce} provide another route:
they combine hard-sample objective values with gradients
of sampling log probabilities, without differentiating
through the sampled objects.
Our experiments (\ref{sec:questions}) compare this approach with \methodname and 
Appendix~\ref{app:qm9-reinforce} provides the full training
and evaluation protocol. Evolution strategies construct updates from objective
evaluations under parameter perturbations, without requiring
backpropagation through the generator.
We use this approach to incorporate structural checks and
task-, set-, or history-dependent rewards through a common
interface based on hard-objective evaluations.
The additional forward evaluations during training are
part of the computational trade-off assessed in our experiments. \Cref{sec:method} describes the evolutionary,
straight-through, and relaxed training procedures.

\section{Objective-Direct Training of One-Step Generators}
\label{sec:method}

In \Cref{sec:problem_formulation}, we separated the criterion evaluated on discrete samples from the
signal used to optimize the generator. We now connect the three components of our method: a generator
that emits a complete discrete object in one pass, an objective evaluated on that hard object, and an
evolutionary update that uses objective values without differentiating through sampling. The generator
determines inference cost; the training interface determines which criteria shape its proposal distribution.

\textbf{One-step discrete generation.} 
Let $z\sim\mathcal N(0,I)$ be a latent variable, $c$ the condition, and $\xi$ the randomness used
for discrete sampling. We parameterize the distribution in Eq.~\eqref{eq:generator_distribution} as
\begin{equation}
\hat G = d(f_\theta(z,c), \xi; c).
\end{equation}
where $f_\theta$ predicts all output logits in parallel and $d$ applies hard sampling together with
task-specific structural masks. A proposal therefore requires one evaluation of $f_\theta$; discrete
sampling, graph construction, and subsequent validity checks do not require additional generator
evaluations.

\textbf{Instantiations.}
For unconditional molecular generation, the condition $c=n$ specifies the number of heavy atoms.
The generator concatenates $z$ with an encoding of $n$ and processes it with a multilayer perceptron:
$h_0=[z;\operatorname{onehot}(n)]$ and
$h_\ell=\operatorname{ReLU}(W_\ell h_{\ell-1}+b_\ell)$.
Two output heads predict atom-state logits $L^A$ and bond logits $L^B$. Hard atom and bond states
are sampled as
\begin{equation}
    \widehat a_i
    =
    \arg\max_k(L^A_{ik}+\xi^A_{ik}),
    \qquad
    \widehat b_{ij}
    =
    \arg\max_k(L^B_{ij,k}+\xi^B_{ij,k}),
    \quad i<j,
    \label{eq:hard_categorical_sampling}
\end{equation}
with independent standard Gumbel noise. Only the first $n$ nodes are active, and bonds involving
inactive nodes are masked out. This enforces the requested graph size and an undirected graph without
self-loops, but does not enforce chemical validity or connectedness. During distributional generation,
$n$ is sampled from the training size distribution. For scaffold completion, we instead use a separate one-step decoder that preserves the supplied core atoms and
bonds and predicts the added atoms, their parent assignments, and bond orders. Each added atom selects
a parent from the scaffold or from an earlier added atom, which guarantees attachment to a connected
nonempty core. Additional predicted edges can introduce cycles. The scaffold-only setting supplies only
the scaffold, with the number of added atoms sampled from the training data.
For NAS-Bench-101, an unconditional MLP predicts in parallel logits for the 21 possible
forward edges on a seven-node graph and categorical operations for its five internal nodes.
Restricting edges to the strict upper triangle ensures acyclicity. Input and output nodes
are fixed, and nodes outside input-to-output paths are pruned.
Full architectural details for all three generators are given in Appendix~\ref{app:method}.

\textbf{Objectives on discrete samples.}
The objective in Eq.~\eqref{eq:objective_decomposition} must reward realistic samples without losing
the hard constraints that determine whether they are usable. We therefore combine distribution matching
with penalties evaluated on sampled discrete structures. Generated and reference objects are compared
through task-specific feature maps
$\phi_r$. For a generated batch $\{\widehat G_i\}_{i=1}^{B}$ and reference batch
$\{G_j^{\mathrm{ref}}\}_{j=1}^{B_+}$, let
$x_i^{(r)}=\phi_r(\widehat G_i)/s_r$ and
$y_j^{(r)}=\phi_r(G_j^{\mathrm{ref}})/s_r$.
For each feature family, we use the empirical Energy discrepancy~\citep{szekely2013energy}
\begin{equation}
\begin{aligned}
    \mathcal E_r
    ={}&
    \frac{2}{BB_+}
    \sum_{i,j}
    \rho(x_i^{(r)},y_j^{(r)})
    -
    \frac{1}{B(B-1)}
    \sum_{i\neq i'}
    \rho(x_i^{(r)},x_{i'}^{(r)})
    \\
    &-
    \frac{1}{B_+(B_+-1)}
    \sum_{j\neq j'}
    \rho(y_j^{(r)},y_{j'}^{(r)}),
\end{aligned}
\label{eq:energy_discrepancy}
\end{equation}
where $\rho(a,b)=\sqrt{\lVert a-b\rVert_2^2+10^{-12}}$.
The first term brings generated features close to the reference data, while the second prevents concentration of the generated feature distribution. Feature maps describe task-relevant graph properties, such as node and edge attribute frequencies, degree statistics, local patterns, and spectral or connectivity descriptors. Their choice determines which aspects of the reference distribution are matched. We combine these discrepancies with graph-constraint penalties:
\begin{equation}
    \mathcal L_{\mathrm{core}}
    =
    \sum_r \mathcal E_r
    +
    \sum_j \lambda_j
    \frac{1}{B}\sum_{i=1}^{B}C_j(\widehat G_i).
    \label{eq:core_objective}
\end{equation}
Each $C_j$ measures violations of a task-specific structural or attribute constraint. The feature maps, penalties, and their weights are specified for each application in Appendix~\ref{app:method}. Because \methodname scores hard samples directly, its
distribution-matching objective is modular. For scaffold completion, our experiments favored Drift \citep{deng2026drifting}, a local attraction--repulsion objective, over the global Energy discrepancy in Eq.~\eqref{eq:energy_discrepancy} (see Appendix~\ref{app:scaffold} for details).

Some useful criteria concern a candidate set rather than an isolated sample. Equation~\eqref{eq:objective_decomposition}
therefore also allows rewards that depend on previously generated samples. History-dependent rewards can encourage exploration of graph structures that have been sampled less frequently. Let $h(G)$ be a task-specific graph signature and let $V_t(h(G))$ count its visits before iteration $t$. We define the batch \emph{rarity reward} as 
\begin{equation}
    \mathcal R_t
    =
    \frac{1}{B}
    \sum_{i=1}^{B}
    \frac{
        \mathbf 1_{\mathrm{feasible}}(\widehat G_i)
    }{
        \sqrt{1+V_t(h(\widehat G_i))}
    }.
    \label{eq:rarity_reward}
\end{equation}
The history is held fixed while the current population is evaluated and
updated only afterwards. The feasibility indicator refers to the constraints used during training, which need not coincide with the final evaluation criterion. Likewise, the archive signature $h$ need not coincide with the identity map $\kappa_c$ used for evaluation. Additional downstream filters can therefore assess properties that were not explicitly optimized during training.

\textbf{Evolutionary optimization.}
\label{sec:es_training}
Evolution strategies optimize the generator from objective
evaluations without differentiating through hard sampling,
feature extraction, or structural checks~\citep{salimans2017es}.
At each update, we evaluate antithetic parameter perturbations
using the same latent variables, conditions, reference data,
and Gumbel noise. Sharing the sampling noise couples the
categorical decisions and removes disagreements caused solely
by resampling. Lemma~\ref{lem:shared-gumbel}, stated and proved
in Appendix~\ref{app:shared-gumbel}, quantifies this stability
by bounding the probability of different discrete outputs
in terms of the changes in their logits.

For small logit changes, a shared-noise comparison changes a category only near an argmax boundary;
independent noise introduces disagreement even between identical parameter vectors. The lemma therefore
justifies coupling sampling noise within each antithetic pair. It does not imply that the rank-shaped ES
direction is an unbiased gradient estimator. Appendix~\ref{app:proofs} gives the proof.

To reduce the memory required by the population, we use rank-one perturbations. For a weight matrix
$W_\ell$, direction $m$ is $\epsilon_{m\ell}=u_{m\ell}v_{m\ell}^{\top}$, with independent
Gaussian vectors $u_{m\ell}$ and $v_{m\ell}$. Each direction defines the antithetic
pair
\begin{equation}
    \theta_{m,\ell}^{\pm}
    =
    \theta_\ell
    \pm
    \sigma_\ell\epsilon_{m\ell},
    \qquad m=1,\ldots,M,
    \label{eq:antithetic_parameters}
\end{equation}
with analogous vector perturbations for biases. This follows scalable neural ES with antithetic
sampling~\citep{salimans2017es} and low-rank population evaluation~\citep{sarkar2025eggroll}. Let $s_m^+$ and $s_m^-$ denote the shaped fitness values assigned to the two members of pair $m$.
The search direction for parameter block $\ell$ is
\begin{equation}
    g_\ell
    =
    \frac{1}{2M\sigma_\ell}
    \sum_{m=1}^{M}
    (s_m^+-s_m^-)\epsilon_{m\ell}.
    \label{eq:es_direction}
\end{equation}
We use Adam \citep{kingma2015adam} to update the central parameters from $g$. Fitness shaping is experiment-dependent: the molecular experiments use centered ranks, with the rarity term ranked separately when present,
whereas the controlled NAS uses centered negative loss without an explicit
rarity reward. We use Eq.~\eqref{eq:es_direction} as a search direction and do not interpret it as an
unbiased gradient estimator. The population of networks is required only during training. After training, generation uses the central network
in Eq.~\eqref{eq:generator_distribution}. Exact perturbation scales, population sizes, optimizer settings, feature
normalization, and implementation details are reported in Appendix~\ref{app:optimization}.

\section{Experiments}
\label{sec:questions}
We organize the evaluation into three questions. \textbf{RQ1} asks what validity--efficiency trade-off one-step and multi-step molecular generators offer. \textbf{RQ2} asks whether one-step generation supports useful downstream candidate libraries. \textbf{RQ3} asks whether the approach extends beyond molecular graphs.

\textbf{Baselines and controlled comparisons.}
\label{sec:gradient_comparisons}
We compare \methodname with three gradient-based baselines sharing the
generator architecture and training split within each controlled experiment.
All methods are evaluated on hard discrete outputs.
STE~\citep{bengio2013estimating} uses hard features in the forward pass
and their soft counterparts' derivatives:
$\phi_{\mathrm{STE}}=\phi_{\mathrm{soft}}+
\operatorname{stopgrad}(\phi_{\mathrm{hard}}-\phi_{\mathrm{soft}})$.
Soft~\citep{jang2017gumbel} optimizes the relaxed objective.
REINFORCE with leave-one-out (RLOO)~\citep{williams1992reinforce} estimates
gradients from hard rewards without differentiating through discrete outputs.
The molecular size sweep includes \methodnamez, which omits the rarity
reward in Eq.~\eqref{eq:rarity_reward} from \methodname to assess its
contribution to candidate yield.
A separate QM9 control compares \methodname and RLOO using the same hard
reward and fixed feature scales at matched hard-reward evaluation budgets,
excluding rarity and rank shaping to isolate the update recipes.
See Appendices~\ref{app:gradient-baselines} and~\ref{app:qm9-reinforce}
for experiment-specific objectives, budgets, and checkpoint rules. Molecular comparisons also include MolGAN~\citep{decao2018molgan},
MoFlow~\citep{zang2020moflow},
CFM-CSD/ECLD~\citep{roos2026categoricalflowmaps},
Kernel-gradient drifting~\citep{esteban2026kernelgradient},
CatFlow~\citep{eijkelboom2024catflow}, DeFoG~\citep{qin2025defog},
MELD~\citep{seo2026meld}, G2PT~\citep{chen2025g2pt}, and
SMILES LSTM~\citep{brown2019guacamol}.
The training-time comparison also includes Cometh~\citep{siraudin2025cometh}.
Additional baselines and protocols are given in
Appendix~\ref{app:external-molecular}.

\textbf{Evaluation protocol.} Full validity requires chemical sanitization of the complete molecular graph. VU measures distinct valid molecules per attempt, while VUN additionally excludes training molecules; strict variants also require connectivity. These yields differ from uniqueness conditioned on validity. Estimated efficiency $E_{\mathrm{VU}}$ normalizes VU yield by reconstructed dense-projection work, including all neural evaluations; it excludes other computation and is not a runtime measurement. Scaffold uniqueness is measured per query. Matched-time budgets cover
generation and screening, excluding final SMILES export for \methodname
(Appendix~\ref{app:scaffold}). NAS measures distinct held-out elite yield alongside validity and average accuracy. Tables report mean $\pm$ sample SD across runs. Appendix~\ref{app:method} details metric definitions, seed counts, checkpoint selection, and computational accounting.

\begin{figure}[!ht]
    \centering
    \includegraphics[width=\linewidth]{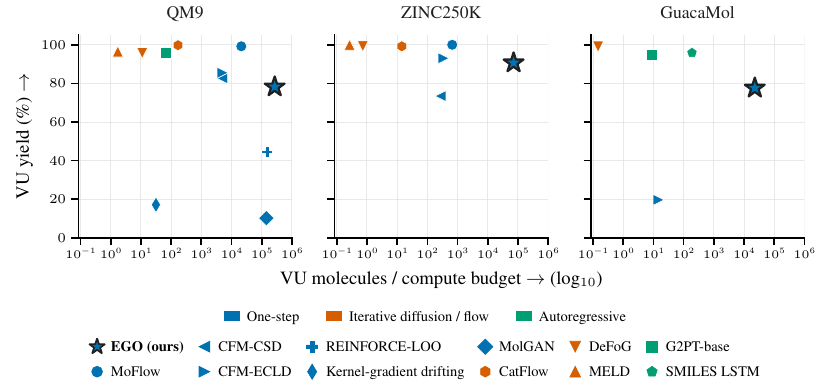}
    \caption{
        \textbf{Distinct valid yield versus estimated inference efficiency.}
        Axes are valid-and-unique yield (VU) and VU per $10^{12}$ estimated dense operations ($E_{\mathrm{VU}}$, log scale). The GuacaMol CFM-ECLD marker is a local 3/84-epoch feasibility checkpoint, not a converged baseline, and is excluded from ratio and ranking claims. Source protocols and sampling budgets differ, and $E_{\mathrm{VU}}$ is not wall-clock throughput. Full accounting and methods references appear in Appendix~\ref{app:rq1}.
    }
    \label{fig:molecular-validity-efficiency}
\end{figure}

\subsection{RQ1: \methodname Reaches High Molecular Yield at Low Training and Estimated Inference Cost}
\Cref{fig:molecular-validity-efficiency} places \methodname in the
high-yield, high-efficiency region of each molecular benchmark. With one
network evaluation per proposal, it attains VU yields of 78.21\%, 90.66\%,
and 77.61\% on QM9, ZINC250K, and GuacaMol. Relative to CFM-ECLD, \methodname achieves approximately
$50\times$ and $200\times$ the valid-and-unique yield per
estimated dense-projection operation on QM9 and ZINC250K,
respectively, while VU is lower by 7.24 and 2.37
percentage points. These ratios compare reconstructed dense projection work
under the source protocols rather than matched-hardware runtime. The efficiency gain trades off distributional fidelity. FCD, Fr\'echet ChemNet Distance \cite{DBLP:journals/jcisd/PreuerRUHK18}, is 3.53 versus
2.16 on QM9, 30.36 versus 20.0 on ZINC250K for \methodname and
CFM-ECLD, and on GuacaMol $34.62\pm1.58$. Thus, \methodname favors a high yield of distinct valid
candidates over closer matching of the reference distribution. For an extensive and detailed comparison, see Appendix~\ref{app:rq1}. 

Figure~\hyperref[fig:molecular-training-evidence]{\ref*{fig:molecular-training-evidence}a} tests whether the yield advantage persists under controlled training budgets. In 30 minutes on ZINC250K, \methodname reaches 77.9\% Strict VUN, compared with 53.7\% without rarity, 5.8\% for Soft, and 29.5\% for STE. This gives $13.47\times$ Soft's yield and a $9.29\times$ advantage without rarity, although the gradient methods complete $\approx6\times$ as many updates (Appendix~\ref{app:zinc-time-budget}). Across the six molecular size settings (two datasets, three size caps each), \methodnamez also outperforms STE, while rarity
improves yield throughout (Appendix~\ref{app:ksweep}).
In a separate five-seed QM9 experiment matching architecture, reward, and fixed feature scales, \methodname reaches 38.4 Strict VUN versus 12.0 for RLOO at 11.52 million hard-reward graph evaluations. Without rarity or rank shaping, this supports a yield advantage for \methodname's update recipe at this budget beyond differences in reward construction (Appendix~\ref{app:qm9-optimizer-control}). 

Figure~\hyperref[fig:molecular-training-evidence]{\ref*{fig:molecular-training-evidence}b}
compares the computational resources required by \methodname with those of baselines other than CFM. \methodname records 0.30, 2.10, and 14.85 optimizer-update hours on QM9, ZINC250K, and GuacaMol, respectively. DeFoG reported times are 6.5, 14, and 141 hours, while Cometh reports 6 on QM9 and 96 hours on GuacaMol. CFM reports approximately 48 and 144 hours on QM9 and ZINC250K, while its configured GuacaMol schedule projects to 2.2--2.8k optimizer-update hours from our local three-epoch pilot (see Appendix~\ref{app:cfm-guacamol}). Hardware, GPU count, and timing scope differ, so these values provide resource context rather than measured speedups. Even with this caveat, \methodname lies at the low-time end on every dataset.

Overall, the controlled experiments support higher candidate yield under limited training time, while the external comparison shows higher yield per estimated inference work; both gains come with a measurable distributional-fidelity trade-off. Appendix~\ref{app:ksweep} reports the full six-setting molecular-size sweep, extended STE and RLOO comparisons, rarity ablations, and the complete training-time audit.

\begin{figure*}[t]
    \centering
    \includegraphics[width=0.90\textwidth]{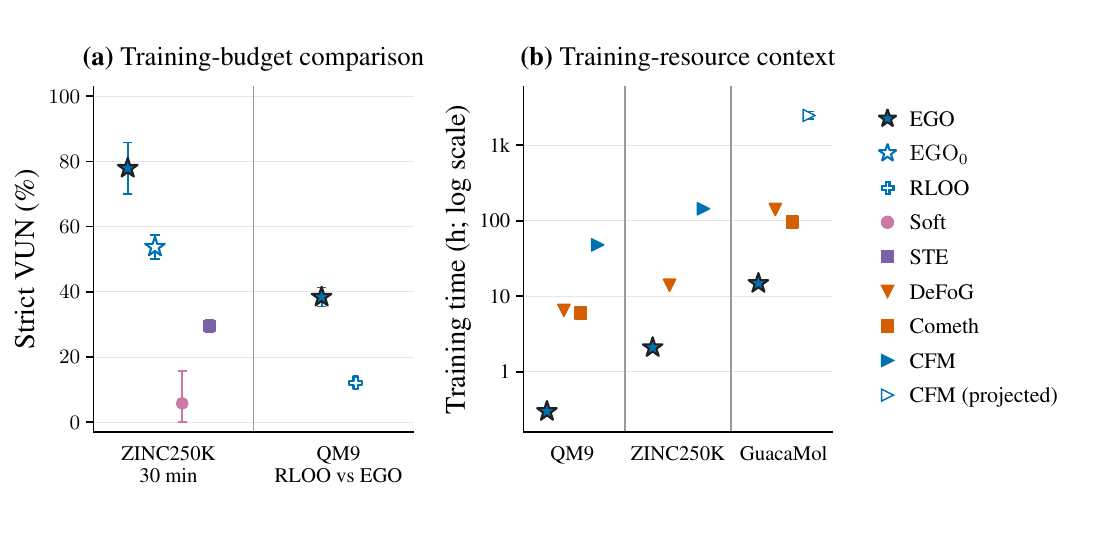}
    \caption{
    \textbf{Training efficiency and resource context.}
    \textbf{(a)} Strict VUN under a shared active 30-minute ZINC250K wall-clock budget; RLOO versus \methodname on QM9.
    \textbf{(b)} Training time (log scale).
    \methodname reports optimizer-update time; DeFoG, Cometh, and CFM report elapsed training time on QM9/ZINC250K. $^\ddagger$Cometh GuacaMol used two A100s; $^\dagger$CFM GuacaMol: 84-epoch constant-throughput projection from a local three-epoch pilot. See Appendix~\ref{app:external-guacamol}.
}
    \label{fig:molecular-training-evidence}
\end{figure*}

\subsection{RQ2: \methodname Produces More Filter-Passing Proposals at Matched Time}
Scaffold completion tests whether low generation cost allows more
candidates to be screened within a common time budget. We compare with
MoLeR trained from scratch on the same ZINC250K scaffold split,
using the same 100 held-out queries and screening rules
(details in Appendix~\ref{app:scaffold}).
Both models run on the same GPU after warmup. For each query, the time
MoLeR needs to generate and screen 256 proposals (a single batch) defines the deadline
for both methods. \methodname screens graph outputs directly and the final
SMILES export is outside this interval. Screening requires a valid,
connected completion containing the scaffold, QED (Quantitative Estimate
of Drug-likeness) $\geq0.7$, SA (Synthetic Accessibility) $\leq4$, and no
additional PAINS (pan-assay interference compounds) alert types relative
to the core.
At a mean deadline of 6.08 seconds per query, \methodname screens
1892.7 proposals on average, compared with 256 for MoLeR.
Of these, 547.3 versus 54.5 pass the filters, a $10.05\times$
increase at matched time (\Cref{tab:scaffold}). 

Averaging passing counts across the three replicates for
each query, \methodname returns more passing proposals on 73 of 100
scaffolds, with 13 ties and 14 MoLeR wins. MoLeR covers more queries
and more often returns the higher-QED passing candidate
(Appendix~\ref{app:scaffold-matched-time}).
\Cref{fig:scaffold_example} shows the selected completions for one query.
In a complementary evaluation with $100{,}000$ proposals,
\methodname produces $20245$ distinct filter-passing completions
versus $16684$ for MoLeR, a $1.21\times$ increase, while achieving
a measured $44.3\times$ speedup in generation including SMILES
conversion (Appendix~\ref{app:scaffold-results}).

\begin{table}[t]
\centering
\begin{minipage}[c]{0.67\linewidth}
\caption{\textbf{Scaffold screening at matched time.}
Mean $\pm$ sample SD over three runs on 100 queries: three \methodname
training seeds and three sampling seeds from one MoLeR checkpoint. Fractions divide passing by screened counts
within each run. Query wins compare mean passing counts across runs
(13 ties). Both methods share each query's deadline.
Protocol: Appendix~\ref{app:scaffold-matched-time}.}
\label{tab:scaffold}
\setlength{\tabcolsep}{2.5pt}
\renewcommand{\arraystretch}{1.16}
\begin{tabularx}{\linewidth}{@{}>{\raggedright\arraybackslash}X
  r@{\,${}\pm{}$\,}l@{\hspace{10pt}}r@{\,${}\pm{}$\,}l@{}}
\toprule
Metric & \multicolumn{2}{c}{\methodname} & \multicolumn{2}{c}{MoLeR} \\
\midrule
Mean deadline/query (s)
    & $6.08$ & $0.10$ & $6.08$ & $0.10$ \\
\midrule
Screened proposals/query
    & $\mathbf{1892.7}$ & $\mathbf{63.7}$ & $256.0$ & $0.0$ \\
Filter-passing proposals/query
    & $\mathbf{547.3}$ & $\mathbf{15.9}$ & $54.5$ & $0.4$ \\
Passing fraction (\%)
    & $\mathbf{28.95}$ & $\mathbf{1.58}$ & $21.27$ & $0.14$ \\
\midrule
Scaffold coverage (/100)
    & \multicolumn{2}{c}{$79$} & \multicolumn{2}{c}{$\mathbf{85}$} \\
Query wins (/100)
    & \multicolumn{2}{c}{$\mathbf{73}$} & \multicolumn{2}{c}{$14$} \\
\bottomrule
\end{tabularx}
\end{minipage}\hfill
\begin{minipage}[c]{0.31\linewidth}
\centering
\includegraphics[width=\linewidth]{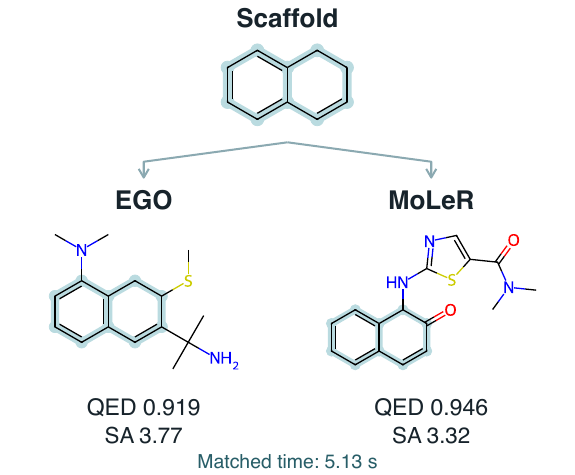}
\captionof{figure}{Highest-QED filter-passing completions for the
prespecified scaffold 23 under a shared 5.13-second deadline.
The supplied core is highlighted.}
\label{fig:scaffold_example}
\end{minipage}
\end{table}
\subsection{RQ3: \methodname Produces More Distinct Elite Neural Architectures}
\label{sec:nas}
NAS-Bench-101~\citep{ying2019nasbench101} tests whether
the candidate-library perspective extends to neural architectures.
The task is to learn a one-step proposal distribution from an
offline corpus of high-performing architectures and generate
distinct candidates of similarly high quality beyond those used
for training. \methodname, STE, and Soft share the generator architecture,
initialization within each seed, data split, sampling noise,
and checkpoint selection rule. \methodname evaluates Energy and a fixed
structural-violation penalty on hard graphs. STE preserves the
same numerical hard forward objective while using surrogate
derivatives, and Soft optimizes their relaxed
objective. This instantiation of \methodname uses neither a rarity reward nor an explicit duplicate penalty, allowing us to assess candidate yield without these additional incentives. The Energy objective itself still encourages diversity among generated samples. All methods receive 15,000 training updates, matching update count rather than training work, with cumulative optimizer-update times of approximately 46 seconds for \methodname and 34 seconds for STE and Soft (Appendix~\ref{app:nas}).

Here, \emph{elite} denotes an architecture whose benchmark
validation accuracy is at or above the 90th percentile
in the architecture training partition.
The evaluation asks how many different elite architectures,
excluded from both training and checkpoint selection, each model
returns within 10000 proposals. 
The outcome of interest is the number of distinct
qualifying alternatives available for further consideration.
\Cref{tab:nas} shows that \methodname produces the largest
collection of distinct held-out elite architectures. From 10,000
proposals, it returns $221.7\pm7.5$ such candidates, compared
with $138.0\pm28.9$ for STE and $143.7\pm18.6$ for Soft
across the three seeds, providing $1.61\times$ and $1.54\times$
as many alternatives on average, respectively. The advantage
also appears in the broader collection of valid architectures:
over half of the proposal budget yields distinct valid candidates
with \methodname, compared with one third for STE
and one quarter for Soft. Thus, the candidate-yield advantage observed in molecular
generation also appears in this architecture-search setting. 

This broader library comes with a trade-off. Soft achieves
higher validity and mean test accuracy, and lower feature MMD (Maximum Mean Discrepancy),
but repeats architectures more often. The additional distinct
candidates from \methodname offer more options for downstream
comparison and screening beyond the accuracy cutoff.
The advantage is task-dependent: on binarized MNIST,
Soft achieves better distribution matching and broader
predicted digit coverage than \methodname
(Appendix~\ref{app:mnist}).

\begin{table}[t]
\centering
\caption{\textbf{NAS-Bench-101 results with SD from 10,000 proposals per selected
model}. The elite cutoff is the 90th percentile of
benchmark validation accuracy in the architecture training
partition. Mean test accuracy is averaged over valid proposals,
including repetitions. Feature MMD
compares the feature distributions of generated architectures
and the elite target; lower values indicate a closer match.
Protocol details appear in Appendix~\ref{app:nas}.}
\label{tab:nas}
\setlength{\tabcolsep}{4pt}
\begin{tabular}{@{}l*{3}{r@{${}\pm{}$}l}@{}}
\toprule
Metric
& \multicolumn{2}{c}{\methodname (ES)}
& \multicolumn{2}{c}{STE}
& \multicolumn{2}{c}{Soft} \\
\midrule
Validity (\%) $\uparrow$
& $72.35$ & $1.76$
& $77.45$ & $6.27$
& $\mathbf{89.56}$ & $\mathbf{0.82}$ \\

Valid-and-unique yield (\%) $\uparrow$
& $\mathbf{55.51}$ & $\mathbf{0.29}$
& $33.25$ & $6.02$
& $25.12$ & $4.00$ \\

Held-out elite yield (\%) $\uparrow$
& $\mathbf{2.22}$ & $\mathbf{0.08}$
& $1.38$ & $0.29$
& $1.44$ & $0.19$ \\

Mean test accuracy (\%) $\uparrow$
& $91.85$ & $0.06$
& $92.38$ & $0.29$
& $\mathbf{92.70}$ & $\mathbf{0.10}$ \\

Feature MMD ($\times 10^{-3}$) $\downarrow$
& $27.70$ & $2.70$
& $10.90$ & $5.20$
& $\mathbf{3.10}$ & $\mathbf{0.70}$ \\
\bottomrule
\end{tabular}
\end{table}
\section{Conclusion}
\label{sec:conclusion}

We introduced \methodname, a framework for training one-step discrete
generators using objectives evaluated directly on hard outputs. Across
molecular generation, scaffold completion, and NAS-Bench-101, the
approach produces large sets of distinct qualifying candidates while
retaining one-network-evaluation inference. These results support
finite-budget candidate yield as a useful complement to validity,
average quality, and distributional fidelity. The results also highlight the central trade-off of the approach:
additional objective evaluations are spent during training to shape a cheap, reusable proposal distribution at deployment. This trade-off is
most useful when hard feasibility, diversity, or downstream utility
are not well captured by differentiable surrogates. Future work could improve training efficiency, scale the approach to
larger generators and more complex discrete structures, and incorporate
richer downstream feedback into the training objective.

\textbf{Limitations.}
The benefits of \methodname are task-dependent. Higher candidate yield
can come at the expense of distributional fidelity, average proposal
quality, or coverage across conditions, and relaxed training remains
preferable in some settings. Population evaluation increases training
cost, and scalability to substantially larger generators remains to be
established. The learned distribution also depends on the chosen
features, penalties, and rewards. Finally, the molecular screening
results rely on heuristic filters and do not establish experimental
usefulness.

\subsection*{AI use statement}
Generative AI tools were used to assist with drafting portions of the
manuscript and revising the text for clarity, organization, and readability.
The authors take full responsibility for the final manuscript, including
its originality, technical claims, reported results, and references.

\subsection*{Ethics statement}
This work investigates discrete generative modeling using existing benchmark datasets and does not involve
human participant studies or clinical experiments. The molecular experiments
evaluate computationally generated candidates using structural criteria and
heuristic property filters; these measures do not establish safety,
biological efficacy, or experimental synthesizability. As with other
molecular design methods, faster candidate generation could support
beneficial discovery but could also be misused to propose harmful compounds.
Any downstream use therefore requires application-specific assessment,
appropriate safety screening, and expert oversight before experimental
testing. The learned distributions depend on the benchmark data and chosen
objectives, and the reported improvements should not be interpreted as
guarantees of usefulness outside the evaluated settings.

\subsection*{Reproducibility statement}
Section~\ref{sec:method} describes the proposed framework, and
Appendix~\ref{app:proofs} provides the assumptions, derivations, and proofs
of the theoretical results. Appendix~\ref{app:method} describes the
generator architectures, objective functions, training procedures, and
evaluation conventions. Appendices~\ref{app:rq1}, \ref{app:scaffold},
and~\ref{app:rq3} document task-specific data preparation, experimental
protocols, checkpoint selection, sampling budgets, and additional results.
Appendix~\ref{app:external-accounting} explains the computational accounting
and timing conventions, including the distinction between estimated dense
operation counts and measured runtime. The experimental appendices also
state limitations of the available run records and qualify comparisons
with published reference results.
A documented code repository, including instructions for reproducing all
experiments presented in this paper, will be provided as part of the
supplementary materials. The repository will also be made publicly
available when a public version of the paper is released.

\bibliographystyle{plainnat}
\bibliography{iclr2027_conference}

\clearpage

\appendix


\section*{Appendix contents}
\begingroup
\small
\setlength{\parindent}{0pt}
\setlength{\parskip}{0pt}

\newcommand{\egoappsection}[2]{%
\par\addvspace{5pt}%
\noindent\hyperref[#1]{\textbf{\ref*{#1}\quad #2}}%
\nobreak\hfill\hyperref[#1]{\pageref*{#1}}\par
}
\newcommand{\egoappsubsection}[2]{%
\noindent\hspace*{1em}%
\hyperref[#1]{\ref*{#1}\quad #2}%
\nobreak\dotfill\hyperref[#1]{\pageref*{#1}}\par
}

\egoappsection{app:proofs}{Theoretical Results}
\egoappsubsection{app:finite-budget-proof}{Finite-Budget Distinct Yield}
\egoappsubsection{app:shared-gumbel}{Stability under Shared Gumbel Noise}

\egoappsection{app:related-work}{Background and Related Work}
\egoappsubsection{app:related-work-training}{One-Step Generation and Discrete Training}
\egoappsubsection{app:related-work-applications}{Structured Generation and Applications}

\egoappsection{app:method}{Model, Training, and Evaluation Details}
\egoappsubsection{app:method-molecular}{Molecular Generation}
\egoappsubsection{app:method-scaffold}{Scaffold Completion}
\egoappsubsection{app:method-nas}{Neural Architecture Generation}
\egoappsubsection{app:method-mnist}{Binarized MNIST}

\egoappsection{app:rq1}{RQ1: Molecular Generation}
\egoappsubsection{app:external-molecular}{Shared Evaluation Protocol}
\egoappsubsection{app:external-qm9}{QM9}
\egoappsubsection{app:zinc}{ZINC250K}
\egoappsubsection{app:external-guacamol}{GuacaMol}
\egoappsubsection{app:external-accounting}{Inference Cost and Training Time}

\egoappsection{app:scaffold}{RQ2: Scaffold Completion}
\egoappsubsection{app:scaffold-protocol}{Training and Scaffold Only Inference}
\egoappsubsection{app:scaffold-results}{Screening and Candidate Yield}
\egoappsubsection{app:scaffold-timing}{Generation to SMILES Time}
\egoappsubsection{app:scaffold-matched-time}{Generation and Screening at Matched Time}

\egoappsection{app:rq3}{RQ3: Beyond Molecular Generation}
\egoappsubsection{app:nas}{NAS-Bench-101}
\egoappsubsection{app:mnist}{Binarized MNIST Control}

\endgroup

\section{Theoretical Results}
\label{app:proofs}

\subsection{Finite-Budget Distinct Yield}
\label{app:finite-budget-proof}

\paragraph{Proof of Proposition~\ref{prop:finite_budget_yield}} Fix a condition $c$ and write
\[
p_u = p_\theta(u\mid c),
\qquad u\in\mathcal A_c.
\]
For each successful identity $u$, define
\[
I_u
=
\mathbf 1\!\left\{
u\in
\{\kappa_c(X_i):
a(X_i,c)=1,\ i\in[N]\}
\right\}.
\]
Then
\[
U_N=\sum_{u\in\mathcal A_c} I_u.
\]
Since the summands are nonnegative, linearity of expectation (or Tonelli's theorem if $\mathcal A_c$ is countably infinite) gives
\[
\mathbb E[U_N\mid c]
=
\sum_{u\in\mathcal A_c}
\Pr(I_u=1\mid c).
\]

For a fixed $u$, each proposal produces the correct identity $u$ with probability $p_u$.
Because the $N$ proposals are independent, the probability that $u$ does not occur in any of them is $(1-p_u)^N$.
Therefore
\[
\Pr(I_u=1\mid c)
=
1-(1-p_u)^N,
\]
and hence
\[
\mathbb E[U_N\mid c]
=
\sum_{u\in\mathcal A_c}
\left[1-(1-p_u)^N\right].
\]

For the upper bound, the union bound gives
\[
1-(1-p_u)^N
\leq Np_u.
\]
Summing over successful identities yields
\[
\mathbb E[U_N\mid c]
\leq
N\sum_{u\in\mathcal A_c}p_u
=
N\alpha_\theta(c).
\]

For the lower bound, let
\[
E_{i,u}
=
\{a(X_i,c)=1,\ \kappa_c(X_i)=u\}.
\]
Then $\Pr(E_{i,u}\mid c)=p_u$, and independence gives $\Pr(E_{i,u}\cap E_{j,u}\mid c)=p_u^2$ for $i\neq j$.
The second-order Bonferroni inequality therefore gives
\[
1-(1-p_u)^N
=
\Pr\!\left(\bigcup_{i=1}^N E_{i,u}\,\middle|\,c\right)
\geq
Np_u-\binom{N}{2}p_u^2.
\]
Summing over $u$ gives
\[
\mathbb E[U_N\mid c]
\geq
N\sum_{u\in\mathcal A_c}p_u
-
\binom{N}{2}
\sum_{u\in\mathcal A_c}p_u^2
=
N\alpha_\theta(c)
-
\binom{N}{2}\Gamma_\theta(c).
\]
Since $U_N\geq 0$ almost surely, its expectation is also nonnegative.
Combining the two lower bounds gives
\[
\mathbb E[U_N\mid c]
\geq
\max\left\{
0,\,
N\alpha_\theta(c)
-
\binom{N}{2}\Gamma_\theta(c)
\right\}.
\]
Together with the upper bound, this proves the result.
\hfill$\square$

\subsection{Stability under Shared Gumbel Noise}
\label{app:shared-gumbel}

Evolutionary updates compare objective values obtained from antithetic parameter perturbations.
When categorical samples are drawn independently for the two evaluations, differences between their outputs can arise from both the parameter perturbation and the sampling randomness.
Sharing the Gumbel noise instead couples the categorical decisions while preserving the marginal sampling distribution of each perturbed generator.
The following result quantifies the resulting stability.
Throughout this analysis, the latent input and conditioning information are held fixed.

\begin{lemma}[Stability under shared Gumbel noise]
  \label{lem:shared-gumbel}
  Let $\ell(\theta)\in\mathbb{R}^K$ denote the logits at one categorical site.
  For a fixed perturbation $\Delta$, define
  \[
  \theta^\pm = \theta \pm \Delta,
  \qquad
  \delta(\Delta)
  =
  \max_{\eta\in\{-1,+1\}}
  \left\|
  \ell(\theta+\eta\Delta)-\ell(\theta)
  \right\|_\infty .
  \]
  Let $\xi\in\mathbb{R}^K$ have independent standard Gumbel entries, shared between the two evaluations, and define
  \[
  X^\pm
  =
  \arg\max_{k\in\{1,\ldots,K\}}
  \left\{
  \ell_k(\theta^\pm)+\xi_k
  \right\}.
  \]
  Then
  \[
  \Pr(X^+\neq X^- \mid \Delta)
  \leq
  \min\left\{
  1,\,
  e^{2\delta(\Delta)}-1
  \right\}.
  \]
\end{lemma}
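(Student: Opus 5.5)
The plan is to compare both perturbed samples to the \emph{unperturbed} reference logits $c=\ell(\theta)$ under the same shared Gumbel vector $\xi$. The bound then reduces to a statement about how often the Gumbel-max winner at $c$ wins by only a small margin.

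Write $a=\ell(\theta^+)$ and $b=\ell(\theta^-)$. By the definition of $\delta=\delta(\Delta)$, we have $\|a-c\|_\infty\le\delta$ and $\|b-c\|_\infty\le\delta$. Let $I=\arg\max_k(c_k+\xi_k)$, and define the margin
\[
G=c_I+\xi_I-\max_{j\neq I}(c_j+\xi_j).
\]
The first step is deterministic. Replacing $c$ by $a$ or by $b$ changes each difference $(c_I+\xi_I)-(c_j+\xi_j)$ by at most $2\delta$. Hence if $G>2\delta$, then $I$ is still the strict argmax for both $a+\xi$ and $b+\xi$, so $X^+=X^-=I$. Therefore
\[
\{X^+\neq X^-\}\subseteq\{G\le 2\delta\}.
\]
The whole problem now concerns only $c$ and $\xi$.

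The second step computes the margin distribution with the standard Gumbel-max identities. Fix a site $k$ and let $S_k=\sum_{j\neq k}e^{c_j}$. The variable $\max_{j\neq k}(c_j+\xi_j)$ is Gumbel with location $\log S_k$ and is independent of $\xi_k$. It follows that for $m\ge0$,
\[
\Pr\bigl(c_k+\xi_k>m+\max_{j\neq k}(c_j+\xi_j)\bigr)=\frac{e^{c_k}}{e^{c_k}+e^{m}S_k}.
\]
This is at least $e^{-m}p_k$, where $p_k=e^{c_k}/(e^{c_k}+S_k)$, because the denominator is at most $e^m(e^{c_k}+S_k)$. Summing over $k$, and using that the events $\{I=k,\,G>m\}$ are disjoint, gives $\Pr(G>m)\ge e^{-m}$. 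Equivalently, $\Pr(G\le m)\le 1-e^{-m}$.

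The final step sets $m=2\delta$. This yields
\[
\Pr(X^+\neq X^-\mid\Delta)\le 1-e^{-2\delta}\le e^{2\delta}-1,
\]
and combining with the trivial bound $1$ gives the stated minimum. Ties in the argmax occur with probability zero and can be ignored.

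The main obstacle is choosing the reference frame. If one compares $a$ to $b$ directly, the logits differ by up to $2\delta$ in sup norm. Pairwise differences then move by up to $4\delta$, and this route only gives $e^{4\delta}-1$. Centering at $\ell(\theta)$, which is exactly how $\delta(\Delta)$ is defined, is what yields the constant $2\delta$. The remaining care point is the independence and location of the Gumbel maximum over $j\neq k$ in the second step.
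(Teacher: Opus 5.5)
Your proof is correct and follows the paper's argument essentially step for step. You center at $\ell(\theta)$, use the deterministic inclusion $\{X^+\neq X^-\}\subseteq\{G\le 2\delta\}$, and obtain $\Pr(G>m)\ge e^{-m}$ by summing the per-winner bound $e^{c_k}/(e^{c_k}+e^{m}S_k)\ge e^{-m}p_k$; the only cosmetic difference is that you derive this probability from Gumbel max-stability, whereas the paper reaches the identical expression $\lambda_i/(\lambda_i+e^t(\Lambda-\lambda_i))$ through the exponential-race representation.
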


\begin{proof}
  Write
  \[
  \ell^0=\ell(\theta),
  \qquad
  \ell^\pm=\ell(\theta\pm\Delta),
  \qquad
  \delta=\delta(\Delta).
  \]
  By definition,
  \[
  \|\ell^\pm-\ell^0\|_\infty\leq\delta.
  \]

  Using the same Gumbel vector $\xi$, define the central noisy scores
  \[
  S_k=\ell^0_k+\xi_k,
  \]
  and let
  \[
  X^0=\arg\max_k S_k.
  \]
  Because the Gumbel distribution is continuous, ties occur with probability zero.
  Let
  \[
  M
  =
  S_{X^0}
  -
  \max_{j\neq X^0}S_j
  \]
  be the gap between the largest and second-largest central noisy scores.

  Suppose that $M>2\delta$.
  For every $j\neq X^0$ and $q\in\{-1,+1\}$,
  \[
  \begin{aligned}
    (\ell^q_{X^0}+\xi_{X^0})
    -
    (\ell^q_j+\xi_j)
    &=
    (S_{X^0}-S_j)
    +
    (\ell^q_{X^0}-\ell^0_{X^0})
    -
    (\ell^q_j-\ell^0_j) \\
    &\geq
    M-2\delta
    >0.
  \end{aligned}
  \]
  Hence both perturbed evaluations select the same category $X^0$.
  Therefore
  \[
  \{X^+\neq X^-\}
  \subseteq
  \{M\leq 2\delta\},
  \]
  and thus
  \[
  \Pr(X^+\neq X^-\mid\Delta)
  \leq
  \Pr(M\leq2\delta\mid\Delta).
  \]

  It remains to bound the probability of a small top-two gap under Gumbel-max sampling.
  Let
  \[
  \lambda_k=e^{\ell^0_k},
  \qquad
  \Lambda=\sum_{k=1}^K\lambda_k.
  \]
  Using the exponential-race representation of Gumbel-max, write
  \[
  \xi_k=-\log E_k,
  \qquad
  E_k\stackrel{\mathrm{iid}}{\sim}\operatorname{Exp}(1),
  \]
  and define
  \[
  T_k=\frac{E_k}{\lambda_k}.
  \]
  Then the $T_k$ are independent exponential random variables with rates $\lambda_k$, and maximizing $S_k$ is equivalent to minimizing $T_k$.
  If $T_{(1)}$ and $T_{(2)}$ denote the smallest and second-smallest race times, respectively, then
  \[
  M=\log\frac{T_{(2)}}{T_{(1)}}.
  \]

  For any $t\geq0$, condition on category $i$ winning the race at time $s$.
  The event $M>t$ requires every competing race time to exceed $e^t s$.
  Hence
  \[
  \begin{aligned}
    \Pr(M>t,\ X^0=i)
    &=
    \int_0^\infty
    \lambda_i e^{-\lambda_i s}
    \prod_{j\neq i}
    e^{-\lambda_j e^t s}
    \,ds \\
    &=
    \frac{\lambda_i}
    {\lambda_i+e^t(\Lambda-\lambda_i)}.
  \end{aligned}
  \]
  Since
  \[
  \lambda_i+e^t(\Lambda-\lambda_i)
  \leq e^t\Lambda,
  \]
  we obtain
  \[
  \Pr(M>t,\ X^0=i)
  \geq
  e^{-t}\frac{\lambda_i}{\Lambda}.
  \]
  Summing over $i$ gives
  \[
  \Pr(M>t)\geq e^{-t},
  \]
  and therefore
  \[
  \Pr(M\leq t)
  \leq
  1-e^{-t}
  \leq
  e^t-1.
  \]
  Setting $t=2\delta$ yields
  \[
  \Pr(X^+\neq X^-\mid\Delta)
  \leq
  e^{2\delta}-1.
  \]
  Combining this with the trivial upper bound of one proves the claim.
\end{proof}

\begin{corollary}[Multiple categorical sites]
  \label{cor:shared-gumbel-multisite}
  Suppose an object is represented by $D$ categorical sites, and let $\delta_s(\Delta)$ denote the corresponding logit bound at site $s$.
  If the Gumbel noise at each site is shared between the two perturbed evaluations, then
  \[
  \Pr(X^+\neq X^- \mid \Delta)
  \leq
  \min\left\{
  1,\,
  \sum_{s=1}^D
  \left(
  e^{2\delta_s(\Delta)}-1
  \right)
  \right\}.
  \]
  The same bound applies to final objects obtained by applying a common deterministic decoder to the sampled categorical vectors.
\end{corollary}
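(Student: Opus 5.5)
The plan is to reduce the multi-site statement to Lemma~\ref{lem:shared-gumbel} applied site by site, and then combine the sites with a union bound. Write the sampled object as $X^\pm=(X^\pm_1,\ldots,X^\pm_D)$, where
\[
X^\pm_s=\arg\max_k\{\ell^{(s)}_k(\theta^\pm)+\xi^{(s)}_k\}
\]
and $\xi^{(s)}$ is the Gumbel vector at site $s$, shared between the two evaluations. The two vectors differ exactly when they differ at some site, so
\[
\{X^+\neq X^-\}=\bigcup_{s=1}^D\{X^+_s\neq X^-_s\}.
\]

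The key point is that Lemma~\ref{lem:shared-gumbel} only needs the marginal law at a single site. Its hypotheses are three: logits at one site, a fixed perturbation $\Delta$, and a noise vector with i.i.d.\ standard Gumbel entries shared between $\theta^+$ and $\theta^-$. Conditionally on $\Delta$, and with latent input and condition fixed as in the lemma, each $\xi^{(s)}$ satisfies these hypotheses. The logit bound at that site is $\delta_s(\Delta)$. Applying the lemma at each site gives
\[
\Pr(X^+_s\neq X^-_s\mid\Delta)\le e^{2\delta_s(\Delta)}-1.
\]
No independence across sites is needed, because the union bound
\[
\Pr\Bigl(\bigcup_s\{X^+_s\neq X^-_s\}\Bigm|\Delta\Bigr)\le\sum_s\Pr(X^+_s\neq X^-_s\mid\Delta)
\]
holds for arbitrary dependence. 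Combining the union bound with the trivial bound $1$ yields the displayed inequality.

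For the decoder claim, let $d$ be a common deterministic map applied to both categorical vectors. If $X^+=X^-$ then $d(X^+)=d(X^-)$, so
\[
\{d(X^+)\neq d(X^-)\}\subseteq\{X^+\neq X^-\},
\]
and the same bound follows by monotonicity of probability. This also covers masks that depend only on the fixed condition, since these are part of $d$.

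The only real subtlety is stating precisely that the per-site noise is shared across $\pm$ and that $d$ is the same for both evaluations. If the mask or decoder depended on $\theta$, the inclusion above would fail, so the common-decoder assumption is essential. The union bound itself is routine.
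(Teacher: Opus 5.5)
Your proof is correct and follows the paper's own argument: apply Lemma~\ref{lem:shared-gumbel} at each site, combine the sites with a union bound (which, as you note, needs no independence across sites), and cap the sum by the trivial bound of one. The decoder claim then follows from the event inclusion $\{d(X^+)\neq d(X^-)\}\subseteq\{X^+\neq X^-\}$ for a common deterministic decoder.
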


\begin{proof}
  If the complete sampled vectors differ, then at least one categorical site differs.
  By the union bound and Lemma~\ref{lem:shared-gumbel},
  \[
  \begin{aligned}
    \Pr(X^+\neq X^- \mid \Delta)
    &\leq
    \sum_{s=1}^D
    \Pr(X_s^+\neq X_s^- \mid \Delta) \\
    &\leq
    \sum_{s=1}^D
    \left(
    e^{2\delta_s(\Delta)}-1
    \right).
  \end{aligned}
  \]
  Combining this with the trivial upper bound of one gives the stated result.

  If a common deterministic decoder is applied to both sampled vectors, different decoded objects can occur only if the underlying categorical vectors differ.
  The same bound therefore applies after decoding.
\end{proof}

For comparison, if the two evaluations instead use independent Gumbel noise and $\Delta=0$, then they are independent draws from the same categorical distribution
\[
\pi=\operatorname{softmax}(\ell(\theta)).
\]
Consequently,
\[
\Pr(X^+\neq X^- \mid \Delta=0)
=
1-\sum_{k=1}^K \pi_k^2.
\]
Thus, independent sampling retains a nonzero source of disagreement even when the two parameter vectors are identical.

By contrast, under shared noise, the bound in Lemma~\ref{lem:shared-gumbel} vanishes with the size of the logit perturbation.
For small $\delta$,
\[
e^{2\delta}-1
=
2\delta+O(\delta^2).
\]
Hence, sufficiently small parameter perturbations can change a sampled category only when the shared noisy logits lie near an argmax boundary.

This result characterizes the stability of the sampled decisions.
It does not establish variance reduction for every possible hard objective, nor does it imply that the resulting rank-shaped ES direction is an unbiased gradient estimator.
Its role is instead to motivate paired evaluations that remove disagreements caused solely by resampling, making the comparison more directly reflect the effect of the parameter perturbation.

\section{Background and Related Work}
\label{app:related-work}

\subsection{One-Step Generation and Discrete Training}
\label{app:related-work-training}

\paragraph{One-step generation.}
Direct generators learn a mapping from latent noise to a complete output. Once trained, they can generate a new proposal with a single network evaluation. MolGAN applies this idea to molecular graphs by jointly predicting node and edge distributions with a feedforward network, followed by categorical sampling~\citep{decao2018molgan}. More recently, Categorical Flow Maps learn continuous maps toward categorical outputs through self-distillation, allowing generation in one or a few steps~\citep{roos2026categoricalflowmaps}. Their CSD and ECLD variants use different consistency objectives within the same framework. We include both as one-step references. Our work studies how to train a direct generator when the objective is evaluated on complete discrete samples. Here, one-step generation refers to the network evaluation. Decoding, validity checks, and conversion into usable outputs also contribute to the total generation cost.

\paragraph{Distribution matching without explicit likelihoods.}
A generator can learn by comparing generated and reference samples without computing the probability of each output. Generative Moment Matching Networks train feedforward generators by minimizing maximum mean discrepancy (MMD), which compares distributions through kernel mean embeddings~\citep{li2015gmmn,gretton2012mmd}. Energy statistics provide a related approach based on distances within and between two sets of samples~\citep{szekely2013energy}. Drifting Models use attraction and repulsion to guide the generated distribution toward the data during training, while retaining one-step inference~\citep{deng2026drifting}. These approaches motivate our batch objectives. We compare generated and reference structures through features that describe their properties, and study Energy and Drift as two distinct objectives. The choice of features matters because different graph distributions can share the same feature distribution. We therefore assess feature matching together with validity, diversity, and the number of useful candidates produced.

\paragraph{Learning through discrete decisions.}
Hard categorical sampling prevents ordinary backpropagation through the sampled decision. Gumbel-Softmax replaces this decision with a differentiable relaxation, while straight-through estimators use hard decisions in the forward pass and approximate their derivatives in the backward pass~\citep{bengio2013estimating,jang2017gumbel}. Both approaches provide practical training signals, but improving a relaxed objective or following an approximate gradient may not improve the final discrete outputs. Score-function estimators offer another option. They use gradients of log probabilities to estimate how parameters affect the expected score, without differentiating through the sampled output itself. Their variance can make training difficult~\citep{bengio2013estimating}. Learned reward models provide a further route. MolGAN combines adversarial training with a reward network that approximates external molecular scores and supplies gradients for property optimization~\citep{decao2018molgan}. Our ES implementation uses scores computed on hard batches under parameter perturbations. This allows new criteria to enter training without requiring a differentiable approximation of each one.

\paragraph{Evolution strategies and low-rank perturbations.}
Evolution strategies update model parameters using evaluations of perturbed models. Salimans et al.\ show how this approach can scale to neural networks and use paired positive and negative perturbations to estimate an update~\citep{salimans2017es}. Since the method only needs objective values, these evaluations can include discrete sampling, structural checks, and other non-differentiable operations. EGGROLL makes neural ES more efficient by representing matrix perturbations with low-rank factors, reducing the memory and computation needed to evaluate a population~\citep{sarkar2025eggroll}. Combining several low-rank perturbations can produce an update with higher rank. Our implementation builds on this approach using rank-one perturbations and scores computed on complete batches of discrete outputs. The population consists of perturbed generators during training. At inference, we sample from a single learned generator. The cost of evaluating the training population must therefore be assessed separately from the cost of generating new samples.

\subsection{Structured Generation and Applications}\label{app:related-work-applications}

\paragraph{Molecular graph generation.}
Molecular generators use several approaches to construct graphs. MolGAN predicts a complete graph directly~\citep{decao2018molgan}, while G2PT represents graphs as token sequences and generates them through autoregressive next-token prediction~\citep{chen2025g2pt}. GDSS models the joint evolution of nodes and edges with stochastic differential equations~\citep{jo2022gdss}. DiGress uses discrete noise and denoising steps to generate categorical graphs~\citep{vignac2023digress}. GruM and DeFoG further develop diffusion and flow-based methods for graph generation~\citep{jo2024grum,qin2025defog}. These methods provide context for the balance between sample quality and generation cost. We examine this balance through distributional fidelity, validity, diversity, and computational cost. Comparisons with external models also depend on their data, decoding procedures, and evaluation protocols, which we distinguish from our controlled comparisons between training methods.

\paragraph{Scaffold completion and candidate libraries.}
Scaffold completion requires a generator to preserve a supplied molecular core while proposing extensions. MoLeR supports this task by adding atoms and motifs through sequential decisions~\citep{maziarz2022moler}. Our scaffold generator predicts the variables needed for a complete proposal in one network evaluation, and its decoder preserves the supplied core. We compare the resulting systems using the same public scaffold queries and proposal budget. This comparison concerns the candidate libraries returned by the complete systems, including their respective training and decoding choices. We measure distinct usable completions, distinct completions that pass the screening filters, scaffold coverage, and generation throughput. These quantities capture different aspects of performance. Producing many candidates for a few scaffolds can increase total yield while leaving other queries unsolved, and repeated valid outputs consume the budget without expanding the library.

\paragraph{Offline architecture generation.}
NAS-Bench-101 provides a finite space of neural architectures with recorded performance measurements, allowing reproducible evaluation without training every proposed architecture from scratch~\citep{ying2019nasbench101}. We use an offline labeled corpus to learn a reusable distribution of architecture proposals. Evaluation measures how many distinct held-out elite architectures the generator produces within a fixed sampling budget. This setting differs from adaptive architecture search, where new performance evaluations guide later proposals. Access to the offline labels is part of the supervision budget and must be considered when comparing these settings. Our experiment asks whether training on discrete graph scores can produce a useful and diverse pool of architecture candidates.

\paragraph{Scope of our contribution.}
Direct generation, distribution matching, and evolution strategies are established ideas. Our contribution is to combine them in a practical training approach for discrete structures and evaluate when this combination is useful. Energy and Drift define distributional objectives, while ES, straight-through estimation, and continuous relaxation define different ways to train the generator. Keeping these choices separate helps identify what each comparison measures. Binarized MNIST provides a further setting in which to compare evolutionary and gradient-based training. Across the experiments, we assess sample quality, useful candidate yield, and training and inference costs to understand when evolutionary training offers a useful trade-off.

\section{Model, Training, and Evaluation Details}
\label{app:method}
\label{app:architectures}
\label{app:objectives}
\label{app:rarity}
\label{app:optimization}
\label{app:gradient-baselines}
\label{app:evaluation}

This section covers the experiments discussed in the main paper, including
the binarized-MNIST control cited in RQ3. For each task, we describe what
the generator produces, how its fitness is built, and how we evaluate it.
Across tasks, one-step generation means one evaluation of the neural
generator per proposal. Sampling, graph assembly, chemical checks, and
benchmark lookup can still add computation after that evaluation.

\paragraph{Shared evolutionary update.}
Let $\theta$ denote the generator parameters and let $\ell$ index a
parameter block, such as a weight matrix or bias vector. Each update
evaluates a population of $P=192$ candidates in $J=P/2=96$ pairs.
Pair $j\in\{1,\ldots,J\}$ uses blocks
$\theta_{j\ell}^\pm=\theta_\ell\pm\sigma_\ell\epsilon_{j\ell}$,
where $\sigma_\ell>0$ is the perturbation scale.
For a weight matrix of shape $d_{\rm out}\times d_{\rm in}$,
$\epsilon_{j\ell}=ab^\top$, with independent standard Gaussian vectors
$a\in\mathbb R^{d_{\rm out}}$ and $b\in\mathbb R^{d_{\rm in}}$;
$d_{\rm in}$ and $d_{\rm out}$ are its input and output widths.
Bias perturbations are standard Gaussian vectors of the same length as
the bias. Perturbations are independent across pairs and parameter blocks.
All candidates share latent noise, sampling noise,
conditions, and reference examples. This makes their scores comparable
without adding differences caused by resampling.
For parameter block $\ell$, the update direction is
\begin{equation}
  \widehat g_\ell=
  \frac{1}{P\sigma_\ell}
  \sum_{j=1}^{J}(u_j^+-u_j^-)\epsilon_{j\ell},
  \label{eq:app-es-direction}
\end{equation}
where $u_j^\pm$ are the scalar utilities of the two candidates and
$\widehat g_\ell$ is the ascent direction for block $\ell$.
Adam applies this direction as an ascent step. We use $t$ to index
optimizer updates and $q\in\{1,\ldots,P\}$ to index individual candidates.
For any array $W$, $\operatorname{RMS}(W)$ denotes the square root of
the mean of its squared entries. The task descriptions specify whether utilities use
ranks or raw losses. Equal update counts do not imply equal training work:
an evolutionary update evaluates a population, whereas a gradient update
optimizes one generated batch.

\paragraph{\methodname training algorithm.}
Algorithm~\ref{alg:app-ego-training} summarizes the shared training procedure. The central
network is the unperturbed generator with parameters $\theta$.
Each task supplies its own fitness components and utility rule, defined in
Sections~\ref{app:method-molecular}--\ref{app:method-mnist}.

\begin{algorithm}[tbp]
\caption{\methodname training}
\label{alg:app-ego-training}
\small
\begin{algorithmic}[1]
\Require Data splits, task settings, and training budget.
\State Initialize $\theta$, Adam state, fixed feature statistics, and any rarity archive.
\While{the training budget remains}
  \State Draw shared latent and sampling noise, references, and any conditions.
  \State Generate an unperturbed batch if the task requires one.
  \State Prepare dynamic scales and targets; age the archive when scheduled.
  \State Hold all scoring quantities and archive counts fixed across candidates.
  \State Sample $J$ perturbation pairs $\theta_{j\ell}^{\pm}
    =\theta_\ell\pm\sigma_\ell\epsilon_{j\ell}$.
  \For{each candidate $q=1,\ldots,P$}
    \State Generate a batch of hard samples using the shared inputs.
    \State Compute the task's fitness components on the whole batch.
  \EndFor
  \State Convert the population's scores to utilities $u_q$ using the task's rule.
  \State Compute $\widehat g_\ell$ with Eq.~\eqref{eq:app-es-direction}
    and apply Adam ascent.
  \State If rarity is used, add eligible pre-update unperturbed samples to its archive.
  \State Follow the task's perturbation-scale and checkpoint schedules.
\EndWhile
\State \Return The central-network checkpoint prescribed by the experiment.
\end{algorithmic}
\end{algorithm}

Utilities are formed after the whole population is scored, so any ranks
compare candidates under the same conditions. Fixed training-derived
feature statistics remain unchanged throughout training. The QM9
matched-reward and ZINC250K active-time controls return the latest completed
checkpoint within each budget; other experiments use the selection rules
specified below. At deployment, each proposal uses one forward pass through
the returned network.

\subsection{Molecular Generation}
\label{app:method-molecular}

\noindent\textbf{Summary.}
We use QM9, ZINC250K, and GuacaMol to test whether a compact one-step
generator can produce many distinct valid molecules at low inference cost.
We measure validity, distinct yield, distributional fidelity, and estimated
neural work. The size sweep and budget controls examine how these
trade-offs change with molecular size and available training resources;
the QM9 reward control compares update recipes under the same hard reward.

\noindent\textbf{Takeaway.}
High validity alone can hide repeated outputs. We therefore assess
distinct valid yield together with fidelity and cost, and check novelty
separately against training molecules.

\paragraph{Generator and experimental settings.}
A ReLU MLP receives a 96-dimensional standard Gaussian latent vector and a one-hot
encoding of the requested number of heavy atoms, sampled from the relevant
training size distribution. Two output heads predict
atom states and bonds. An atom state specifies element, charge, and
hydrogen information; bonds have four states: absent, single, double, or
triple. Aromatic molecules use a Kekul\'e representation. Independent
Gumbel-max draws sample the discrete states. We predict each unordered
atom pair once, mirror its bond, and mask unused atoms. These operations
enforce size and symmetry; they do not guarantee chemical validity.
Output biases start from training atom and bond frequencies.

Table~\ref{tab:method-architectures} distinguishes the full-dataset models
from the controlled comparisons. Within each controlled experiment, the
training methods use the same generator architecture.

\begin{table}[tbp]
  \centering
  \caption{Molecular generator configurations. Canvas size is the maximum
  number of atom positions in the output. Hidden layers are shown as
  depth $\times$ width. The ZINC250K active-time control uses the full
  38-atom canvas with a smaller hidden width than the benchmark model.}
  \label{tab:method-architectures}
  \begin{tabular*}{\linewidth}{@{\extracolsep{\fill}}lcc@{}}
    \toprule
    Experiment & Canvas size & Hidden layers \\
    \midrule
    QM9 benchmark and matched-reward control & 9 & $3\times768$ \\
    ZINC250K, primary experiment & 38 & $3\times1024$ \\
    Full GuacaMol & 88 & $3\times768$ \\
    Controlled molecular size sweep & 32 & $3\times768$ \\
    ZINC250K active-time control & 38 & $3\times768$ \\
    \bottomrule
  \end{tabular*}
\end{table}

\paragraph{Molecular size and search difficulty.}
The canvas has $N$ available atom positions, of which $n\leq N$ are active
in a generated molecule. For these active positions, the generator makes
\begin{equation}
  D(n)=n+\binom{n}{2}=\frac{n(n+1)}{2}
  \label{eq:app-active-decisions}
\end{equation}
categorical decisions: one atom state per position and one bond state per
unordered pair. This gives 136, 300, and 528 decisions at 16, 24, and
32 atoms. With $A$ available atom states and four bond states, the space
before chemical constraints contains $A^n4^{\binom{n}{2}}$ labeled graph
assignments. These are graph encodings, not distinct valid molecules.

Chemical validity restricts this space sharply. A fixed upper bound on
atom valence permits only a number of present bonds proportional to $n$,
although the number of possible pairs grows quadratically. Connectivity
and local chemistry impose further constraints. Training must therefore
coordinate many choices to place probability on this much smaller valid
subset, while spreading that probability over different molecules.

The size sweep tests whether validity, Strict VUN, and feature matching
are maintained as larger molecules enter the task. It uses ZINC250K and
GuacaMol with cumulative caps of 16, 24, and 32 heavy atoms, keeping the
canvas fixed at $N=32$. The vocabulary, chemistry tables, size-encoding
width, and initialization also stay fixed. Equation~\eqref{eq:app-active-decisions}
counts active decisions; masked positions do not contribute. Larger caps
also change the molecular mixture and reference distribution, so changes
in performance cannot be attributed to decision count alone.

\paragraph{Distribution matching.}
We compute features directly from every sampled graph, including invalid
ones. Table~\ref{tab:method-molecular-features} lists the seven feature
families and what each contributes. Keeping invalid graphs in the batch
allows them to affect training instead of disappearing from the objective.

\begin{table}[tbp]
  \centering
  \caption{Molecular features and their role in the fitness. Counts and
  histograms are normalized by the active atom count. Local-neighborhood
  histograms use bond-aware Weisfeiler--Lehman (WL) label refinement.}
  \label{tab:method-molecular-features}
  \begin{tabular*}{\linewidth}{@{\extracolsep{\fill}}p{0.22\linewidth}p{0.70\linewidth}@{}}
    \toprule
    Feature family & Information matched to training molecules \\
    \midrule
    Atom states & Composition, including charge and hydrogen states. \\
    Valence & Distribution of bond-order sums at atoms, in bins 0--8. \\
    Bond types & Relative counts of single, double, and triple bonds. \\
    WL1 & Local atom neighborhoods after one refinement round; 64 bins. \\
    WL2 & Larger neighborhoods after two refinement rounds; 64 bins. \\
    Spectrum & Global connectivity, using 10 bins of normalized-Laplacian eigenvalues. \\
    Global counts & Bonds, connected components, and independent cycles per atom. \\
    \bottomrule
  \end{tabular*}
\end{table}

For feature family $r$, let $\phi_r(x)\in\mathbb R^{d_r}$ denote the
raw features of sample $x$, with $d_r$ coordinates. Write
$\widetilde\phi_r$ for the task-specific normalized feature map defined
below. For generated batch $X=(x_1,\ldots,x_B)$ and reference batch
$Y=(y_1,\ldots,y_{B_+})$, set
$f_i^{(r)}=\widetilde\phi_r(x_i)$ and
$h_j^{(r)}=\widetilde\phi_r(y_j)$.
Here $B,B_+\geq2$ are batch sizes; batches retain repeated samples.
We use the empirical Energy discrepancy
\begin{equation}
\begin{aligned}
  \widehat{\mathcal E}_r=
  {}&\frac{2}{BB_+}\sum_{i=1}^{B}\sum_{j=1}^{B_+}
  d\bigl(f_i^{(r)},h_j^{(r)}\bigr)\\
  &-\frac{1}{B(B-1)}\sum_{i\ne i'}d\bigl(f_i^{(r)},f_{i'}^{(r)}\bigr)
  -\frac{1}{B_+(B_+-1)}\sum_{j\ne j'}d\bigl(h_j^{(r)},h_{j'}^{(r)}\bigr),
\end{aligned}
  \label{eq:app-energy}
\end{equation}
where $d(a,b)=\sqrt{\lVert a-b\rVert_2^2+10^{-12}}$ is a smoothed
Euclidean distance between feature vectors. Within-batch sums run over
all ordered distinct pairs: $i,i'\in\{1,\ldots,B\}$ and
$j,j'\in\{1,\ldots,B_+\}$.
The first term brings generated features toward the reference distribution.
The second rewards spread among generated examples. The final term measures
spread within the reference batch and is constant while candidates are
compared. Thus, the objective already includes a diversity signal before
adding rarity.

The molecular benchmark, size sweep, and active-time comparison use
$\widetilde\phi_r(x)=\phi_r(x)/s_r$.

The scale uses an unperturbed batch $X^0=(x_1^0,\ldots,x_{B_0}^0)$,
with $B_0=192$, and $B_+=192$ references:
\begin{equation}
 s_r=\max\left\{
 \frac{\displaystyle\sum_{i=1}^{B_0}\sum_{j=1}^{B_+}
 \sqrt{\max\{\|\phi_r(x_i^0)-\phi_r(y_j)\|_2^2,10^{-12}\}}}
 {B_0B_+\sqrt{d_r}},\;10^{-8}\right\}.
 \label{eq:app-cross-scale}
\end{equation}
This positive scalar and the references remain fixed for all candidates.
Scaling prevents feature units alone from setting their relative influence.
These finite features summarize molecules; matching them does not guarantee
that the full molecular distributions match.

\paragraph{Structural penalties and rarity.}
Distribution matching alone does not ensure valid chemistry. We therefore
maximize the base fitness
\begin{equation}
  F_{\mathrm{mol}}=-\sum_{r=1}^{7}\widehat{\mathcal E}_r
  -2\overline C_{\mathrm{val}}
  -0.5\overline C_{\mathrm{conn}}
  -\overline C_{\mathrm{local}},
  \label{eq:app-molecular-fitness}
\end{equation}
Here $F_{\mathrm{mol}}$ is a scalar batch fitness to maximize, and
$\overline C=B^{-1}\sum_{i=1}^{B}C(x_i)$ averages a per-graph penalty.
For a graph $x$ with $n$ active atoms, let $a_j$ be atom $j$'s state
and $\nu_j$ its sum of incident bond orders. To compute its valence error
$\delta_j$, consider bins $v=0,\ldots,12$: use cost $|\nu_j-v|$ when
$v$ is allowed for $a_j$, and cost 13 otherwise, then take the minimum.
If $K(x)$ is the number of connected components and $L(x)$ the size of
the largest, the first two penalties are
\begin{equation}
 C_{\mathrm{val}}(x)=\frac1n\sum_{j=1}^{n}\delta_j,
 \qquad
 C_{\mathrm{conn}}(x)=\mathbf1\{K(x)>1\}+1-\frac{L(x)}n.
 \label{eq:app-molecular-penalties}
\end{equation}
The indicator $\mathbf1\{\cdot\}$ is one when its condition holds and
zero otherwise. The local-chemistry penalty $C_{\mathrm{local}}(x)$ adds
two quantities divided by $n$: each atom's nearest-pattern $\ell_1$
distance to a training pattern for the same atom state, summed over atoms, and
the number of bonds with unsupported endpoint-state/bond-order combinations.
The distance sums absolute differences in incident single/double/triple-bond counts.
Pattern lookup clips counts to $(6,3,2)$ and adds the clipped-off overflow; an atom state with no observed pattern receives lookup cost 13.
Patterns and supported bond combinations are obtained from training data.
The weights $(2,0.5,1)$ are fixed experimental settings.

When enabled, rarity adds
\begin{equation}
  R_t(X)=\frac{1}{|X|}\sum_{x\in X}
  \frac{g(x)}{\sqrt{1+\widehat v_t(x)}},
  \label{eq:app-rarity}
\end{equation}
Here $X$ is the candidate's generated batch, $|X|=B$ counts occurrences
including repeats, and $g(x)$ is a binary eligibility indicator: it is one
when the graph is connected, every $\delta_j=0$, and
$C_{\mathrm{local}}(x)\leq10^{-6}$, and zero otherwise. The approximate visit count
$\widehat v_t(x)$ is the smaller of two hash-bucket counts recorded before
update $t$. This reward favors
eligible graphs that have been generated less often. It is neither an
exact identity test nor a reward for novelty relative to the training set.
The archive stays fixed during candidate scoring. After the update, it receives eligible occurrences from the pre-update unperturbed batch.
Before every 1,000th update, its integer counts are halved, rounding down.

In these experiments, each candidate is scored on the same $B=96$ inputs.
For a population score vector $\zeta=(\zeta_1,\ldots,\zeta_P)$, define
its centered rank (CR) as
$\operatorname{CR}(\zeta)_q=\operatorname{rank}_{\rm avg}(\zeta_q)/(P-1)-1/2$,
where ranks are ascending, start at zero, and are averaged across ties.
Let $F_q=F_{\mathrm{mol}}(X_q,Y)$ and $R_q=R_t(X_q)$ be candidate
$q$'s base fitness and rarity score; $F$ and $R$ collect all $P$ scores.
Its utility is $u_q=\operatorname{CR}(F)_q+
\lambda_R\operatorname{CR}(R)_q$.
We use $\lambda_R=0.2$, or zero in the rarity
ablation. Ranking the terms separately controls their influence without
depending on their raw numerical ranges. Adam uses learning rate $10^{-2}$;
weight perturbations have scale $\max\{0.1\operatorname{RMS}(W),10^{-3}\}$
and bias perturbations have scale $0.01$.

\paragraph{STE and Soft comparisons.}
The size-sweep and active-time comparisons share the generator and initialization within
each seed. Soft trains on relaxed categorical outputs. STE keeps the
numerical hard forward loss but obtains derivatives through differentiable
feature and constraint approximations. Both use shared Gumbel noise and
relaxation temperature 1. Their proxy features describe composition,
valence, degree, element/bond combinations, and topology; they replace the
hard hashes and spectral features that cannot be differentiated directly.
Proxy coordinates use fixed training-derived scales, whereas \methodname in these
comparisons uses the dynamic family scales above.
Neither baseline includes hashed rarity. Both use Adam with learning rate
$3\times10^{-4}$ and gradient-norm clipping at 10.

\paragraph{Matched-reward comparison on QM9.}
The separate QM9 control uses five paired seeds (42--46), the same
generator, data split, and initialization, and the same hard reward
$R_B=F_{\mathrm{mol}}$ for \methodname and REINFORCE-LOO (RLOO).
Both methods use feature scales computed once from 8,192 training pairs:
the mean pairwise Euclidean distance in each family, divided by
$\sqrt{d_r}$ and floored at $10^{-8}$. These replace the dynamic $s_r$
in Eq.~\eqref{eq:app-cross-scale} and remain fixed.
Neither method uses rarity or rank shaping. This keeps the
reward construction identical while comparing the two update recipes.

RLOO uses 96 generated graphs and 192 training references per update.
It assigns sample $i$ the difference
\begin{equation}
  A_i=R_B(X_{1:B})-R_{B-1}(X_{-i}),\qquad
  \mathcal L_{\mathrm{RF}}=-\sum_{i=1}^{B}\operatorname{stopgrad}(A_i)
  \log p_\theta(X_i\mid z_i,n_i).
  \label{eq:app-reinforce-loo-loss}
\end{equation}
Here $X_{1:B}$ is the complete policy batch and $X_{-i}$ is the same batch
with graph $X_i$ removed. $R_{B-1}$ evaluates the reward using batch size
$B-1$, and $A_i$ is the resulting leave-one-out advantage.
$p_\theta(X_i\mid z_i,n_i)$ is the probability of the sampled atom and
bond decisions given latent vector $z_i$ and requested atom count $n_i$.
Its log sums only active decisions. The operator $\operatorname{stopgrad}$
keeps its argument's value but treats it as constant during differentiation;
$\mathcal L_{\mathrm{RF}}$ is minimized. The sum is intentional because
$A_i$ already includes the batch normalization.
Removing a graph changes its distances to the other generated graphs, so
the complete batch reward is recomputed with the correct smaller-batch
normalization. References and feature scales remain fixed during this
calculation. \methodname uses the same batch sizes per candidate and raw paired
reward differences across its 192 candidates. Both use Adam, with learning
rates $10^{-2}$ for \methodname and $3\times10^{-4}$ for RLOO.

At 11,520,000 hard-reward graph evaluations per method and seed, \methodname
completes 625 updates, and RLOO completes 120,000. We evaluate the current
parameters at the declared budget boundaries, without validation-based
checkpoint selection. This matches the number of generated graphs scored
by the reward, rather than elapsed time or total computation. The update
recipes still differ in optimizer settings and reference reuse.
Appendix~\ref{app:qm9-reinforce} gives the protocol and results.

\paragraph{Budgets and checkpoint selection.}
Table~\ref{tab:method-protocols} gives the budgets used for the molecular
benchmarks and controlled comparisons. The primary ZINC250K benchmark
retains its original checkpoint rule and its vocabulary and
allowed-valence table collected before splitting. These differ from the
controlled protocols; details are in Appendix~\ref{app:rq1}.

\begin{table}[tbp]
  \centering
  \caption{Budgets for the molecular comparisons. Each evaluated model produces
  10,000 raw proposals for the primary yield metrics. Three-seed
  experiments use seeds 42, 43, and 44; the QM9 reward control uses
  seeds 42--46. Its row reports the largest matched reward budget.}
  \label{tab:method-protocols}
  \begin{tabular*}{\linewidth}{@{\extracolsep{\fill}}lcl@{}}
    \toprule
    Experiment & Seeds & Reported training budget \\
    \midrule
    QM9 benchmark & 3 & 120,000 updates \\
    Full GuacaMol & 3 & 240,000 updates \\
    Molecular size sweep & 5 & 30,000 updates \\
    ZINC250K active-time control & 3 & 30 active minutes \\
    QM9 matched-reward control & 5 & 11.52M graph evaluations \\
    \bottomrule
  \end{tabular*}
\end{table}

The QM9 and full-GuacaMol benchmark models select checkpoints using 2,000
fixed-noise validation proposals. The score is strict valid-and-unique
yield divided by $1+8M_w$, where
$M_w=(2\max\{M_{\mathrm{atom}},0\}+\max\{M_{\mathrm{val}},0\}
+\max\{M_{\mathrm{spec}},0\})/4$.
Here $M_{\mathrm{atom}}$, $M_{\mathrm{val}}$, and $M_{\mathrm{spec}}$
are estimates of squared maximum mean discrepancy (MMD$^2$) for atom-state,
valence, and spectral features;
$M_w$ is their weighted, zero-clipped average. The yield in the selection
score is a fraction between zero and one.
This balances usable output yield with distribution matching.
The size sweep instead prioritizes strict validity, then strict
valid-and-unique yield, then lower $M_w$, using 10,000 fixed-noise proposals
and a training reference panel. The ZINC250K active-time control uses
a separate full-ZINC cache with a training-derived vocabulary and size
prior, shared by all four methods. Methods run sequentially on the same
allocated GPU, with their order rotated across seeds. Each reported point
uses the latest completed update within its budget. Active time includes synchronized
batch preparation, transfers, and optimizer/archive updates; it excludes
setup, compilation, evaluation, and checkpoint I/O. These selection rules
do not use training-set novelty or final test results.

\paragraph{Evaluation.}
Full validity requires RDKit sanitization of the complete generated graph;
strict validity also requires connectedness. Full VU counts distinct
full-valid molecules, while Strict VUN counts distinct, connected, valid
molecules absent from the declared training reference. Both yield counts are divided by all attempted proposals, including invalid outputs and duplicates.
Conditional uniqueness instead divides by valid proposals. Novelty uses
full source-training identities for the benchmark models and QM9 reward control, and
prepared training identities for the size-sweep and time controls.
Identities use canonical non-isomeric SMILES. Disconnected molecules are not replaced by
their largest component in these full-graph metrics.

Distributional evaluation uses feature MMD$^2$ and Fr\'echet ChemNet
Distance (FCD), with lower values preferred. MMD$^2$ excludes diagonal
within-sample terms and can be negative at finite sample sizes. Its
reference-derived bandwidth is fixed within each comparison. The size-sweep
metric averages atom, valence, and spectral MMD$^2$ equally, whereas
$M_w$ above gives atom features twice the weight.
FCD retains repeated valid molecules. The QM9 benchmark, QM9 reward
control, and full GuacaMol use a separate
30,000-attempt bank to obtain up to 10,000 valid molecules for FCD; the
historical ZINC evaluation uses valid outputs from its 10,000-attempt bank.
These extra FCD samples do not change yield denominators. External datasets,
splits, and FCD score conversions are documented in Appendix~\ref{app:rq1}.

\paragraph{External comparisons and computational cost.}
External molecular baselines retain their source training and evaluation
protocols. We estimate dense inference work by counting two operations
per dense multiply-accumulate across all neural evaluations needed for
one proposal. With VU yield $Y_{\mathrm{VU}}$ expressed as a fraction and
estimated operation count per proposal $C_{\mathrm{dense}}$, the reported efficiency is
$E_{\mathrm{VU}}=10^{12}Y_{\mathrm{VU}}/C_{\mathrm{dense}}$.
It measures distinct valid output per estimated neural work; it excludes
feature extraction, graph processing, and chemistry, and is not throughput.
The GuacaMol CFM pilot is a three-epoch feasibility checkpoint, excluded
from ratio and ranking claims. Its projected full-training time is
separate from measured training times. Appendix~\ref{app:external-guacamol} documents that pilot and the external training-time comparison.

\subsection{Scaffold Completion}
\label{app:method-scaffold}

\noindent\textbf{Summary.}
We test whether one-step generation can turn a supplied molecular core
into a useful library for downstream screening. At equal proposal budgets,
we compare with ZINC-trained MoLeR on distinct filter-passing completions,
coverage across cores, and generation-to-SMILES time. A separate
matched-time experiment measures how many candidates can be generated and
screened within a short deadline.

\noindent\textbf{Takeaway.}
A generator can return more passing molecules while serving fewer
scaffolds. Candidate count, query coverage, and generation time therefore
measure different aspects of the resulting library.

\paragraph{Conditional generator and decoding.}
The model has a 38-atom canvas, three 768-unit global layers, and two
relational blocks with 128-dimensional node states. It receives the core,
the requested total size, and global and per-node Gaussian noise. Core
features describe atom states, observed bonds, and available valence, giving
the network information about where additions may fit.
The decoder copies all core atoms and core-to-core pair states. Each added
atom chooses an earlier atom as a parent, with a bond of at least order one.
For a connected core, these parent edges connect every addition to the core;
optional bonds allow cycles. All scores come from one network evaluation.
The decoder does not repair invalid valences.

\paragraph{Conditional distribution fitness.}
The scaffold model uses Drift targets. Its nine raw feature families
$\phi_r$, indexed by $r=1,\ldots,9$, include the seven in
Table~\ref{tab:method-molecular-features}, plus an atom-state/valence
histogram with $13A$ coordinates and a 256-bin hashed histogram of
bonded atom-state pairs and bond orders. Here $A$ is the scaffold atom-state
vocabulary size; valences are clipped to bins 0--12. Both added histograms
are normalized by active atom count. They help distinguish chemically
different uses of the same elements.
For completed graph $x$ and core graph $c$, each joint family is
$\psi_r(x,c)=[\phi_r(x),\lambda_s\phi_r(c)]\in\mathbb R^{D_r}$,
where brackets denote concatenation, $D_r=2d_r$, and $\lambda_s$ weights
the core features relative to the completion features.
Including the core makes the comparison sensitive to which input a
completion belongs to.

\paragraph{Drift targets.}
At each update, the unperturbed and reference batches contain
$B_0=B_+=384$ completion/core pairs. We compute $s_r$ using
Eq.~\eqref{eq:app-cross-scale}, replacing $\phi_r$ by $\psi_r$ and
$d_r$ by $D_r$. For one family, let $G$ and $H$ be the normalized
unperturbed and reference feature matrices: their rows are joint features
divided by $s_r$. They have shapes $B_0\times D_r$ and $B_+\times D_r$.
The family index is omitted in the following construction.

For matrices $U,V$ with the same feature width, define the pairwise
distance matrix by
$\mathsf d(U,V)_{ij}=\sqrt{\max\{\|U_i-V_j\|_2^2,10^{-12}\}}$,
where $U_i,V_j$ are rows. For each temperature
$\tau\in\sqrt{D_r}\{0.02,0.05,0.2\}$, form
\begin{equation}
\begin{aligned}
 L_\tau&=\left[-\mathsf d(G,H)/\tau,\;
              -(\mathsf d(G,G)+10^9I_{B_0})/\tau\right],\\
 Q_\tau&=\sqrt{\operatorname{softmax}_{\rm row}(L_\tau)
              \odot\operatorname{softmax}_{\rm col}(L_\tau)}
          =[Q_\tau^+,Q_\tau^-].
\end{aligned}
\label{eq:app-scaffold-affinities}
\end{equation}
Here $I_{B_0}$ is the identity matrix, so the large diagonal term removes
self-interactions. Row softmax normalizes across columns; column softmax
normalizes across rows. The product $\odot$ and square root act entrywise.
The first $B_+$ columns, $Q_\tau^+$, weight reference examples; the
remaining $B_0$ columns, $Q_\tau^-$, weight unperturbed examples.

Let $m_{\tau,i}^\pm=\sum_jQ_{\tau,ij}^\pm$ denote their row sums.
The attraction--repulsion field and target matrix are
\begin{equation}
\begin{aligned}
 V_{\tau,i}&=m_{\tau,i}^-\sum_{j=1}^{B_+}Q_{\tau,ij}^+H_j
           -m_{\tau,i}^+\sum_{j=1}^{B_0}Q_{\tau,ij}^-G_j,\\
 T_r&=G+\sum_\tau
       \frac{V_\tau}{\max\{\operatorname{RMS}(V_\tau),10^{-8}\}}.
\end{aligned}
\label{eq:app-scaffold-target}
\end{equation}
$V_\tau$ has $B_0$ rows of width $D_r$; RMS averages over all its
entries. The three normalized fields are summed. Targets and scales stay
fixed while candidates are scored. Candidate $q$ receives
\begin{equation}
  O_q=-\frac{1}{B}\sum_{i=1}^{B}\sum_{r=1}^{9}\frac{1}{D_r}
  \left\|\frac{\psi_r(x_{qi},c_i)}{s_r}-T_{r,i}\right\|_2^2,
  \qquad B=96,
  \label{eq:app-scaffold-drift-fitness}
\end{equation}
where $x_{qi}$ is its completion of core $c_i$, and $T_{r,i}$ is row $i$
of the target matrix. Only the first $B=96$ shared conditions are used.
$O_q$ is a scalar score to maximize: it rewards movement toward the
fixed targets, with a dimension correction for each family.

\paragraph{Chemical constraints and diversity.}
For each completed graph, use the per-atom valence errors $\delta_j$
and the local penalty $C_{\mathrm{local}}$ defined above. The scaffold
risks are
\begin{equation}
\begin{aligned}
 C_{\mathrm{val}}^{\mathrm{scaf}}
 &=\frac1n\sum_{j=1}^{n}\delta_j
   +\alpha_{\max}\max_j\delta_j
   +\alpha_{\mathrm{fail}}\mathbf1\{\max_j\delta_j>10^{-6}\},\\
 C_{\mathrm{local}}^{\mathrm{scaf}}
 &=C_{\mathrm{local}}+\alpha_{\mathrm{local}}
      \mathbf1\{C_{\mathrm{local}}>10^{-6}\}.
\end{aligned}
\label{eq:app-scaffold-risks}
\end{equation}
The nonnegative coefficients $\alpha_{\max}$, $\alpha_{\mathrm{fail}}$,
and $\alpha_{\mathrm{local}}$ weight worst-atom error, any valence
failure, and any local-chemistry failure, respectively. These terms stop
one failing part of a molecule from being hidden by an average.
With $n_{\mathrm{add}}$ added atoms and $n_{\mathrm{det}}$ of them
having no path to the core, attachment risk is
$C_{\mathrm{att}}=\mathbf1\{n_{\mathrm{det}}>0\}
+n_{\mathrm{det}}/\max\{n_{\mathrm{add}},1\}$.
The candidate penalty is
\begin{equation}
  \Pi_q=\overline C_{\mathrm{val},q}^{\mathrm{scaf}}
  +1.25\overline C_{\mathrm{att},q}
  +0.25\overline C_{\mathrm{local},q}^{\mathrm{scaf}}.
  \label{eq:app-scaffold-penalty}
\end{equation}
Here each bar averages over candidate $q$'s $B=96$ completions.
Attachment risk is zero for connected supported cores under the forest decoder.

Conditional unique yield $U_q$ sums the number of distinct eligible graph
signatures in each repeated core/size group, then divides by the 64
conditional attempts, including ineligible ones. The same signature in
two groups counts once per group. Rarity $R_q=R_t(X_q)$ uses all 96 candidate
proposals and Eq.~\eqref{eq:app-rarity}, with the scaffold gate
$g_{\mathrm{scaf}}(x)=\mathbf1\{\delta_j=0\text{ for all active }j\}
\mathbf1\{K(x)=1\}$ in place of $g$.
Thus eligibility requires allowed valences and connectedness.
Conditional uniqueness discourages repeated answers to
one query, while rarity discourages repeated graph signatures across
updates. Let $\bar g_t^0=B_0^{-1}\sum_{i=1}^{B_0}g_{\mathrm{scaf}}(x_i^0)$
be the eligible fraction in the 384 unperturbed completions. We use
$\gamma_t=\operatorname{clip}((\bar g_t^0-a)/(b-a),0,1)$,
where $0\leq a<b\leq1$ are the eligibility thresholds at which diversity
pressure starts and reaches full weight. The clip operation restricts its
argument to $[0,1]$.

We convert each score into a centered rank (CR) across the $P=192$
candidates. For a score vector $\zeta=(\zeta_1,\ldots,\zeta_P)$,
\begin{equation}
  \operatorname{CR}(\zeta)_q
  =\frac{\operatorname{rank}_{\mathrm{avg}}(\zeta_q)}{P-1}
  -\frac{1}{2},
  \label{eq:app-scaffold-centered-rank}
\end{equation}
where ranks run from zero for the lowest score to $P-1$ for the highest,
and tied scores receive their average rank. Thus, CR maps scores to
$[-0.5,0.5]$ and removes differences in their numerical scales.
We rank $-\Pi$ so that lower penalties receive higher utility.
The utility combines these separately ranked components:
\begin{equation}
\begin{aligned}
  u_q={}&w_O\operatorname{CR}(O)_q
  +w_\Pi\operatorname{CR}(-\Pi)_q
  +w_v\operatorname{CR}(v)_q+w_c\operatorname{CR}(c)_q\\
  &+\gamma_t\left[w_U\operatorname{CR}(U)_q
  +0.2\operatorname{CR}(R)_q\right],
\end{aligned}
  \label{eq:app-scaffold-shaped-utility}
\end{equation}
Here $v_q$ and $c_q$ are the fractions of candidate $q$'s 96 completions
with allowed valences and connected graphs, respectively.
The vectors $O,\Pi,v,c,U,R$ collect the corresponding scores over all
$P$ candidates. Nonnegative weights $w_O,w_\Pi,w_v,w_c,w_U$ multiply
the ranks of Drift fit, negative penalty, valence success, connectivity
success, and conditional unique yield, in that order.
Separate ranks retain a role for each criterion despite their different
numerical ranges. The logged value $O_q-\Pi_q$ is only a diagnostic.
Table~\ref{tab:method-scaffold-settings} defines the reference values of
these coefficients.

\begin{table}[tbp]
  \centering
  \caption{Scaffold coefficients}
  \label{tab:method-scaffold-settings}
  \begin{tabular*}{\linewidth}{@{\extracolsep{\fill}}ll@{}}
    \toprule
    Coefficient & Value used in the experiments \\
    \midrule
    Core feature weight $\lambda_s$ & $1$ \\
    Risk weights $(\alpha_{\max},\alpha_{\mathrm{fail}},\alpha_{\mathrm{local}})$
      & $(1,0.75,0.5)$ \\
    Diversity thresholds $(a,b)$ & $(0.35,0.75)$ \\
    Rank weights $(w_O,w_\Pi,w_v,w_c,w_U)$ & $(1,0.5,0.25,0.75,0.2)$ \\
    \bottomrule
  \end{tabular*}
\end{table}

\paragraph{Training and evaluation.}
The unperturbed batch has 384 examples. Each candidate uses the same first
96 conditions, including 64 examples in repeated core/size groups.
Core identities are separated across training and held-out partitions.
At inference, we sample the number of added atoms from training molecules
with the same scaffold size, falling back to pooled training counts when
that size is unseen. Only positive additions that fit the supported sizes
and 38-atom limit are retained, and the distribution is renormalized.
The public input is therefore only the scaffold, without the held-out
target molecule or its total size.

Evaluation uses 100 held-out queries and 1,000 attempts per query.
A usable completion must be chemically valid, connected, and contain the
supplied core as a substructure. We deduplicate canonical non-isomeric SMILES within each query;
failed attempts remain in the yield denominator. Screening then requires
QED (quantitative estimate of drug-likeness) $\geq0.7$, SA
(synthetic accessibility score) $\leq4$, and no new PAINS
(pan-assay interference compounds) alert types relative to the core.
These criteria measure the resulting candidate library and never
enter the training fitness.
Appendix~\ref{app:scaffold-protocol} gives the size prior, checkpoint
selection, and available training budgets; Appendix~\ref{app:scaffold}
reports the screening protocol and results. The comparison uses three
\methodname training seeds (42--44) and three sampling seeds (41--43) of one
ZINC-trained MoLeR checkpoint, selected from training seed 42.
These represent different sources of variation. The library-generation
timer includes conversion to SMILES and excludes subsequent screening.

\paragraph{Matched-time example.}
The example in Figure~\ref{fig:scaffold_example} shows the highest-QED
filter-passing candidate from each method. Its deadline is the time MoLeR
needs to generate and screen one batch of 256 completions, with both
models already loaded and warmed up. \methodname generates batches of 256 graphs
and scores them directly; only candidates screened before the deadline
count. The selected \methodname molecule is converted to SMILES afterwards. This timer includes screening,
so it measures a different operation from generation to SMILES.

\subsection{Neural Architecture Generation}
\label{app:method-nas}

\noindent\textbf{Summary.}
NAS-Bench-101 tests whether the approach is useful beyond chemistry.
We train on high-performing architectures and compare \methodname, STE, and Soft
by the number of distinct held-out elite architectures obtained within
10,000 proposals. Validity, mean benchmark accuracy, and feature MMD
measure the trade-offs between a larger candidate library, feasibility,
and architecture quality.

\noindent\textbf{Takeaway.}
Producing more distinct elite architectures can be useful even when mean
accuracy is lower. We report both to distinguish the number of qualifying
alternatives from their average quality.

\paragraph{Data and generator.}
Canonical benchmark architectures are split 80/10/10 with seed 2026.
The elite threshold is the 90th percentile of validation accuracy among
training architectures and is then fixed for the held-out partitions.
A three-layer, 512-unit SiLU MLP maps 128-dimensional Gaussian noise to
21 edge logits and three operation logits for each of five internal
vertices. Input and output vertices are fixed. Logits are clipped to
$[-10,10]$; edges use Bernoulli sampling, and operations use Gumbel-max.
Output biases use training-elite frequencies.

Vertices outside input-to-output paths are pruned. A proposal is invalid
if it has no such path or has more than nine retained edges. The benchmark's
canonical hash identifies architectures after pruning, avoiding duplicate
counts caused by equivalent graph encodings.

\paragraph{Fitness and its purpose.}
Three feature families describe each pruned graph: its edges and operations
(36 coordinates), in/out degrees (14), and size, reachability, and path
counts (9). Together they capture local choices and how those choices
connect input to output. For coordinate $k\in\{1,\ldots,d_r\}$ of
raw family $\phi_r$, the normalized feature is
\begin{equation}
 \widetilde\phi_{r,k}(x)=
 \frac{\phi_{r,k}(x)-\mu_{r,k}^{\mathrm{train}}}
 {\max\{s_{r,k}^{\mathrm{train}},0.05\}},
 \label{eq:app-fixed-normalization}
\end{equation}
where $\mu_{r,k}^{\mathrm{train}}$ and $s_{r,k}^{\mathrm{train}}$ are
the mean and population standard deviation of that coordinate over
training elites. The statistics are fixed and shared by all methods and
by generated and reference examples.
Using the Energy discrepancy in Eq.~\eqref{eq:app-energy}, the fitness is
\begin{equation}
  F_{\mathrm{NAS}}=-\frac13\sum_{r=1}^{3}\widehat{\mathcal E}_r
  -\frac{0.5}{B}\sum_{i=1}^{B}
  \left[1-h_i+\frac{\max\{0,e_i-9\}}9\right],
  \label{eq:app-nas-fitness}
\end{equation}
Here $B=64$, $B_+=192$ in each Energy term, $h_i\in\{0,1\}$ is one
exactly when proposal $i$ has an input-to-output path, and $e_i$ counts its
retained edges. The fitness is maximized.
The Energy term matches elite graph structure; the penalty addresses
the benchmark's feasibility rules directly. Invalid proposals remain in
the batch. We use neither a rarity archive nor an explicit duplicate
penalty, so diversity pressure comes from Energy's generated-to-generated
term.

\paragraph{Training, selection, and evaluation.}
All methods use 64 generated examples and 192 training-elite references
per objective evaluation. If $F_q$ is candidate $q$'s NAS batch fitness,
\methodname uses
\begin{equation}
 u_q=F_q-\frac1P\sum_{p=1}^{P}F_p,\qquad P=192.
 \label{eq:app-centered-fitness}
\end{equation}
Centering cancels in each paired difference, leaving raw fitness differences
without ranking. Adam's learning rate is $3\times10^{-3}$.
For a weight block $W$, its perturbation scale is
$\omega_t\max\{0.08\operatorname{RMS}(W),0.001\}$; bias scales are
$0.01\omega_t$. Both output-layer scales receive an additional factor
$0.5$. The global multiplier starts at $\omega_0=1$.
At update 1 and every 100th update, we measure the fractions of edge and
operation decisions changed by perturbations relative to the unperturbed
network under shared noise. If the larger fraction exceeds $0.14$, the
multiplier is multiplied by $0.9$, with a lower bound of $0.2$; below
$0.04$, it is multiplied by $1.1$, with an upper bound of $2$.
Otherwise, it is unchanged.
STE and Soft use temperature $0.75$ and an Adam learning rate of
$3\times10^{-4}$. STE preserves the hard forward loss while differentiating
through relaxed features and pruning; Soft evaluates the relaxed loss.
The architecture, initialization, sampling streams, and hard evaluation
sampler are shared within each seed.

We train for 15,000 updates with seeds 42, 43, and 44. Every 500 updates,
2,000 validation proposals are scored by
$S_{\mathrm{NAS}}=0.4V+0.2U+0.4/(1+\max\{\mathcal M_{\mathrm{NAS}},0\})$.
The score $S_{\mathrm{NAS}}$ is maximized. $V$ is the valid fraction of
all proposals; $U$ is the number of distinct valid architectures divided
by the valid count, or zero if none are valid.
$\mathcal M_{\mathrm{NAS}}$ is feature MMD$^2$ between up to 1,500
valid proposals and 1,500 validation elites, using raw concatenated
features. If MMD cannot be computed, including with fewer than two valid
proposals, its entire fidelity term contributes zero.
This selects for feasibility, variety, and similarity
to held-out elite structure.

Final evaluation uses 10,000 attempts. Held-out elite yield counts distinct
generated test-partition elites divided by all attempts. Mean benchmark
test accuracy instead includes repeated valid architectures.
Evaluation MMD$^2$ uses the raw concatenated graph features and a bandwidth
fixed from training elites; these differ from the standardized training
features. Appendix~\ref{app:nas} gives the remaining benchmark diagnostics.

\subsection{Binarized MNIST}
\label{app:method-mnist}

\noindent\textbf{Summary.}
Binarized MNIST tests the training recipes in a setting without graph
feasibility constraints: every sampled output is already a binary array.
We compare \methodname, STE, and Soft using image-distribution distances,
classifier-predicted digit coverage, and exact uniqueness. The aim is to
assess image quality and diversity when training relies only on
distribution matching, without labels or classifier feedback.

\noindent\textbf{Takeaway.}
Valid binary arrays need not be recognizable or varied digits. Exact
pixel uniqueness must therefore be read alongside image quality and
digit coverage.

\paragraph{Data and generator.}
Images are binarized at a pixel-intensity threshold of 128 on the 0--255
scale. We hold out 500 examples per digit from the official training set using
seed 2026, leaving 55,000 training and 5,000 validation images. The official
10,000-image test set is used after checkpoint selection.
A three-layer, 768-unit ReLU MLP maps 96-dimensional Gaussian noise to
784 logits. Independent Bernoulli draws produce all pixels of a
$28\times28$ image in parallel. Output biases start from training pixel
frequencies, and logits are not clipped.

\paragraph{Fitness and its purpose.}
Table~\ref{tab:method-mnist-features} lists the six feature families.
They describe image structure at several resolutions, so the objective
can compare both individual pixels and larger shapes. We apply
Eq.~\eqref{eq:app-fixed-normalization} to image features, now estimating
$\mu_{r,k}^{\mathrm{train}}$ and $s_{r,k}^{\mathrm{train}}$ over all
55,000 model-training images. These statistics remain fixed.

\begin{table}[tbp]
  \centering
  \caption{MNIST features used to build the fitness. All are computed
  directly from generated images, without digit labels.}
  \label{tab:method-mnist-features}
  \begin{tabular*}{\linewidth}{@{\extracolsep{\fill}}lrp{0.49\linewidth}@{}}
    \toprule
    Feature family & Dimension & Purpose \\
    \midrule
    Pixels & 784 & Fine image detail. \\
    $2\times2$ pooled means & 196 & Local stroke patterns. \\
    $4\times4$ pooled means & 49 & Coarse digit shape. \\
    Row/column means & 56 & Horizontal and vertical ink placement. \\
    Neighbor differences & 56 & Stroke transitions along rows and columns. \\
    Global statistics & 8 & Ink amount, position, spread, and mirror asymmetry. \\
    \bottomrule
  \end{tabular*}
\end{table}

The fitness is
\begin{equation}
  F_{\mathrm{MNIST}}=-\frac16\sum_{r=1}^{6}
  \frac{\widehat{\mathcal E}_r}{\sqrt{d_r}},
  \qquad(d_1,\ldots,d_6)=(784,196,49,56,56,8).
  \label{eq:app-mnist-fitness}
\end{equation}
Here $d_r$ is the number of coordinates in family $r$, and each Energy
term uses $B=96$ generated images and $B_+=192$ references. The fitness
is maximized. The dimension factor limits the influence of larger feature families
solely due to their coordinate count. There are no additional constraints,
rarity rewards, or duplicate penalties. As in NAS, the Energy term itself
rewards spread among generated examples.

\paragraph{Training and checkpoint selection.}
Each objective evaluation uses 96 generated images and 192 training
references. \methodname uses Eq.~\eqref{eq:app-centered-fitness} with
$F_q=F_{\mathrm{MNIST},q}$ and Adam learning rate $10^{-2}$.
Its weight perturbation scale is
$\max\{0.1\operatorname{RMS}(W),0.001\}$ and its bias scale is $0.01$.
STE and Soft use temperature $0.75$ and an Adam learning rate of
$3\times10^{-4}$. They share the generator, initialization, noise streams,
and fixed feature normalizers with \methodname. STE preserves the hard forward
loss; Soft trains on relaxed pixels. All methods sample binary images
at evaluation.

We train for 30,000 updates with seeds 42, 43, and 44. Every 500 updates,
2,000 validation proposals are scored by
$S_{\mathrm{MNIST}}=U_{\mathrm{exact}}e^{-\Delta/2}$, which is maximized.
Here $U_{\mathrm{exact}}$ is the number of distinct complete binary images
divided by 2,000. For each raw feature coordinate $(r,k)$, let
$\mu^g_{r,k},v^g_{r,k}$ be its mean and population variance over these
generated images, and $\mu^v_{r,k},v^v_{r,k}$ the corresponding statistics
over all 5,000 validation images. Superscripts $g$ and $v$ denote generated
and validation data. We compute
\begin{equation}
 \Delta=\frac16\sum_{r=1}^{6}\frac1{d_r}\sum_{k=1}^{d_r}
 \frac{(\mu^g_{r,k}-\mu^v_{r,k})^2+
 \tfrac14\left(\sqrt{v^g_{r,k}+10^{-6}}-
               \sqrt{v^v_{r,k}+10^{-6}}\right)^2}
 {v^v_{r,k}+10^{-3}}.
 \label{eq:app-mnist-selection-distance}
\end{equation}
The $1/4$ coefficient reduces the weight of standard-deviation mismatch;
the small constants stabilize square roots and denominators. This selector
uses raw feature statistics and its own normalization, distinct from the
training normalization. It combines variety with feature agreement and
uses neither digit labels nor a classifier.

\paragraph{Evaluation.}
A frozen LeNet-style classifier, trained only on the 55,000 training
images and their labels, supplies an independent evaluation of digit
structure. We report Fr\'echet distance in its 128-dimensional embedding,
kernel inception distance (KID), here also computed on LeNet embeddings,
and MMD$^2$ with a radial basis function (RBF) kernel, together with predicted digit frequencies and
confidence. Distances use 2,000 generated and 2,000 test images;
confidence, class counts, and exact identities use all 10,000 generated
images. A digit is covered if at least 50 images receive that argmax
prediction, without confidence filtering. Confidence is reported
separately using a maximum class probability threshold of $0.8$.
Exact novelty compares complete binary images with the 55,000 training
images. Appendix~\ref{app:mnist} gives the evaluator and distance settings.

\paragraph{Reporting across experiments.}
Unless stated otherwise, $\pm$ denotes the sample standard deviation
across runs. Rates are computed separately for each run before averaging.
Timing reports identify which stages are included, since neural sampling,
graph processing, chemical conversion, and screening have different costs.
Estimated dense-operation counts measure only the stated neural work; their ratios do not measure runtime speedups.

\section{RQ1: Molecular Generation}
\label{app:rq1}

This section gives the evidence behind the molecular results in the main text.
We first define the shared evaluation protocol, then present QM9, ZINC250K, and GuacaMol separately.
Each dataset section includes the relevant external comparison and controlled experiments.
The network size and training budget ablations appear in Sections~\ref{app:qm9-sweep} and~\ref{app:zinc-sweep}.
They vary the depth and width of \methodname, while the molecular size sweep varies the data mixture with a fixed network architecture.
We conclude with the inference cost calculations and training time audit used in the main figures.
Throughout this section, \methodname denotes our method, and \methodnamez denotes the version without the history-dependent rarity reward.

\FloatBarrier
\subsection{Shared Evaluation Protocol}
\label{app:external-molecular}

\paragraph{Metrics.}
Full validity requires chemical sanitization of the complete generated molecule.
Full valid-and-unique yield (VU) is the number of distinct valid molecular identities divided by all raw generation attempts.
VUN additionally excludes molecules present in the training reference.
Strict validity requires both chemical validity and connectivity, and Strict VUN applies both requirements before counting distinct molecules absent from training.
These yields differ from uniqueness conditioned on validity.
Invalid and repeated proposals remain in the denominator of every yield metric.
\methodname uses complete molecular identities, including disconnected molecules when reporting full validity and full VU.

\paragraph{External comparisons.}
Tables~\ref{tab:qm9-generators}, \ref{tab:zinc-generators}, and~\ref{tab:guacamol-generators} provide the complete external comparisons supporting the molecular results in the main text.
They include the available sampling configurations and the source metrics needed to interpret these comparisons.
Published values retain the original preprocessing, validity checks, identity rules, correction procedures, and sampling budgets.
An approximate VU value reconstructed from published validity and conditional uniqueness is a source-specific estimate, not a measurement from a common proposal bank.
We report FCD as a distance, with lower values preferred.
For GuacaMol references that report the benchmark FCD score, we give its approximate distance equivalent and state the conversion.
Unless stated otherwise, $\pm$ denotes the sample standard deviation across training seeds.

\paragraph{Inference efficiency.}
$E_{\mathrm{VU}}$ measures VU per $10^{12}$ estimated dense operations at the evaluated proposal budget.
The estimate includes every network evaluation required for generation, but excludes other operations and molecular processing.
It is distinct from measured VU per second.
Section~\ref{app:external-accounting} gives the formula, counting convention, architectural assumptions, and limits of this comparison.

\FloatBarrier
\subsubsection{Design of the Controlled Molecular Size Sweep}
\label{app:ksweep}

\paragraph{Why this experiment.}
Larger molecular graphs require more categorical choices and can make valid, diverse generation harder.
We therefore compare \methodname, \methodnamez, STE, and Soft across cumulative heavy-atom caps $k\in\{16,24,32\}$ on ZINC250K and GuacaMol.
The experiment asks whether the yield advantage persists across molecular size mixtures and whether it depends on the rarity reward.
The full results are reported in Sections~\ref{app:zinc-size-sweep} and~\ref{app:guacamol-size-sweep}.

\paragraph{Shared setup.}
Each task contains molecules with $n\leq k$, and generation is conditioned on a size sampled from the corresponding training mixture.
All settings use a fixed 32-node canvas, a shared atom-state vocabulary derived from training, 96 latent dimensions, and three hidden layers of width 768.
The conditioning width and output heads remain fixed across caps.
Each method receives 30,000 updates and produces 10,000 final proposals, with five training seeds per setting.
The checkpoint-selection bank and final evaluation bank are separate.
Selection uses training-reference features; final feature evaluation uses test references.
Checkpoints are selected first by strict validity, then by strict VU without a novelty requirement, and finally by a weighted feature MMD criterion.
The last criterion weights the nonnegative atom, valence, and spectral MMD terms by $(2,1,1)/4$.

For $n$ active nodes, the generator samples $n$ atom states and $\binom{n}{2}$ bond states.
The maximum number of active categorical choices is therefore
\begin{equation}
 D(k)=k+\binom{k}{2}=\frac{k(k+1)}{2},\qquad
 D(16)=136,\quad D(24)=300,\quad D(32)=528.
\end{equation}
These are upper bounds on active choices, not changes in network size.
Increasing the cap also changes the data distribution, so the experiment does not isolate the effect of decision count alone.

\paragraph{Training comparisons.}
\methodname combines the hard Energy objective and structural penalties with a hashed rarity reward.
\methodnamez and STE evaluate the same numerical hard forward criterion.
STE uses a soft feature Jacobian during backpropagation, while \methodnamez still uses population rank shaping.
Soft optimizes the corresponding relaxed objective.
Thus, the comparison with STE matches the forward criterion, and the comparison between the two \methodname variants measures the effect of rarity.
Update counts are equal, but training work is not: an \methodname update evaluates 192 perturbed networks on 96 graphs each, whereas a gradient update uses 96 graphs.
The separate experiment in Section~\ref{app:zinc-time-budget} controls active training time.

\paragraph{Feature discrepancy and uncertainty.}
$\mathrm{MMD}_{\rm cell}$ is the arithmetic mean of unbiased RBF MMD$^2$ estimates for atom, valence, and spectral features on raw generated graphs, using test references and up to 1,500 samples per side.
It is compared only within a fixed dataset and cap, because reference distributions and kernel calibration can differ.
Connectivity is measured on all proposals, including chemically invalid graphs.
Paired differences use the five common seeds and unadjusted 95\% Student-$t$ intervals,
$\overline{\Delta}\pm t_{0.975,4}s_{\Delta}/\sqrt{5}$.
The reported intervals use the rounded critical value $t_{0.975,4}\approx2.776$.
These intervals are distinct from the sample standard deviations in the result tables and are not adjusted for multiple comparisons.

Figure~\ref{fig:ksweep-strict-vun} summarizes all six size settings.
Both \methodname variants have higher mean Strict VUN than STE and Soft in every setting.
The detailed tables below show the validity, connectivity, feature discrepancy, and paired uncertainty behind this pattern.

\begin{figure}[htbp]
 \centering
 \includegraphics[width=\linewidth]{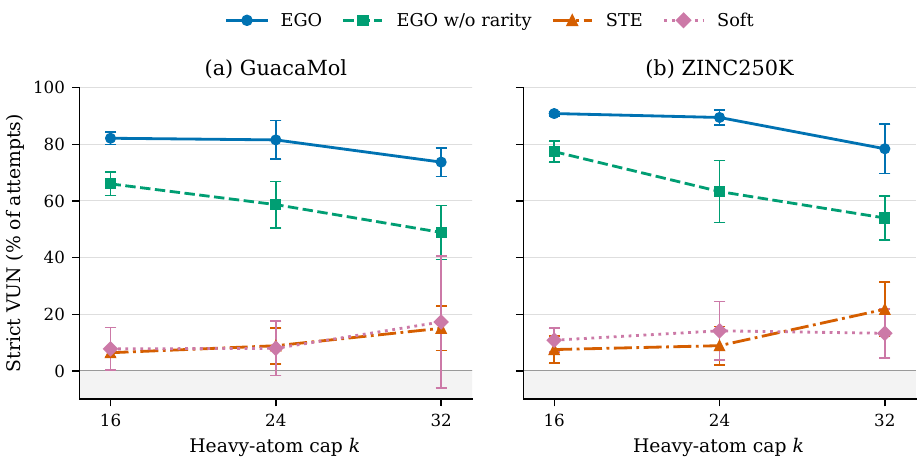}
 \caption{Strict VUN across cumulative molecular size caps. Points and bars show mean $\pm$ sample standard deviation across five seeds, with 30,000 updates and 10,000 final proposals per run. \methodname includes rarity; \methodnamez removes it. Increasing the cap changes the molecular size mixture while the output canvas remains fixed. Error bars are not truncated at zero.}
 \label{fig:ksweep-strict-vun}
\end{figure}

\FloatBarrier
\paragraph{Key takeaway.}
\methodname maintains a yield advantage across the tested size mixtures, including settings with up to 528 active choices.
Rarity further improves useful yield, with a small increase in feature MMD.
This is evidence across equal update budgets, while the separate time control tests the effect of unequal work per update.

\FloatBarrier
\subsection{QM9}
\label{app:external-qm9}

QM9 contains small organic molecules with up to nine heavy atoms drawn from carbon, nitrogen, oxygen, and fluorine, with hydrogen completing their valences~\citep{ramakrishnan2014qm9}.
The original dataset contains 133,885 molecules.
We use QM9 to study molecular quality and distinct yield on a small graph canvas.
Our prepared split has 107,109 source training rows, corresponding to 107,061 unique training molecules, and 13,374 test rows.
External studies can use different splits and evaluation rules.

\FloatBarrier
\subsubsection{Comparison with published generators}

Table~\ref{tab:qm9-generators} gives the complete QM9 comparison, including autoregressive models, iterative diffusion and flow models, SMILES generators, graph generators, and the REINFORCE training recipes.
Validity and uniqueness describe different parts of molecular quality: uniqueness is conditional on validity, while FCD measures distributional distance.
VU/s reports distinct valid outputs per generation second when compatible timing evidence is available.
External rows retain the identity, sampling, postprocessing, and timing rules of their sources, as detailed below.
Section~\ref{app:external-accounting} separately reports estimated dense inference efficiency.

\begin{table}[htbp]
  \centering
  \caption{QM9 generation: published baselines and training comparisons at 120,000 updates.
    \methodname and REINFORCE use the same $3\times768$ generator and three training seeds, 42, 43, and 44; their dispersion is the sample SD.
    Uniqueness is conditional on validity, with the identity and postprocessing rules detailed in the text.
    VU/s counts valid, unique outputs per generation second.
    External rows retain the protocols, hardware, and uncertainty conventions of their sources.
    Gray rows identify one-step generators; dashes denote unavailable compatible evidence.}
  \label{tab:qm9-generators}
  \setlength{\tabcolsep}{4pt}
  \begin{tabularx}{\linewidth}{@{}>{\raggedright\arraybackslash}Xcrrrr@{}}
    \toprule
    Method & Steps & \shortstack{Valid (\%)\\$\uparrow$} & \shortstack{Unique (\%)\\$\uparrow$}
    & FCD $\downarrow$ & VU/s $\uparrow$\\
    \midrule

    \multicolumn{6}{@{}l}{\textit{Autoregressive generation}}\\
    GraphARM~\citep{kong2023grapharm}
    & $n\leq9$ & 90.25 & 95.62 & 1.22 & ---\\
    G2PT-base~\citep{chen2025g2pt}
    & $L$ tokens & 99.0 & 96.8 & 0.06 & ---\\

    \addlinespace
    \multicolumn{6}{@{}l}{\textit{Iterative diffusion / flow}}\\
    GDSS$^{a}$~\citep{jo2022gdss}
    & 1,000 & 95.72 & 98.46 & 2.900 & $\approx86.4$\\
    DiGress~\citep{vignac2023digress}
    & 500 & 99.0 & 96.2 & --- & ---\\
    GruM$^{a}$~\citep{jo2024grum}
    & 1,000 & 99.69 & 96.90 & 0.108 & ---\\
    Cometh~\citep{siraudin2025cometh}
    & 500 & 99.57 & 96.76 & 0.25 & ---\\
    CatFlow$^{b}$~\citep{eijkelboom2024catflow}
    & 100 & 99.81 & 99.95 & 0.441 & ---\\
    DeFoG~\citep{qin2025defog}
    & 50 & 98.9 & 96.2 & 0.26 & ---\\
    DeFoG~\citep{qin2025defog}
    & 500 & 99.3 & 96.3 & 0.12 & ---\\

    \addlinespace
    MELD~\citep{seo2026meld}
    & --- & 100.00 & 96.49 & 0.09 & ---\\

    \addlinespace
    \multicolumn{6}{@{}l}{\textit{Few step SMILES generation}$^{c}$}\\
    UDLM + DCD
    & $1+\mathrm{tail}$ & 31.54 & $\approx99.35$ & --- & ---\\
    PairFlow + DCD
    & $1+\mathrm{tail}$ & 44.32 & $\approx99.56$ & --- & ---\\
    UDLM + ReDi
    & $1+\mathrm{tail}$ & 5.83 & $\approx100.00$ & --- & ---\\
    PairFlow + ReDi
    & $1+\mathrm{tail}$ & 35.25 & $\approx99.58$ & --- & ---\\

    \addlinespace
    \multicolumn{6}{@{}l}{\textit{One-step SMILES generation}$^{d}$}\\
    \rowcolor{black!6}
    Euclidean Laplace drifting
    & 1 & 22.0 & 40.0 & --- & ---\\
    \rowcolor{black!6}
    Kernel-gradient drifting
    & 1 & 38.9 & 44.1 & --- & ---\\

    \addlinespace
    \multicolumn{6}{@{}l}{\textit{One-step graph generation}}\\
    \rowcolor{black!6}
    CFM-CSD~\citep{roos2026categoricalflowmaps}
    & 1 & 90.0 & 91.8 & 2.14 & ---\\
    \rowcolor{black!6}
    CFM-ECLD~\citep{roos2026categoricalflowmaps}
    & 1 & 95.8 & 89.2 & 2.16 & ---\\
    \rowcolor{black!6}
    MolGAN~\citep{decao2018molgan}
    & 1 & 98.1 & 10.4 & --- & ---\\
    \rowcolor{black!6}
    MoFlow~\citep{zang2020moflow}
    & 1 & $100.00\pm0.00$ & $99.20\pm0.12$ & --- & ---\\

    \midrule
    \rowcolor{black!6}
    RLOO$^{e}$
    & 1 & $88.26\pm3.12$ & $50.68\pm9.21$
    & $3.907\pm0.162$ & $\approx3{,}927$\\
    \rowcolor{black!6}
    RLOO + rarity$^{e}$
    & 1 & $96.80\pm2.19$ & $51.32\pm3.14$
    & $2.710\pm0.535$ & $\approx4{,}425$\\
    \rowcolor{black!6}
    \textbf{\methodname ($3\times768$)}
    & \textbf{1} & $93.46\pm0.82$ & $83.69\pm1.49$
    & $3.534\pm0.638$ & $\approx6{,}746$\\

    \bottomrule
  \end{tabularx}

\end{table}

\paragraph{Published graph protocols.}
In Table~\ref{tab:qm9-generators}, $n$ denotes heavy atoms and $L$ the token sequence length; iterative steps need not equal network evaluations.
For rows marked $^{a}$, GDSS and GruM report validity before correction, whereas uniqueness and FCD use corrected outputs.
GDSS VU/s uses a TITAN RTX and 20 CPU cores, while \methodname uses an A100 and includes generation through SMILES/CSV, excluding later evaluation.
For CatFlow, marked $^{b}$, the step count follows \citet{roos2026categoricalflowmaps}.
DiGress, Cometh, and CFM use largest component identities.
DeFoG reports standard validity and uniqueness among relaxed valid molecules using largest component identities.
MELD reports 10,000 generated samples; its QM9 sampling step count and compatible VU/s are left unavailable here.
MoFlow retains the validity correction used in its original Table~2, with mean and reported SD over five generation runs of 10,000 molecules~\citep{zang2020moflow}.
No throughput from a different MoFlow quality evaluation is attached to that row.

\paragraph{Published SMILES protocols.}
The four rows marked $^{c}$ come from \citet{park2026pairflow} and use ten trials of 1,024 proposals.
Validity is $100\bar V/1{,}024$, and reconstructed uniqueness is $100\bar U/\bar V$, a ratio of mean counts.
Here, $U$ counts distinct valid SMILES strings without canonicalization.
One reported sampling step is followed by greedy tail denoising, and the released sampler adds a final network evaluation.
The rows marked $^{d}$ come from \citet{esteban2026kernelgradient}; kernel gradient drifting denotes the spherical Laplace gradient variant.
Both drifting rows use canonical largest fragment uniqueness among valid samples, and their evaluation budgets differ from 10,000 proposals.

\paragraph{\methodname and REINFORCE protocols.}
The rows marked $^{e}$ and \methodname use full molecule identities.
REINFORCE conditional uniqueness is aggregated over the same three training seeds.
These training recipes match update counts, but differ in reward scalarization, hard reward evaluation counts, and training time.
Section~\ref{app:qm9-reinforce} reports strict yields, checkpoint selection, and training costs.

\paragraph{\methodname protocol and generation time.}
\label{app:qm9-throughput}
\methodname uses three hidden layers of width 768, with 1,456,380 parameters and a canvas of nine heavy atoms.
Molecular sizes follow the training distribution.
Each seed completes 120,000 updates and uses a checkpoint selected by validation.
Count metrics use 10,000 raw proposals, without rejection sampling or graph repair.
FCD uses the first 10,000 full valid molecules from a separate bank of 30,000 proposals, retains duplicates, and compares them with 10,000 test reference molecules.
These sample counts are met in all three seeds.
The mean generation time through SMILES/CSV is $1.159\pm0.008$ seconds on an A100 40GB, corresponding to approximately 6,746 valid unique outputs per generation second.
This ratio uses the mean useful count and mean time, and excludes later deduplication, novelty checks, and distribution scoring.

\FloatBarrier
\paragraph{Key takeaway.}
\methodname combines $93.46\pm0.82$\% validity with $83.69\pm1.49$\% conditional uniqueness in a single network evaluation.
Several published generators achieve higher validity or lower FCD, while both REINFORCE recipes produce more repetitions in this comparison.
Table~\ref{tab:molecular-dense-audit} shows the associated cost tradeoff: compared with CFM-ECLD, \methodname has approximately $52.8\times$ higher estimated VU efficiency, with VU lower by 7.24 percentage points and FCD higher by 1.37.
Differences in source protocols and hardware prevent a direct runtime ranking from Table~\ref{tab:qm9-generators}.

\FloatBarrier
\subsubsection{Network Size and Training Budget}
\label{app:qm9-sweep}
\label{app:capacity-budget}

\paragraph{Motivation.}
The molecular size sweep changes the distribution of graph sizes while keeping the network fixed.
Here, we instead vary the depth and width of the \methodname generator on QM9 to test whether more network capacity improves molecular quality and distinct yield.
We also evaluate three training budgets to check whether a compact network can retain the quality of the main configuration and whether additional updates help each architecture.

\paragraph{Protocol.}
We evaluate five \methodname architectures at 30,000, 60,000, and 120,000 updates, with training seeds 42, 43, and 44 for every architecture.
The notation $L\times H$ gives the number of hidden layers and their width.
The main configuration remains $3\times768$, as in Table~\ref{tab:qm9-generators}; the other networks vary in depth or width.
The $2\times768$ network is a compact alternative, reported alongside every tested architecture.
Equal update counts do not imply equal computation across network sizes.
Differences between means over three seeds do not by themselves establish statistically significant superiority.

Each budget evaluates the checkpoint selected by validation within that budget, which can precede the final update.
Count metrics use 10,000 raw proposals per seed.
Full validity requires chemical sanitization of the complete output, and strict validity additionally requires connectedness.
Strict VUN divides the number of distinct valid connected molecules absent from training by all proposals; it is not uniqueness conditioned on validity.
Molecular sizes follow the distribution in the training data.
A separate bank of 30,000 raw proposals provides up to 10,000 full valid molecules for FCD, retaining duplicates and using the test reference.
FCD is a distance metric and does not share the denominator of the primary proposal bank.
For $3\times768$, every seed uses 10,000 generated molecules and 10,000 test reference molecules for FCD.
The prepared split has 107,109 source training rows, including 107,061 unique molecules, and 13,374 test rows; external studies can use other splits.

\paragraph{Architecture comparison.}
Table~\ref{tab:qm9-architecture-120k} reports all five \methodname architectures at 120,000 updates.

\begin{table}[htbp]
  \centering
  \caption{Complete \methodname network size sweep on QM9 at the 120,000 update budget.
    Metrics are mean $\pm$ sample standard deviation over seeds 42, 43, and 44.
    All columns except parameters and FCD are percentages.
    Parameter counts are rounded to millions; selected checkpoints can precede the budget milestone.}
  \label{tab:qm9-architecture-120k}
  \setlength{\tabcolsep}{4pt}
  \begin{tabular}{@{}lrrrrr@{}}
    \toprule
    Architecture & \shortstack{Params.\\(M)} & Full valid & Strict valid & Strict VUN & FCD $\downarrow$\\
    \midrule
    $2\times768$ & 0.866 & $94.05\pm1.36$ & $92.03\pm0.81$ & $68.70\pm0.98$ & $3.414\pm0.336$\\
    $3\times768$ & 1.456 & $93.46\pm0.82$ & $89.49\pm0.22$ & $68.31\pm1.03$ & $3.534\pm0.638$\\
    $4\times768$ & 2.047 & $93.50\pm0.31$ & $89.29\pm1.49$ & $67.10\pm1.94$ & $3.966\pm0.789$\\
    $3\times384$ & 0.433 & $94.30\pm1.13$ & $91.25\pm1.72$ & $67.06\pm2.16$ & $3.452\pm0.200$\\
    $3\times1024$ & 2.466 & $93.06\pm0.72$ & $89.35\pm1.33$ & $66.50\pm3.42$ & $3.679\pm0.293$\\
    \bottomrule
  \end{tabular}
\end{table}

The $2\times768$ \methodname network has about 41\% fewer parameters than the main $3\times768$ network.
Its mean Strict VUN is higher by 0.39 percentage points, corresponding to about 39 molecules per 10,000 proposals.
The ordering changes when connectedness is not required: full VUN is 71.99\% for $3\times768$ and 70.57\% for $2\times768$.
The main configuration has higher full VUN in every seed, with an aggregate of $71.99\pm0.85$\%; only the mean was supplied for the compact configuration.

The main \methodname configuration has 1,456,380 parameters.
Its connectedness is $95.93\pm0.65$\%, conditional uniqueness among full valid outputs is $83.69\pm1.49$\%, and novelty among those unique outputs is $92.07\pm2.01$\%.
Full VU is $78.21\pm0.84$\%, while strict unique yield is $74.53\pm1.17$\%; novelty among strict unique outputs is $91.68\pm2.11$\%.
These quantities use different denominators and should not be interchanged with Strict VUN.
FCD against the training reference is $3.501\pm0.606$, separate from the test reference FCD in Table~\ref{tab:qm9-architecture-120k}.
The descriptor KL score is $0.912\pm0.022$ and uses the training reference.

\paragraph{Training budget and checkpoint selection.}
Table~\ref{tab:qm9-budget-sweep} shows all five trajectories across the three budgets.
The milestones are repeated evaluations within each training run, rather than independent training repetitions.
Strict VUN increases at both budget transitions in all 15 runs, across the five architectures and three seeds.

\begin{table}[htbp]
  \centering
  \caption{\methodname Strict VUN (\%) on QM9 across training budgets, averaged over seeds 42, 43, and 44.
    Each result uses the checkpoint selected by validation within its budget.
    The dispersion at 120,000 updates is reported in Table~\ref{tab:qm9-architecture-120k}.}
  \label{tab:qm9-budget-sweep}
  \begin{tabular}{@{}lrrr@{}}
    \toprule
    Architecture & 30,000 updates & 60,000 updates & 120,000 updates\\
    \midrule
    $2\times768$ & 63.04 & 66.23 & 68.70\\
    $3\times768$ & 63.19 & 66.19 & 68.31\\
    $4\times768$ & 62.54 & 64.33 & 67.10\\
    $3\times384$ & 61.02 & 63.40 & 67.06\\
    $3\times1024$ & 62.16 & 64.66 & 66.50\\
    \bottomrule
  \end{tabular}
\end{table}

At the 120,000 update budget, the selected updates for $2\times768$ are 118,000, 120,000, and 116,000 for seeds 42, 43, and 44, respectively.
For $3\times768$, they are 82,000, 90,000, and 100,000.
These choices distinguish the completed training budget from the checkpoint used for evaluation.
A larger budget does not improve every metric: $4\times768$ has worse FCD at 120,000 than at 60,000 updates despite its higher Strict VUN.
The numerical FCD at 60,000 updates and the metric dispersions at the lower budgets were not supplied in this summary.

The earlier result of $64.39\pm1.85$\% Strict VUN for $3\times768$ at 60,000 updates belongs to a separate record.
It is not pooled with, or substituted for, the 66.19\% mean at that budget in the present sweep.

\paragraph{Generation cost.}
Table~\ref{tab:qm9-dense-efficiency} compares estimated dense work and measured generation time for the same five \methodname architectures at 120,000 updates.
The accounting follows Section~\ref{app:external-accounting}, and the timing scope follows Section~\ref{app:qm9-throughput}.
This comparison tests whether the gain from reducing dense work also appears in the time required to obtain molecular outputs.

\begin{table}[htbp]
  \centering
  \caption{\methodname inference cost across the QM9 network size sweep at 120,000 updates.
    Dense MFLOP is the estimated work per proposal, and $E_{\mathrm{VU}}$ is the number of distinct valid outputs per $10^{12}$ estimated dense operations.
    Times are seconds per 10,000 proposals through SMILES/CSV on an A100 40GB, with mean $\pm$ sample SD over seeds 42, 43, and 44.
    VU/s is the mean distinct valid count divided by the mean time.}
  \label{tab:qm9-dense-efficiency}
  \setlength{\tabcolsep}{3pt}
  \begin{tabular}{@{}lrrrr@{}}
    \toprule
    \methodname architecture & \shortstack{Dense MFLOP\\$\downarrow$} & $E_{\mathrm{VU}}$ $\uparrow$ & Time (s) $\downarrow$ & VU/s $\uparrow$\\
    \midrule
    $3\times768$ (primary) & $2.908$ & $268{,}980.3$ & $1.159\pm0.008$ & $\approx6{,}746$\\
    $2\times768$ & $1.728$ & $445{,}409.0$ & $1.144\pm0.008$ & $\approx6{,}725$\\
    $3\times384$ & $0.864$ & $880{,}169.8$ & $1.164\pm0.010$ & $\approx6{,}536$\\
    $4\times768$ & $4.087$ & $186{,}692.3$ & $1.161\pm0.009$ & $\approx6{,}574$\\
    $3\times1024$ & $4.925$ & $157{,}535.8$ & $1.167\pm0.013$ & $\approx6{,}647$\\
    \bottomrule
  \end{tabular}
\end{table}

Dense work includes all hidden linear maps and both output heads, with two operations per multiply-and-accumulate.
It excludes biases, activations, feature construction, sampling, and chemistry processing.
The reported $E_{\mathrm{VU}}$ is the mean across seeds and uses full VU, without a connectedness or novelty requirement.
VU/s is a ratio of means, not the mean of paired rates; no rate standard deviation is inferred from separate aggregate measurements.

The main $3\times768$ \methodname generator takes $1.159\pm0.008$ seconds through SMILES/CSV, including $0.03648\pm0.00042$ seconds for raw graph generation.
Its mean full valid unique count is 7,821, and its mean Strict VUN count is 6,831.
Dividing these counts by the unrounded mean generation time gives approximately 6,746 VU/s and 5,892 Strict VUN outputs per generation second.
The timer includes graph generation, host transfer, chemistry conversion, and SMILES/CSV writing, but excludes loading, compilation, warmup, and later screening.
Useful counts come from the later evaluation, so these rates do not measure the complete screening process.
The total time spent in optimizer updates over the 120,000 update budget is $1{,}079.56\pm4.43$ seconds, excluding preparation, compilation, validation, and checkpointing.
The separately reported evaluation timer is $114.14\pm0.72$ seconds and is not used as generation time.

Mean generation times range from 1.144 to 1.167 seconds per 10,000 proposals across the tested networks.
The $3\times384$ \methodname network has higher VU per estimated dense work than the primary configuration but produces approximately 6,536 VU/s, compared with 6,746 for $3\times768$.
Fewer dense operations therefore do not imply higher measured throughput through SMILES/CSV in these runs.

\FloatBarrier
\paragraph{Key takeaway.}
A larger \methodname network does not consistently improve molecular quality or Strict VUN on QM9.
The $2\times768$ network uses about 41\% fewer parameters than the main configuration and reaches a similar mean Strict VUN, while the main configuration retains higher full VUN.
Longer training improves Strict VUN at both budget transitions in every run, although FCD does not always improve.
Compact networks reduce estimated dense work, while the measured times through molecular conversion remain similar across architectures.

\FloatBarrier
\subsubsection{REINFORCE comparison at 120,000 updates}

\label{app:qm9-reinforce}

\paragraph{Motivation.}
REINFORCE can optimize rewards on hard molecular outputs without differentiating through sampling.
We first compare \methodname with REINFORCE under the same architecture and update count to test whether this direct reward access also preserves a large pool of distinct novel molecules.
This experiment compares the full training recipes; the following experiment then matches the reward and the number of generated graphs scored.

\paragraph{Why leave-one-out.}
We use REINFORCE with a leave-one-out baseline (RLOO) to control the variance of the gradient estimate.
Here, vanilla REINFORCE means REINFORCE without a baseline.
Our reward is defined over a batch and includes pairwise Energy terms.
Without a baseline, each sampled molecule is weighted by the full batch reward, including variation caused by the other molecules.
For each molecule, RLOO subtracts the reward recomputed after removing that molecule and renormalizing the remaining terms.
The name refers to this removal of one molecule at a time.
This measures a contribution relative to the remaining batch.
The baseline reuses the sampled graphs and distance matrices, so it requires no additional generated molecules or learned critic.
Throughout this appendix, ``REINFORCE'' refers to our implementation with a leave-one-out baseline (RLOO), unless explicitly described as vanilla REINFORCE.
We use this baseline with the aim of reducing gradient variance in the comparison with \methodname.
The estimator preserves the expected gradient under the conditions stated in Section~\ref{app:qm9-optimizer-control}.
We do not measure its variance or performance relative to vanilla REINFORCE in these experiments.

\paragraph{Protocol and training objective.}
We compare \methodname with RLOO, with and without rarity, using the same one-step QM9 generator: three hidden layers of width 768, 1,456,380 parameters, a canvas of nine heavy atoms, and a training-derived size prior. Each configuration uses seeds 42, 43, and 44 and completes 120,000 updates. REINFORCE uses 96 policy samples and 192 reference samples per update. Final count metrics use 10,000 raw proposals per seed, with no rejection sampling or graph repair; FCD uses a separate sampling bank. REINFORCE scores the hard Energy criterion and structural penalties directly, with feature scales fixed from training data. The rarity variant adds the hashed archive reward with weight 0.20, applied only to valid graphs.
\paragraph{Reward evaluation.}
The leave-one-out estimator is defined in Section~\ref{app:qm9-optimizer-control}.
For this experiment, feature scales and the rarity archive remain fixed while an update is scored, and archive entries are committed afterward.
Log probabilities include only active node and edge decisions.
The runs use no advantage whitening, entropy bonus, or gradient clipping.

\paragraph{Validity, distinct yield, and fidelity.}
Table~\ref{tab:qm9-reinforce-120k} shows a tradeoff: RLOO with rarity achieves higher full and strict validity and lower FCD, whereas \methodname produces more distinct valid molecules and more distinct novel connected molecules. \methodname reaches $68.31\%$ Strict VUN versus $42.07\%$ for REINFORCE with rarity, a difference of 26.24 percentage points and a ratio of approximately $1.62\times$. This corresponds to about 6,831 versus 4,207 distinct novel connected molecules per 10,000 attempts. Conditional uniqueness among full valid outputs is $83.69\%$ for \methodname and $51.32\%$ for REINFORCE with rarity; these percentages have a different denominator from strict unique yield. FCD evaluates the feature distribution of valid outputs while retaining duplicates, so lower FCD can coexist with lower distinct yield.

\begin{table}[htbp]
\centering
\caption{QM9 training comparison at 120,000 updates: mean $\pm$ sample SD across seeds 42, 43, and 44. All columns except raw FCD are percentages. Strict validity requires chemical validity and connectedness. Strict unique yield divides distinct valid connected molecules by all attempts; Strict VUN additionally requires absence from training. Bold values identify the best mean within this comparison.}
\label{tab:qm9-reinforce-120k}
\setlength{\tabcolsep}{4pt}
\begin{tabular}{@{}lrrrrr@{}}
\toprule
Method & Full valid $\uparrow$ & Strict valid $\uparrow$ & \shortstack{Strict unique\\yield $\uparrow$} & Strict VUN $\uparrow$ & FCD $\downarrow$\\
\midrule
\methodname & $93.46\pm0.82$ & $89.49\pm0.22$ & $\mathbf{74.53\pm1.17}$ & $\mathbf{68.31\pm1.03}$ & $3.534\pm0.638$\\
RLOO & $88.26\pm3.12$ & $85.89\pm2.43$ & $42.58\pm7.09$ & $39.39\pm7.61$ & $3.907\pm0.162$\\
RLOO + rarity & $\mathbf{96.80\pm2.19}$ & $\mathbf{95.77\pm2.10}$ & $48.86\pm4.15$ & $42.07\pm2.38$ & $\mathbf{2.710\pm0.535}$\\
\bottomrule
\end{tabular}
\end{table}
\paragraph{Effect of rarity.}
Adding rarity to REINFORCE increases strict validity by 9.88 percentage points and strict unique yield by 6.28 points, but Strict VUN increases by only 2.68 points. The gain in distinct valid connected outputs therefore does not translate into an equally large gain in distinct novel outputs. Rarity improves validity and distinct yield without closing the Strict VUN gap to \methodname under this training recipe; this result does not imply that history-dependent rewards are ineffective for all score function methods.

\paragraph{Budget milestones and checkpoint selection.}
Each budget evaluates the best checkpoint found so far under the validation criterion.
For REINFORCE, the selected parameters are already available at the 30,000 update milestone and remain unchanged at 60,000 and 120,000 updates. Without rarity, the selected update is 2,000 in all three seeds. With rarity, the selected updates are 28,000, 30,000, and 10,000 for seeds 42, 43, and 44. No subsequent checkpoint improves that criterion, and the fixed generation banks reproduce the same metrics for the same parameters. All RLOO runs nevertheless complete 120,000 updates and 11,520,000 policy graph draws. For \methodname, the selected updates are 82,000, 90,000, and 100,000 for seeds 42, 43, and 44.
The selected checkpoints should therefore be distinguished from the training budget consumed.
\paragraph{Training cost and scope of the comparison.}
REINFORCE update time is approximately 361 to 366 seconds across the two configurations, compared with approximately 1,080 seconds for \methodname. These timers exclude compilation/warmup, validation, checkpointing, and subsequent generation and evaluation. The comparison matches architecture and update count, but not training time or objective evaluation workload. \methodname uses dynamic Energy scales and centered rank fitness shaping, whereas REINFORCE uses fixed scales and a raw scalar reward. Consequently, this experiment compares complete training recipes and does not isolate the estimator alone. It demonstrates that direct access to hard molecular criteria can yield high validity and favorable FCD without preserving the distinct novel yield achieved by \methodname in this comparison. All variants retain one-step inference; no inference speedup of \methodname over REINFORCE is claimed.
\paragraph{Distribution metrics.}
Validity, yield, and raw FCD evaluations are complete. Only the optional descriptor KL metric is incomplete and is omitted from this comparison.

\FloatBarrier
\paragraph{Key takeaway.}
At 120,000 updates, \methodname produces about $1.62\times$ as many distinct novel connected molecules as RLOO with rarity, while RLOO with rarity has higher validity and lower FCD.
The difference concerns the complete training recipes, including their reward scaling, fitness shaping, and training work.
The next comparison checks whether the advantage in distinct yield remains under a shared reward and matched reward evaluation budgets.

\FloatBarrier
\subsubsection{REINFORCE comparison at matched reward budgets}

\label{app:qm9-optimizer-control}

\paragraph{Motivation.}
The comparison at 120,000 updates gives each method a different number of reward evaluations.
We therefore run a separate experiment to ask whether \methodname still produces more distinct novel molecules when both methods score the same number of generated graphs with the same reward.
We also measure validity and FCD to show the quality tradeoffs that can accompany a gain in distinct yield.

\paragraph{Protocol and shared training objective.}
We compare \methodname with RLOO in a dedicated
QM9 experiment with five seeds, separate from the runs used for the main
molecular generation benchmarks. Both methods use the same one-step generator: three hidden layers of width 768, 1,456,380 parameters, a
canvas of nine heavy atoms, and a training-derived size prior.
The prepared QM9 split and initialization are paired by seeds 42 to 46.
Both optimize the same hard Energy reward minus the same structural
penalties, with feature scales computed once from 8,192 deterministic
training pairs and then held fixed. Rarity and rank shaping are disabled
in both methods. The recorded definitions of the reward and feature scales agree
across all ten runs.

\paragraph{Leave-one-out estimator.}
For a batch reward $R_B$ evaluated on samples $X_{1:B}$, RLOO uses
\begin{equation}
 A_i=R_B(X_{1:B})-R_{B-1}(X_{-i}),\qquad
 \mathcal L_{\mathrm{LOO}}
 =-\sum_{i=1}^{B}\operatorname{stopgrad}(A_i)
 \log p_\theta(X_i\mid z_i,n_i).
\end{equation}
The delete-one reward removes both the row and column of sample $i$
from the Energy distances between generated samples and renormalizes the
U-statistic. Feature scales and positive references remain fixed while
scoring the update. Log probabilities include only active node and
edge decisions. RLOO uses 96 policy graphs and 192 positive training
references per update.

Conditional on the current parameters and training history, latent inputs, requested sizes, fixed feature scales, and reference panel, the generated graphs are sampled independently.
The baseline $b_i=R_{B-1}(X_{-i})$ therefore does not depend on the sampled graph $X_i$.
Its contribution to the expected gradient is zero:
\begin{equation}
 \mathbb{E}\!\left[b_i\nabla_\theta\log p_\theta(X_i\mid z_i,n_i)\right]=0.
\end{equation}
Subtracting this baseline leaves the expected gradient of the batch reward unchanged, following the baseline principle of REINFORCE~\citep{williams1992reinforce}.
Keeping feature scales and, when used, the rarity archive fixed during the update is necessary for this independence.
LOO is thus a variance control within REINFORCE, with the same hard reward and no learned value model.

\paragraph{Evolutionary updates and optimization settings.}
\methodname uses 96 antithetic rank-one perturbation pairs
(192 candidate networks), with 96 generated graphs per candidate.
Candidates share the positive references and generation random numbers
within an update. Updates use the numerical differences between paired
rewards, without a rank transformation. Matrix perturbation scales are
$\max(0.1\,\mathrm{RMS}(\theta_\ell),10^{-3})$, recomputed at each update;
vector parameters use scale 0.01. Candidates are evaluated in chunks
of 16.
Both methods use Adam with $\beta_1=0.9$, $\beta_2=0.999$, and
$\epsilon=10^{-8}$, with learning rates $10^{-2}$ for \methodname
and $3\times10^{-4}$ for RLOO. Neither method uses advantage whitening,
an entropy bonus, or gradient clipping. The comparison therefore
concerns the estimator and update recipes, including method-specific
optimization settings.

\paragraph{Matched reward budgets and checkpoint selection.}
An RLOO update scores 96 generated graphs, whereas an \methodname
update scores $192\times96=18{,}432$.
At the largest budget, 11,520,000 evaluations per method and seed, \methodname completes
625 updates, and RLOO completes 120,000.
Every reported result evaluates the current parameters at the declared
boundary, without substituting a validation-selected checkpoint.

Matching these evaluations controls the number of generated graphs
scored by the shared reward, not total FLOPs or elapsed training time.
Reference reuse and gradient computation differ. For example,
\methodname presents 120,000 positive reference rows at the largest
budget, versus 23,040,000 for RLOO, because its candidates reuse each
reference panel. Thus the reference sampling rule is shared, but the
total number of reference presentations is not matched.

\paragraph{Evaluation and uncertainty.}
Each checkpoint produces 10,000 raw proposals per seed, without
rejection sampling or graph repair.
Strict validity requires chemical validity and connectedness.
Strict unique yield divides the number of distinct valid connected
molecules by all attempts; Strict VUN additionally requires absence
from training.
FCD uses the first 10,000 chemically valid outputs from a separate
bank of 30,000 raw proposals, retaining duplicates, and a held-out
reference sample. All endpoint FCD evaluations meet this required sample count. Descriptor KL is omitted because its required number of
unique samples is not available for all five paired runs.

We report means and sample standard deviations across seeds 42 to 46.
Paired contrasts are \methodname minus RLOO.
Their two-sided 95\% Student-$t$ intervals use the five paired
differences and four degrees of freedom. These are descriptive,
pointwise intervals without correction across budgets or metrics.

\begin{table}[htbp]
\centering
\caption{QM9 comparison at 11,520,000 hard reward graph evaluations
per method and seed: mean $\pm$ sample SD over seeds 42 to 46.
The current checkpoints are at 625 updates for \methodname and
120,000 for RLOO. All columns except held-out raw FCD are percentages;
yield metrics use all 10,000 proposals as the denominator. Both
methods use the same reward and fixed scales, without rarity or rank
shaping.}
\label{tab:qm9-reinforce}
\label{tab:qm9-control-endpoints}
\setlength{\tabcolsep}{4pt}
\begin{tabular}{@{}lrrrrr@{}}
\toprule
Method & \shortstack{Full valid\\$\uparrow$} &
\shortstack{Strict valid\\$\uparrow$} &
\shortstack{Strict unique\\yield $\uparrow$} &
\shortstack{Strict VUN\\$\uparrow$} & FCD $\downarrow$\\
\midrule
\methodname & $76.71\pm3.67$ & $70.70\pm2.59$ &
$40.71\pm2.53$ & $38.39\pm2.89$ & $5.250\pm0.660$\\
RLOO & $98.64\pm0.86$ & $97.77\pm1.04$ &
$14.00\pm1.05$ & $12.05\pm0.53$ & $4.161\pm0.508$\\
\bottomrule
\end{tabular}
\end{table}

\paragraph{Validity, distinct yield, and fidelity.}
Table~\ref{tab:qm9-reinforce} reports the final comparison.
At the largest matched reward budget, \methodname reaches
$38.39\pm2.89$\% Strict VUN versus $12.05\pm0.53$\% for RLOO.
The paired difference is $+26.34$ percentage points, with a 95\%
interval of $[22.81,29.86]$, and is positive in all five seeds.
This corresponds to approximately 3,839 versus 1,205 distinct novel
connected molecules per 10,000 attempts.
The strict unique yield is also higher, $40.71\pm2.53$\% versus
$14.00\pm1.05$\%.

RLOO instead attains higher strict validity,
$97.77\pm1.04$\% versus $70.70\pm2.59$\%, and a lower mean FCD,
$4.161\pm0.508$ versus $5.250\pm0.660$.
The paired FCD difference is $+1.089$, with interval
$[-0.015,2.193]$, which includes zero.
Thus, the lower mean FCD is less conclusive than the paired differences
in Strict VUN and strict validity.
Because FCD retains repeated valid outputs, lower FCD can coexist
with a lower yield of distinct molecules.

\FloatBarrier
\paragraph{Key takeaway.}
The control supports a yield advantage for the update recipe of \methodname at the matched reward budgets even with identical reward
construction and counts of generated graphs scored.
Method-specific learning rates, perturbations, and reference reuse
remain part of the compared recipes, so the result does not isolate
the estimator independently of these choices.
It also does not establish uniform superiority across budgets,
a wall clock speedup, or an inference cost advantage over RLOO:
both methods retain the same one-step generator.

\FloatBarrier
\subsection{ZINC250K}
\label{app:zinc}

ZINC250K is a subset of the ZINC molecular collection~\citep{irwin2012zinc}.
It extends the evaluation from the small molecules in QM9 to graphs with up to 38 heavy atoms.
We use it to test whether \methodname can maintain a high yield of valid and distinct molecules on larger graphs.
The external comparison measures generation quality and estimated inference cost.
A separate experiment compares training methods under equal active training time.

\FloatBarrier
\subsubsection{Comparison with published generators}
\label{app:external-zinc}

Table~\ref{tab:zinc-generators} provides the full ZINC250K comparison with published generators.
It includes autoregressive generation, iterative diffusion and flow, and one-step generators.
The table reports chemical validity, uniqueness among valid molecules, FCD, and measured valid and unique outputs per second where available.
External rows retain the evaluation protocols of their sources.
Estimated inference cost is reported separately in Table~\ref{tab:molecular-dense-audit}; it is not a runtime measurement.

\begin{table}[htbp]
  \centering
  \caption{Full ZINC250K comparison with published generators.
    Validity and conditional uniqueness are percentages, FCD is a distance, and VU/s is a measured output rate.
    \methodname uses 120,000 training updates and reports mean $\pm$ sample SD where available.
    Its FCD is a single recovered evaluation.
    Gray rows identify one-step generators; dashes denote unavailable measurements.}
  \label{tab:zinc-generators}
  \setlength{\tabcolsep}{3pt}
  \begin{tabularx}{\linewidth}{@{}>{\raggedright\arraybackslash}Xcrrrr@{}}
    \toprule
    Method & Steps & Valid (\%) $\uparrow$ & Unique (\%) $\uparrow$ & FCD $\downarrow$ & VU/s $\uparrow$\\
    \midrule
    \multicolumn{6}{@{}l}{\textit{Autoregressive generation and autoregressive diffusion}}\\
    GraphARM~\citep{kong2023grapharm} & $n\leq38$ & 88.23 & 99.46 & 16.26 & ---\\
    LO-ARM-st-sep~\citep{wang2025loarm} & $D(n)$ & 96.26 & 100.00 & 3.229 & ---\\
    PARD~\citep{zhao2024pard} & blockwise & 95.23 & 99.99 & 1.98 & ---\\
    \addlinespace
    \multicolumn{6}{@{}l}{\textit{Iterative diffusion and flow}}\\
    GDSS$^{a}$~\citep{jo2022gdss} & 1,000 & 97.01 & 99.64 & 14.656 & $\approx4.93$\\
    DiGress$^{b}$~\citep{jo2024grum} & 1,000 & 94.99 & 99.97 & 3.482 & ---\\
    GruM~\citep{jo2024grum} & 1,000 & 98.65 & 99.97 & 2.257 & ---\\
    CatFlow~\citep{eijkelboom2024catflow} & 100 & 99.21 & --- & 13.211 & ---\\
    MELD~\citep{seo2026meld} & 500 & 100.00 & 99.99 & 1.51 & ---\\
    DeFoG~\citep{qin2025defog} & 50 & 96.65 & 99.99 & 2.123 & ---\\
    DeFoG~\citep{qin2025defog} & 500 & 99.22 & 99.99 & 1.425 & ---\\
    \addlinespace
    \multicolumn{6}{@{}l}{\textit{One-step generation}}\\
    \rowcolor{gray!10}
    MoFlow~\citep{zang2020moflow} & 1 & 100.00 & --- & --- & ---\\
    \rowcolor{gray!10}
    CFM-CSD$^{c}$~\citep{roos2026categoricalflowmaps} & 1 & 73.5 & 99.9 & 19.3 & ---\\
    \rowcolor{gray!10}
    CFM-ECLD$^{c}$~\citep{roos2026categoricalflowmaps} & 1 & 93.5 & 99.5 & 20.0 & ---\\
    \midrule
    \rowcolor{gray!10}
    \textbf{\methodname (ours)} & \textbf{1} & $94.02\pm3.94$ & $96.42\pm0.21$ & $30.36^{d}$ & $2{,}525\pm160$\\
    \bottomrule
  \end{tabularx}
\end{table}

\paragraph{Source protocols and timing.}
For LO-ARM, $D(n)=n(n+1)/2$ counts node and edge decisions, and the reported row uses 16,384 samples.
PARD generates blocks autoregressively and uses diffusion within each block.
The Steps column therefore does not provide a uniform count of neural evaluations.
$^{a}$GDSS reports validity before correction, while its uniqueness, FCD, and VU/s use corrected outputs on a TITAN RTX with 20 CPU cores.
$^{b}$The DiGress row is the reproduction with 1,000 steps reported by GruM.
$^{c}$CFM computes uniqueness from identities of the largest component.
MoFlow retains the molecular correction used in its source evaluation.
$^{d}$The \methodname FCD is 30.3573 before rounding, from seed 42 at a budget of 120,000 updates.

\methodname uniqueness is computed within each run among full valid molecules before aggregation.
Values recovered from rounded logs are approximate.
The \methodname rate of $2{,}525\pm160$ VU/s is measured through molecular conversion and SMILES/CSV export, then aggregated across runs.
It excludes later FCD and other distributional evaluations.
Hardware and timing boundaries differ across sources, so these rates do not establish a speedup under matched conditions.

\paragraph{\methodname configuration and evaluation.}
\methodname uses three hidden layers of width 1,024, a canvas of 38 nodes, and a budget of 120,000 updates.
Molecular sizes are sampled from the training distribution.
This architecture was selected after inspecting the capacity sweep in Section~\ref{app:zinc-sweep} because it had the highest observed mean full validity and full VU at this budget.
The choice was therefore not fixed before that sweep.
Training uses seven hard feature families, a population of 192 perturbed networks, fitness batches of 96 graphs, and a rarity weight of 0.2.

The historical preparation collects the atom state vocabulary and allowed valences before the split and applies no additional deduplication.
A fixed permutation with seed 0 reserves up to 10,000 accepted rows for testing.
The positive feature panel and validation references use separate portions of the remaining training pool.
Checkpoint selection uses a fixed validation noise bank and first prefers full validity of at least 85\% and connectivity of at least 97\%.
It then uses uniqueness of the largest component adjusted by feature MMD; before feasibility, it primarily rewards full validity.
This differs from the protocol of the controlled size sweep in Section~\ref{app:ksweep}.

Each final evaluation uses 10,000 raw proposals.
Full validity tests the complete molecule, and full VU counts distinct full valid molecular identities per attempt.
FCD retains all full valid outputs from this bank, including duplicates, and compares them with the valid held-out reference.
It does not use the separate replenishment bank used for QM9.
The FCD evaluation for this configuration is from seed 42.

\FloatBarrier
\paragraph{Key takeaway.}
\methodname reaches $94.02\pm3.94$\% full validity and $96.42\pm0.21$\% uniqueness among valid molecules with one generator evaluation.
Its full VU yield is $90.66\pm4.00$\%, with a measured rate of $2{,}525\pm160$ valid and unique molecules per second through export.
Table~\ref{tab:molecular-dense-audit} reports an estimated efficiency of about $208\times$ the value for CFM-ECLD, while the mean VU yield of \methodname is lower by 2.37 percentage points.
The available FCD value of 30.36 is higher than 20.0 for CFM-ECLD and the values of the iterative methods in Table~\ref{tab:zinc-generators}.
\methodname therefore combines high useful yield with low estimated inference cost, while distributional fidelity remains a limitation under the reported protocols.

\FloatBarrier
\subsubsection{Network Size and Training Budget}
\label{app:zinc-sweep}

\paragraph{Motivation.}
We vary the network depth and width to test whether more capacity improves the validity, useful yield, and distributional fidelity of \methodname.
We also compare three training budgets to examine the effect of longer optimization.
This study changes the generator network while keeping the molecular dataset fixed.
The controlled molecular size sweep in Section~\ref{app:ksweep} asks a different question by changing the molecular mixture while keeping the network fixed.

\paragraph{Architectures and training.}
We compare five \methodname architectures at 30,000, 60,000, and 120,000 updates.
Architecture names give the number and width of the hidden layers.
The campaign schedules seeds 42, 43, and 44 for each architecture, with intermediate milestones along continuous training trajectories.
The original reference architecture is $3\times768$.
We use $3\times1024$ in the main comparison because it has the highest observed mean full validity and full VU at 120,000 updates.
This choice was made after inspecting the sweep summary.
Equal update budgets do not imply equal computation across network sizes.

Training follows the historical ZINC preparation and \methodname protocol on a canvas of 38 nodes.
Molecular sizes are sampled from the training distribution.
The configuration retains seven hard feature families, a population of 192 perturbed networks, fitness batches of 96 graphs, and a rarity weight of 0.2.
The atom state vocabulary and allowed valences are collected before the data split, and the split has no additional deduplication.
A fixed permutation with seed 0 reserves up to 10,000 accepted rows for testing.
The positive feature panel and validation references use separate portions of the remaining training pool.

\paragraph{Checkpoint selection.}
A fixed validation noise bank is used to select checkpoints.
The rule first prefers full validity of at least 85\% and connectivity of at least 97\%, then uses uniqueness of the largest component adjusted by feature MMD.
Before these thresholds are met, it primarily rewards full validity.
Each budget evaluates the best available checkpoint under this rule, which can differ from the last iterate.
This selection rule differs from the rule used in the controlled molecular size sweep.

\paragraph{Metrics and available evidence.}
Each final evaluation uses 10,000 raw proposals.
Full validity tests the complete molecule, and strict validity additionally requires connectivity.
Connectivity is measured over all proposals, including invalid ones.
Full VU counts distinct full valid molecular identities per attempt.
Strict VUN counts distinct valid, connected molecules absent from the training reference per attempt.

Conditional uniqueness is computed within each seed as $100$ times the full VU percentage divided by the full validity percentage.
The mean and sample SD summarize these ratios rather than the ratio of the aggregate means.
Reconstruction from rounded log percentages introduces a small approximation.
FCD uses all full valid molecules from the original 10,000 proposals, including duplicates, against the valid held-out reference.
The generated sample count therefore varies with validity, without the separate replenishment bank used for QM9.

The mean and sample SD use the available runs for each metric.
The starred FCD of $3\times1024$ is a single available evaluation from seed 42.
The starred Strict VUN and FCD values of $4\times768$ come from seed 44.
These values have no sample SD, and missing measurements remain unavailable.
Table~\ref{tab:zinc-architecture-120k} reports all architecture results at 120,000 updates.

\begin{table}[!ht]
  \centering
  \caption{\methodname architecture comparison on ZINC250K at 120,000 updates.
    Values are the mean $\pm$ sample SD over the available runs per metric.
    All columns except FCD are percentages.
    Unique given valid is computed within each run among full valid molecules before aggregation; reconstruction from rounded logs is approximate.
    Stars mark single available evaluations, with no sample SD; dashes denote unavailable metrics.
    The highlighted architecture is the profile used in the main comparison.}
  \label{tab:zinc-architecture-120k}
  \setlength{\tabcolsep}{3pt}
  \begin{tabular}{@{}lrrrrrrr@{}}
    \toprule
    Architecture & \shortstack{Full\\valid} & \shortstack{Strict\\valid} & Connected & \shortstack{Unique\\given valid} & Full VU & \shortstack{Strict\\VUN} & FCD $\downarrow$\\
    \midrule
    $3\times384$ & \shortstack[r]{$84.42$\\$\pm4.62$} & \shortstack[r]{$79.53$\\$\pm1.86$} & \shortstack[r]{$95.08$\\$\pm2.76$} & \shortstack[r]{$93.16$\\$\pm1.03$} & \shortstack[r]{$78.62$\\$\pm3.88$} & \shortstack[r]{$74.79$\\$\pm2.01$} & \shortstack[r]{$29.58$\\$\pm1.49$}\\
    \addlinespace[2pt]
    $2\times768$ & \shortstack[r]{$90.72$\\$\pm5.31$} & \shortstack[r]{$86.71$\\$\pm7.76$} & \shortstack[r]{$95.93$\\$\pm2.50$} & \shortstack[r]{$97.06$\\$\pm0.68$} & \shortstack[r]{$88.08$\\$\pm5.66$} & --- & ---\\
    \addlinespace[2pt]
    $3\times768$ & \shortstack[r]{$88.35$\\$\pm2.75$} & \shortstack[r]{$85.59$\\$\pm4.20$} & \shortstack[r]{$97.20$\\$\pm2.88$} & \shortstack[r]{$96.69$\\$\pm0.24$} & \shortstack[r]{$85.43$\\$\pm2.86$} & --- & ---\\
    \addlinespace[2pt]
    $4\times768$ & \shortstack[r]{$91.23$\\$\pm0.29$} & \shortstack[r]{$86.39$\\$\pm1.83$} & \shortstack[r]{$95.16$\\$\pm1.59$} & \shortstack[r]{$94.89$\\$\pm1.00$} & \shortstack[r]{$86.56$\\$\pm0.73$} & $81.70^{*}$ & $28.89^{*}$\\
    \addlinespace[2pt]
    \rowcolor{gray!10}
    $\boldsymbol{3\times1024}$ & \shortstack[r]{$94.02$\\$\pm3.94$} & \shortstack[r]{$90.84$\\$\pm6.97$} & \shortstack[r]{$96.82$\\$\pm3.03$} & \shortstack[r]{$96.42$\\$\pm0.21$} & \shortstack[r]{$90.66$\\$\pm4.00$} & \shortstack[r]{$88.22$\\$\pm6.86$} & $30.36^{*}$\\
    \bottomrule
  \end{tabular}
\end{table}

\paragraph{Effect of network capacity.}
At 120,000 updates, $3\times1024$ has the highest reported mean full validity, strict validity, and full VU in Table~\ref{tab:zinc-architecture-120k}.
It also has the highest reported Strict VUN among configurations for which this metric is available.
The $2\times768$ model has the highest mean conditional uniqueness, $97.06\pm0.68$\%, compared with $96.42\pm0.21$\% for $3\times1024$.
The choice of the wider model therefore reflects its higher useful yield rather than conditional uniqueness alone.

Relative to $3\times768$, the wider network improves mean full validity, strict validity, and full VU by 5.67, 5.25, and 5.23 percentage points, respectively.
However, its SD of 6.97 points for strict validity and 6.86 points for Strict VUN shows substantial variation across the available runs.
Missing Strict VUN values for two architectures and incomplete FCD coverage further limit the comparison.
The aggregates do not establish a statistically resolved advantage of the wider network or a benefit for distributional fidelity.

\paragraph{Effect of training budget.}
Table~\ref{tab:zinc-budget-sweep} reports full and strict validity at each training budget for all five architectures.
It uses checkpoints selected on validation data within each budget, without pooling the milestones as independent runs.

\begin{table}[!ht]
  \centering
  \caption{\methodname validity on ZINC250K across training budgets.
    Each cell reports mean $\pm$ sample SD over the available runs.
    Each milestone evaluates a checkpoint selected on validation data within the corresponding budget.
    Improvements describe aggregate means, rather than verified improvements in every individual trajectory.}
  \label{tab:zinc-budget-sweep}
  \setlength{\tabcolsep}{5pt}
  \begin{tabular}{@{}lrrr@{}}
    \toprule
    Architecture & 30,000 updates & 60,000 updates & 120,000 updates\\
    \midrule
    \multicolumn{4}{@{}l}{\textit{Full validity (\%)}}\\
    $3\times384$ & $82.53\pm4.53$ & $83.62\pm4.27$ & $84.42\pm4.62$\\
    $2\times768$ & $85.62\pm6.98$ & $88.73\pm5.78$ & $90.72\pm5.31$\\
    $3\times768$ & $85.20\pm3.00$ & $86.23\pm2.36$ & $88.35\pm2.75$\\
    $4\times768$ & $87.82\pm2.05$ & $90.68\pm0.13$ & $91.23\pm0.29$\\
    \rowcolor{gray!10}
    $3\times1024$ & $91.13\pm1.17$ & $92.54\pm2.29$ & $94.02\pm3.94$\\
    \midrule
    \multicolumn{4}{@{}l}{\textit{Strict validity (\%)}}\\
    $3\times384$ & $77.20\pm1.87$ & $78.52\pm1.05$ & $79.53\pm1.86$\\
    $2\times768$ & $81.41\pm9.20$ & $84.66\pm8.17$ & $86.71\pm7.76$\\
    $3\times768$ & $82.11\pm4.58$ & $83.28\pm3.54$ & $85.59\pm4.20$\\
    $4\times768$ & $82.63\pm0.65$ & $85.71\pm1.72$ & $86.39\pm1.83$\\
    \rowcolor{gray!10}
    $3\times1024$ & $87.99\pm4.11$ & $89.32\pm5.37$ & $90.84\pm6.97$\\
    \bottomrule
  \end{tabular}
\end{table}

Mean full and strict validity increase at both budget transitions for every architecture in Table~\ref{tab:zinc-budget-sweep}.
From 30,000 to 120,000 updates, full validity improves by 1.89 to 5.10 percentage points and strict validity by 2.33 to 5.30 points.
For $3\times1024$, these gains are 2.89 and 2.85 points, respectively.
The aggregate trend supports a benefit from longer optimization for validity.
The available summaries do not establish improvement in every seed or a corresponding budget trend for Strict VUN or FCD.

\paragraph{Graph generation throughput.}
Table~\ref{tab:zinc-graph-throughput} reports the number of full valid and unique molecules per second of graph generation at 120,000 updates.
For each available run $s$, the rate is $q_s=r_s y_s/100$, where $r_s$ is the reported sampling rate and $y_s$ is the full VU percentage.
The table summarizes these rates across runs.
We use the reported sampling rate instead of inverting the recorded generation time, which is rounded to 0.01 seconds in the CSV.

Molecular evaluation determines the valid and unique count after graph generation.
The timing denominator excludes SMILES conversion, RDKit validity checks, canonicalization, and deduplication.
It measures graph generation cost, while the throughput in Table~\ref{tab:zinc-generators} includes molecular conversion and export.
The available summaries provide rates for every architecture except $3\times384$.
The $4\times768$ rate is based on two available runs.

\begin{table}[!ht]
  \centering
  \caption{Full VU per second of graph generation for the \methodname architectures on ZINC250K at 120,000 updates: mean $\pm$ sample SD of rates computed within each run.
    Molecular conversion and evaluation are excluded from the timing denominator.
    $^{a}$The $4\times768$ aggregate uses two available runs.
    Dashes denote unavailable rates; the highlighted profile is the configuration used in the main table.}
  \label{tab:zinc-graph-throughput}
  \setlength{\tabcolsep}{6pt}
  \begin{tabular}{@{}lr@{}}
    \toprule
    Architecture & VU/s (graph generation only) $\uparrow$\\
    \midrule
    $3\times384$ & ---\\
    $2\times768$ & $722{,}733\pm257{,}183$\\
    $3\times768$ & $841{,}619\pm28{,}323$\\
    $4\times768$ & $(841{,}365\pm25{,}413)^{a}$\\
    \rowcolor{gray!10}
    $\boldsymbol{3\times1024}$ & $862{,}142\pm46{,}109$\\
    \bottomrule
  \end{tabular}
\end{table}

\paragraph{Timing through molecular export.}
For $3\times1024$ at 120,000 updates, Table~\ref{tab:zinc-generation-boundaries} separates graph generation from generation through SMILES/CSV export.
Both rates use the full VU yield of $90.66\pm4.00$\%, with the rate computed within each run before aggregation.
The broader timing boundary includes RDKit conversion and CSV export and excludes subsequent FCD and other distributional evaluations.

The approximate times for 10,000 raw proposals are 0.011 seconds for graph generation and 3.59 seconds through export.
Graph generation accounts for about 0.3\% of the reported time through export, so molecular conversion and export dominate this measurement.
These rounded time summaries have no supplied SD.
The throughput means are not reconstructed by dividing aggregate counts by the rounded times.

\begin{table}[!ht]
  \centering
  \caption{Measured timing boundaries for \methodname $3\times1024$ on ZINC250K at 120,000 updates.
    Time refers to 10,000 raw proposals and is reported approximately.
    VU/s uses fully valid distinct outputs, and reports mean $\pm$ sample SD of rates computed within each run.
    The broader boundary includes molecular conversion and CSV export; subsequent distributional evaluation is excluded.}
  \label{tab:zinc-generation-boundaries}
  \setlength{\tabcolsep}{5pt}
  \begin{tabular}{@{}lrr@{}}
    \toprule
    Timing boundary & Time (s) & Full VU/s $\uparrow$\\
    \midrule
    Graph generation only & $\approx0.011$ & $862{,}142\pm46{,}109$\\
    \rowcolor{gray!10}
    Generation through SMILES/CSV & $\approx3.59$ & $2{,}525\pm160$\\
    \bottomrule
  \end{tabular}
\end{table}

\FloatBarrier
\paragraph{Key takeaway.}
The wider $3\times1024$ network has the highest reported mean full VU in this sweep, and more training updates improve mean validity across all five architectures.
The available FCD results provide no evidence that greater width improves distributional fidelity.
The timing measurements also show that molecular conversion and export dominate the cost of obtaining usable \methodname outputs.
The external comparison therefore uses the measured export rate in Table~\ref{tab:zinc-generation-boundaries}, while Table~\ref{tab:zinc-graph-throughput} describes graph generation alone.
Differences in hardware and timing boundaries prevent these rates from establishing a speedup against external methods.
The historical ZINC validity result of 93.20\% comes from a separate run and is not pooled into this campaign.

\FloatBarrier
\subsubsection{Controlled Molecular Size Sweep}
\label{app:zinc-size-sweep}

\paragraph{Why this experiment.}
We use the shared size sweep in Section~\ref{app:ksweep} to test whether \methodname retains useful yield as the ZINC250K task includes larger molecules.
The fixed architecture separates changes in the molecular mixture from changes in network capacity.
The comparison also tests whether \methodname needs the rarity reward to improve on the gradient baselines.

\paragraph{Results.}
Table~\ref{tab:ksweep-zinc-all} reports all twelve configurations.
Mean Strict VUN for \methodname decreases from $90.84\%$ at $k=16$ to $78.40\%$ at $k=32$, but remains above both gradient baselines at every cap.
At $k=32$, Soft reaches higher strict validity than \methodname, $87.35\%$ versus $82.48\%$, but much lower Strict VUN, $13.25\%$ versus $78.40\%$.
Thus, high validity alone does not guarantee a large set of distinct novel molecules.

\begin{table}[htbp]
  \centering
  \caption{Complete cumulative ZINC250K sweep.
    Values are mean $\pm$ sample standard deviation across five seeds, with 30,000 updates per run.
    Full, Strict, Strict VUN, and Conn. are percentages; all use the complete attempt budget as the denominator.
    Lower $\mathrm{MMD}_{\rm cell}$ is better, with comparisons restricted to a fixed cap.}
  \label{tab:ksweep-zinc-all}
  \setlength{\tabcolsep}{3pt}
  \begin{tabular}{@{}rlrrrrr@{}}
    \toprule
    $k$ & Method & Full $\uparrow$ & Strict $\uparrow$ & Strict VUN $\uparrow$ & Conn. $\uparrow$ & $\mathrm{MMD}_{\rm cell}\downarrow$ \\
    \midrule
    16 & \methodname & 97.33 $\pm$ 1.11 & 94.91 $\pm$ 1.57 & 90.84 $\pm$ 0.77 & 97.55 $\pm$ 0.94 & 0.0084 $\pm$ 0.0007 \\
    16 & \methodnamez & 86.60 $\pm$ 2.81 & 83.82 $\pm$ 4.18 & 77.39 $\pm$ 3.75 & 97.07 $\pm$ 1.85 & 0.0052 $\pm$ 0.0009 \\
    16 & \shortstack[l]{STE\\(feature-ST)} & 99.21 $\pm$ 0.91 & 96.52 $\pm$ 1.75 & 7.51 $\pm$ 4.83 & 97.31 $\pm$ 2.29 & 0.0538 $\pm$ 0.0351 \\
    16 & Soft & 95.01 $\pm$ 1.74 & 89.33 $\pm$ 7.19 & 10.81 $\pm$ 4.21 & 94.29 $\pm$ 6.82 & 0.0430 $\pm$ 0.0313 \\
    \midrule
    24 & \methodname & 95.48 $\pm$ 2.98 & 91.69 $\pm$ 3.19 & 89.45 $\pm$ 2.67 & 96.18 $\pm$ 2.12 & 0.0099 $\pm$ 0.0015 \\
    24 & \methodnamez & 75.58 $\pm$ 11.00 & 69.38 $\pm$ 14.01 & 63.30 $\pm$ 10.96 & 93.17 $\pm$ 4.79 & 0.0065 $\pm$ 0.0012 \\
    24 & \shortstack[l]{STE\\(feature-ST)} & 97.63 $\pm$ 1.99 & 81.40 $\pm$ 32.05 & 8.88 $\pm$ 6.82 & 83.12 $\pm$ 31.59 & 0.0593 $\pm$ 0.0189 \\
    24 & Soft & 90.44 $\pm$ 6.28 & 75.20 $\pm$ 13.89 & 14.13 $\pm$ 10.38 & 84.41 $\pm$ 11.25 & 0.0614 $\pm$ 0.0402 \\
    \midrule
    32 & \methodname & 86.31 $\pm$ 8.58 & 82.48 $\pm$ 9.33 & 78.40 $\pm$ 8.68 & 96.09 $\pm$ 1.53 & 0.0125 $\pm$ 0.0032 \\
    32 & \methodnamez & 65.98 $\pm$ 7.85 & 60.64 $\pm$ 9.18 & 53.96 $\pm$ 7.82 & 93.92 $\pm$ 5.52 & 0.0084 $\pm$ 0.0017 \\
    32 & \shortstack[l]{STE\\(feature-ST)} & 80.41 $\pm$ 27.59 & 76.10 $\pm$ 26.07 & 21.74 $\pm$ 9.61 & 94.30 $\pm$ 3.21 & 0.0555 $\pm$ 0.0122 \\
    32 & Soft & 92.47 $\pm$ 5.25 & 87.35 $\pm$ 7.88 & 13.25 $\pm$ 8.61 & 94.41 $\pm$ 4.91 & 0.0653 $\pm$ 0.0228 \\
    \bottomrule
  \end{tabular}
\end{table}

\paragraph{Paired comparisons.}
Table~\ref{tab:ksweep-contrasts-zinc} gives the paired differences and their uncertainty.
Without rarity, \methodname exceeds STE in mean Strict VUN by 69.88, 54.41, and 32.22 percentage points at caps 16, 24, and 32, respectively.
All three intervals lie above zero.
Adding rarity raises Strict VUN by 13.45, 26.15, and 24.44 points, again with positive intervals.
\methodnamez has the lowest mean feature MMD at each cap, while rarity raises MMD as it improves useful yield.
MMD comparisons apply within each cap, not across different molecular mixtures.

\begin{table}[htbp]
  \centering
  \caption{ZINC250K paired contrasts: mean difference with its unadjusted paired 95\% confidence interval in brackets, over five seeds. Each difference is the first method minus the second. Validity and VUN use the strict definition, with differences reported in percentage points; MMD differences are multiplied by $10^3$. Positive validity or yield differences and negative MMD differences favor the first method.
  }
  \label{tab:ksweep-contrasts-zinc}

  \setlength{\tabcolsep}{3pt}
  \renewcommand{\arraystretch}{1.15}

\begin{tabular}{@{}rlrrr@{}}
    \toprule
    $k$ & Contrast
    & $\Delta$Validity
    & $\Delta$VUN
    & $10^3\Delta\mathrm{MMD}_{\rm cell}$ \\
    \midrule

    16 & \methodname $-$ \methodnamez
    & $11.08\;[6.72,\,15.44]$
    & $13.45\;[8.62,\,18.28]$
    & $3.23\;[2.35,\,4.12]$ \\

    16 & \methodnamez $-$ Soft
    & $-5.51\;[-13.45,\,2.44]$
    & $66.58\;[59.67,\,73.48]$
    & $-37.80\;[-76.99,\,1.39]$ \\

    16 & \methodnamez $-$ STE
    & $-12.70\;[-19.28,\,-6.12]$
    & $69.88\;[61.27,\,78.49]$
    & $-48.64\;[-91.73,\,-5.55]$ \\

    16 & STE $-$ Soft
    & $7.19\;[-1.20,\,15.58]$
    & $-3.30\;[-13.92,\,7.32]$
    & $10.84\;[-66.83,\,88.50]$ \\

    \midrule

    24 & \methodname $-$ \methodnamez
    & $22.31\;[6.40,\,38.21]$
    & $26.15\;[13.23,\,39.08]$
    & $3.40\;[0.85,\,5.95]$ \\

    24 & \methodnamez $-$ Soft
    & $-5.82\;[-37.61,\,25.97]$
    & $49.17\;[40.16,\,58.17]$
    & $-54.89\;[-104.17,\,-5.60]$ \\

    24 & \methodnamez $-$ STE
    & $-12.02\;[-62.42,\,38.38]$
    & $54.41\;[36.32,\,72.50]$
    & $-52.85\;[-76.76,\,-28.94]$ \\

    24 & STE $-$ Soft
    & $6.20\;[-27.44,\,39.84]$
    & $-5.24\;[-19.20,\,8.71]$
    & $-2.04\;[-54.14,\,50.07]$ \\

    \midrule

    32 & \methodname $-$ \methodnamez
    & $21.84\;[3.92,\,39.77]$
    & $24.44\;[10.18,\,38.70]$
    & $4.09\;[0.54,\,7.64]$ \\

    32 & \methodnamez $-$ Soft
    & $-26.72\;[-36.70,\,-16.73]$
    & $40.71\;[21.60,\,59.82]$
    & $-56.90\;[-86.71,\,-27.08]$ \\

    32 & \methodnamez $-$ STE
    & $-15.47\;[-49.48,\,18.55]$
    & $32.22\;[15.17,\,49.27]$
    & $-47.15\;[-64.08,\,-30.23]$ \\

    32 & STE $-$ Soft
    & $-11.25\;[-49.87,\,27.37]$
    & $8.49\;[-10.49,\,27.47]$
    & $-9.75\;[-28.28,\,8.79]$ \\

    \bottomrule
  \end{tabular}
\end{table}

\FloatBarrier
\paragraph{Key takeaway.}
On the tested ZINC250K mixtures, \methodname produces more distinct novel connected molecules than STE or Soft, even without rarity.
Rarity further improves this yield but weakens the match to the selected feature distributions.
Equal update counts leave training work unmatched, which motivates the active time comparison below.

\FloatBarrier
\subsubsection{Matched active training budgets}
\label{app:zinc-time-budget}

\paragraph{Motivation.}
Equal update budgets can hide differences in the cost of a training update.
We therefore compare \methodname, \methodnamez, STE, and Soft under the same active training time on the same GPU.
This experiment asks whether the higher useful yield of \methodname remains when the gradient methods can complete more updates.
The comparison of \methodnamez with STE tests the update recipes under the same hard forward criterion. Comparing the two \methodname variants tests the additional rarity term.

\paragraph{Shared protocol.}
This experiment uses the full accepted ZINC250K data on a canvas of 38 nodes, including molecules with more than 32 heavy atoms.
All four methods use the same one-step generator with three hidden layers of width 768 and latent dimension 96.
They share the train/test split, molecular size prior, evaluation inputs, and initialization within each of the three seeds, 42, 43, and 44.
The methods run sequentially on one allocated GPU, with their order rotated across seeds.
The vocabulary is derived only from ZINC training molecules.
This differs from the joint ZINC/GuacaMol vocabulary used for the controlled size sweep, so the results are compared within this experiment.

\paragraph{Clock and checkpoint policy.}
We evaluate 10,000 proposals at 0, 1, 2, 5, 10, 20, and 30 minutes.
Each evaluation uses the last completed update within its budget, without retrospective checkpoint selection or a cap on update count.
Active time includes synchronized updates, batch preparation, device transfers, and archive updates.
It excludes initial setup, compilation and warmup, evaluation, and checkpoint I/O.
Strict VUN counts distinct valid, connected molecules absent from training, divided by all attempts, without repair by extracting the largest component.
Table~\ref{tab:zinc-time-budget-30min} reports the final measurements that support Figure~\ref{fig:molecular-training-evidence}.

\begin{table}[htbp]
  \centering
  \caption{Full ZINC250K after 30 minutes of active training.
    Values are mean $\pm$ sample SD across three seeds, with 10,000 proposals per seed.
    Quality metrics are percentages; updates are in thousands.}
  \label{tab:zinc-time-budget-30min}
  \setlength{\tabcolsep}{4pt}
  \begin{tabular}{@{}lrrrr@{}}
    \toprule
    Method & Full valid & Strict valid & Strict VUN & Updates ($10^3$)\\
    \midrule
    \methodname & $86.94\pm5.26$ & $79.69\pm8.47$ & $77.91\pm7.92$ & $25.84\pm0.29$\\
    \methodnamez & $71.53\pm4.95$ & $61.71\pm3.58$ & $53.73\pm3.70$ & $25.95\pm0.26$\\
    STE & $79.12\pm4.15$ & $57.26\pm11.27$ & $29.50\pm1.76$ & $145.73\pm2.39$\\
    Soft & $92.82\pm4.70$ & $66.81\pm25.46$ & $5.78\pm9.86$ & $156.58\pm3.30$\\
    \bottomrule
  \end{tabular}
\end{table}

\paragraph{Results.}
At 30 minutes, \methodname reaches $77.91\pm7.92$\% Strict VUN, and \methodnamez reaches $53.73\pm3.70$\%.
The corresponding values are $29.50\pm1.76$\% for STE and $5.78\pm9.86$\% for Soft in Table~\ref{tab:zinc-time-budget-30min}.
The ratios of mean yield relative to Soft are $13.47\times$ for \methodname and $9.29\times$ for \methodnamez.
The rarity term adds 24.19 percentage points to the mean yield.
Both \methodname variants have higher mean Strict VUN than both gradient baselines at all six positive budgets.
The aggregate data support this statement about means, without establishing the same ordering in every individual seed.

STE and Soft complete $5.64\times$ and $6.06\times$ as many updates as \methodname, respectively.
Soft has the highest mean full validity, $92.82\%$, but a low Strict VUN of $5.78\%$.
This gap shows why full validity alone is insufficient to describe useful output yield.
The supplied aggregate metrics do not separate the losses due to disconnection, duplication, and recovery of training molecules.

\FloatBarrier
\paragraph{Key takeaway.}
\methodname maintains a higher useful molecular yield under equal active training time, even though STE and Soft complete about six times as many updates.
The advantage remains when rarity is removed, and the rarity term increases it further.
This result concerns active training time on full ZINC250K.
It does not measure total job time, inference speed, or distributional fidelity.

\FloatBarrier
\subsection{GuacaMol}
\label{app:external-guacamol}

GuacaMol~\citep{brown2019guacamol} uses a curated subset of ChEMBL
to evaluate molecular generation. It includes a wider range of molecular
sizes and structures than QM9. Our full GuacaMol experiment supports up
to 88 heavy atoms and uses the supplied training, validation, and test
partitions. This setting tests whether \methodname can still produce a large
collection of distinct valid molecules as graph size increases.
Experiments on subsets defined by a smaller size limit are discussed
separately below.

\FloatBarrier
\subsubsection{Comparison with external generators}

Table~\ref{tab:guacamol-generators} reports the complete comparison with
published GuacaMol generators and the local CFM-ECLD pilot described in
Section~\ref{app:cfm-guacamol}. The \methodname experiment restricted to
$n\leq32$ is shown in a separate block because it uses a different
data distribution and a smaller graph canvas.
We report FCD as a distance
throughout this table, so lower values indicate a closer distributional
match. For published rows, we convert the rounded benchmark score
$s_{\mathrm{FCD}}$ using
$D_{\mathrm{FCD}}=-5\log s_{\mathrm{FCD}}$, following the definition
in~\citet{brown2019guacamol}. These converted distances are approximate.
For \methodname, we aggregate the recorded distance in each seed directly.

\begin{table}[htbp]
  \centering
  \caption{GuacaMol generation, including the local CFM-ECLD pilot
    after three epochs and 14,919 updates, using seed 42.
    The pilot is an early checkpoint, not a converged baseline, and is
    excluded from efficiency ratio and ranking claims.
    \methodname reports mean $\pm$ sample SD over three training seeds.
    VU is the percentage of raw attempts that yield distinct valid
    molecules, except for the proxies marked $^{a}$.
    FCD is a distance, with lower values preferred.
    Gray rows identify generators that use one network evaluation.
    Published results retain their source protocols; \methodname and the local
    CFM-ECLD pilot use our offline evaluation protocol.
    The final block reports the separate \methodname experiment on $n\leq32$.}
  \label{tab:guacamol-generators}
  \setlength{\tabcolsep}{3pt}
  \begin{tabularx}{\linewidth}{@{}>{\raggedright\arraybackslash}Xcrrrr@{}}
      \toprule
    Method & Steps & Valid (\%) $\uparrow$
    & VU (\%) $\uparrow$ & FCD $\downarrow$
    & VU/s $\uparrow$\\
    \midrule
    \multicolumn{6}{@{}l}{\textit{Autoregressive generation}}\\
    SMILES LSTM~\citep{brown2019guacamol}
    & $L$ tokens & 95.9 & $95.9^{a}$ & $\approx0.46$ & ---\\
    G2PT-base~\citep{chen2025g2pt}
    & $L$ tokens & 94.6 & $94.6^{a}$ & $\approx0.34$ & ---\\
    G2PT-large~\citep{chen2025g2pt}
    & $L$ tokens & 95.3 & $95.3^{a}$ & $\approx0.38$ & ---\\
    \addlinespace
    \multicolumn{6}{@{}l}{\textit{Iterative diffusion and flow}}\\
    DiGress~\citep{vignac2023digress}
    & 500 & 85.2 & $85.2^{a}$ & $\approx1.93$ & ---\\
    Cometh~\citep{siraudin2025cometh}
    & 500 & 94.4 & 94.4 & $\approx1.97$ & ---\\
    Cometh-PC~\citep{siraudin2025cometh}
    & 500 + PC & 98.9 & 98.9 & $\approx1.59$ & ---\\
    DeFoG~\citep{qin2025defog}
    & 50 & 91.7 & 91.7 & $\approx2.73$ & ---\\
    DeFoG~\citep{qin2025defog}
    & 500 & 99.0 & 99.0 & $\approx1.52$ & ---\\
    MELD~\citep{seo2026meld}
    & --- & 100.0 & $100.0^{a}$ & $\approx1.87$ & ---\\
    \addlinespace
    \multicolumn{6}{@{}l}{\textit{One-step generation}}\\
    \rowcolor{gray!10}
    \shortstack[l]{CFM-ECLD$^{\dagger}$\\(3 epochs)}
    & 1 & 19.67 & 19.67 & 33.36 & 28.27\\
    \rowcolor{gray!10}
    \textbf{\methodname (full)}
    & \textbf{1} & $80.80\pm10.36$ & $77.61\pm9.35$
    & $34.62\pm1.58$ & $686\pm76$\\
    \midrule
    \multicolumn{6}{@{}l}{\textit{Separate experiment on the $n\leq32$ subset}}\\
    \rowcolor{gray!10}
    \textbf{\methodname ($\boldsymbol{n\leq32}$)}
    & \textbf{1} & $87.71\pm3.54$ & $84.74\pm3.81$
    & $33.29\pm3.06$ & $2{,}442\pm63$\\
    \bottomrule
  \end{tabularx}
  \vspace{3pt}
  \begin{minipage}{\linewidth}
    $L$ denotes token sequence length; PC denotes predictor and corrector
    sampling, which adds computation. Iterative steps need not equal
    neural network evaluations.
    $^{a}$VU is approximated from reported validity and conditional
    uniqueness, using rounded source values. The two measurements may
    use separate sampling banks, so these proxies are not measured
    distinct yields from a shared proposal bank.
    $^{\dagger}$The CFM-ECLD pilot uses 10,000 raw proposals for validity
    and VU. Its FCD uses 10,000 valid molecules from a separate bank of
    60,000 attempts and the test reference. No across-seed SD is available
    for this single run. Full pilot results appear in Table~\ref{tab:cfm-guacamol-pilot}.
    \methodname uses three hidden layers of width 768, 240,000 training updates
    on full GuacaMol and 120,000 on $n\leq32$, with validation selection.
    Each seed generates 10,000 raw proposals
    for validity and VU. Its FCD uses 10,000 valid samples from a separate
    bank of 30,000 raw proposals and 10,000 test reference molecules;
    the $n\leq32$ reference is filtered to that subset.
    Full molecular identities allow disconnected
    valid outputs; no novelty requirement is applied to VU.
    \methodname VU/s is the mean and sample SD of per-seed rates on an A100 40GB,
    including generation, molecular conversion, and SMILES/CSV export,
    but excluding setup, warmup, and subsequent distribution evaluation.
    The CFM pilot rate includes conversion and CSV export on one A100;
    its sampler does not perform a separate warmup.
    These measurements do not establish a matched-hardware speedup.
    Estimated dense costs are reported separately in
    Table~\ref{tab:molecular-dense-audit}.
    A dash denotes an unavailable compatible measurement.
  \end{minipage}
\end{table}

\FloatBarrier
\paragraph{Key takeaway.}
\methodname produces $77.61\pm9.35\%$ distinct valid molecules from its proposal
budget with one network evaluation per attempt. Its measured full-data
rate is $686\pm76$ distinct valid molecules per second through CSV export.
The smaller $n\leq32$ experiment achieves higher yield and throughput,
but changes both the data distribution and the training budget.
The cost estimates in Table~\ref{tab:molecular-dense-audit} provide
additional computational context. The FCD of \methodname remains much
larger than the published reference values. These results show that
producing many valid candidates and matching the data distribution are
separate objectives. Differences in evaluation protocol prevent a
controlled ranking of distributional quality across all rows.

\FloatBarrier
\subsubsection{Controlled Molecular Size Sweep}
\label{app:guacamol-size-sweep}

\paragraph{Why this experiment.}
We repeat the size sweep on GuacaMol to test whether the result extends beyond the ZINC250K mixtures.
These cumulative subsets use the shared protocol in Section~\ref{app:ksweep} and remain separate from the full GuacaMol comparison.
The experiment asks whether useful yield persists as the task includes larger molecules and whether rarity gives a consistent benefit.

\paragraph{Results.}
Table~\ref{tab:ksweep-guacamol-all} reports all twelve configurations.
\methodname reaches mean Strict VUN of $82.12\%$, $81.53\%$, and $73.69\%$ at caps 16, 24, and 32.
Both \methodname variants remain above STE and Soft in mean Strict VUN at every cap.
At $k=32$, STE has higher strict validity than \methodname, $83.64\%$ versus $78.41\%$, but much lower Strict VUN, $15.02\%$ versus $73.69\%$.
The difference between these metrics combines repetitions and training-set overlap; it does not separate their individual contributions.

\begin{table}[htbp]
  \centering
  \caption{Complete cumulative GuacaMol sweep.
    Values are mean $\pm$ sample standard deviation across five seeds, with 30,000 updates per run.
    Full, Strict, Strict VUN, and Conn. are percentages; all use the complete attempt budget as the denominator.
    Lower $\mathrm{MMD}_{\rm cell}$ is better, with comparisons restricted to a fixed cap.}
  \label{tab:ksweep-guacamol-all}
  \setlength{\tabcolsep}{3pt}
  \begin{tabular}{@{}rlrrrrr@{}}
    \toprule
    $k$ & Method & Full $\uparrow$ & Strict $\uparrow$ & Strict VUN $\uparrow$ & Conn. $\uparrow$ & $\mathrm{MMD}_{\rm cell}\downarrow$ \\
    \midrule
    16 & \methodname & 94.57 $\pm$ 2.66 & 90.27 $\pm$ 2.34 & 82.12 $\pm$ 2.19 & 95.68 $\pm$ 1.43 & 0.0091 $\pm$ 0.0024 \\
    16 & \methodnamez & 86.71 $\pm$ 5.49 & 82.60 $\pm$ 6.82 & 66.01 $\pm$ 4.08 & 95.79 $\pm$ 1.59 & 0.0064 $\pm$ 0.0016 \\
    16 & \shortstack[l]{STE\\(feature-ST)} & 99.72 $\pm$ 0.38 & 97.88 $\pm$ 1.95 & 6.39 $\pm$ 1.38 & 98.16 $\pm$ 1.63 & 0.0364 $\pm$ 0.0189 \\
    16 & Soft & 97.75 $\pm$ 2.09 & 92.52 $\pm$ 4.41 & 7.74 $\pm$ 7.53 & 94.77 $\pm$ 2.62 & 0.0387 $\pm$ 0.0162 \\
    \midrule
    24 & \methodname & 92.80 $\pm$ 5.42 & 86.27 $\pm$ 6.47 & 81.53 $\pm$ 6.81 & 93.44 $\pm$ 1.54 & 0.0109 $\pm$ 0.0037 \\
    24 & \methodnamez & 74.42 $\pm$ 7.91 & 68.65 $\pm$ 7.93 & 58.68 $\pm$ 8.22 & 93.99 $\pm$ 3.17 & 0.0088 $\pm$ 0.0024 \\
    24 & \shortstack[l]{STE\\(feature-ST)} & 66.52 $\pm$ 32.18 & 62.37 $\pm$ 31.21 & 8.86 $\pm$ 6.35 & 91.06 $\pm$ 6.50 & 0.0620 $\pm$ 0.0281 \\
    24 & Soft & 98.97 $\pm$ 1.65 & 90.73 $\pm$ 5.57 & 7.95 $\pm$ 9.58 & 91.75 $\pm$ 4.05 & 0.0984 $\pm$ 0.0491 \\
    \midrule
    32 & \methodname & 84.28 $\pm$ 6.05 & 78.41 $\pm$ 5.79 & 73.69 $\pm$ 5.10 & 94.07 $\pm$ 1.57 & 0.0106 $\pm$ 0.0018 \\
    32 & \methodnamez & 64.98 $\pm$ 4.97 & 58.22 $\pm$ 7.81 & 48.87 $\pm$ 9.51 & 92.90 $\pm$ 4.81 & 0.0100 $\pm$ 0.0020 \\
    32 & \shortstack[l]{STE\\(feature-ST)} & 93.23 $\pm$ 3.12 & 83.64 $\pm$ 5.53 & 15.02 $\pm$ 7.85 & 90.21 $\pm$ 7.69 & 0.0736 $\pm$ 0.0037 \\
    32 & Soft & 86.95 $\pm$ 5.83 & 49.04 $\pm$ 34.28 & 17.25 $\pm$ 23.28 & 59.26 $\pm$ 32.59 & 0.0752 $\pm$ 0.0255 \\
    \bottomrule
  \end{tabular}
\end{table}

\paragraph{Paired comparisons.}
Table~\ref{tab:ksweep-contrasts} reports the complete paired contrasts.
\methodnamez exceeds STE in Strict VUN by 59.62, 49.82, and 33.86 percentage points at caps 16, 24, and 32, with all three intervals above zero.
The rarity gains are 16.12, 22.84, and 24.81 points, also with positive intervals.
The comparison of \methodnamez with Soft at $k=32$ has a positive mean difference but an interval that includes zero.
We therefore distinguish a consistent ordering of means from evidence for each individual contrast.

\begin{table}[htbp]
  \centering
  \caption{GuacaMol paired contrasts: mean difference with its unadjusted paired 95\% confidence interval in brackets, over five seeds. Each difference is the first method minus the second. Validity and VUN use the strict definition, with differences reported in percentage points; MMD differences are multiplied by $10^3$. Positive validity or yield differences and negative MMD differences favor the first method.
  }
  \label{tab:ksweep-contrasts}

  \setlength{\tabcolsep}{3pt}
  \renewcommand{\arraystretch}{1.15}

\begin{tabular}{@{}rlrrr@{}}
    \toprule
    $k$ & Contrast
    & $\Delta$Validity
    & $\Delta$VUN
    & $10^3\Delta\mathrm{MMD}_{\rm cell}$ \\
    \midrule

    16 & \methodname $-$ \methodnamez
    & \shortstack[r]{$7.68$\\$[-0.68,\,16.03]$}
    & \shortstack[r]{$16.12$\\$[10.91,\,21.32]$}
    & \shortstack[r]{$2.63$\\$[0.49,\,4.76]$} \\

    16 & \methodnamez $-$ Soft
    & \shortstack[r]{$-9.92$\\$[-15.07,\,-4.78]$}
    & \shortstack[r]{$58.27$\\$[50.18,\,66.35]$}
    & \shortstack[r]{$-32.20$\\$[-52.40,\,-12.01]$} \\

    16 & \methodnamez $-$ STE
    & \shortstack[r]{$-15.28$\\$[-22.69,\,-7.88]$}
    & \shortstack[r]{$59.62$\\$[55.76,\,63.48]$}
    & \shortstack[r]{$-29.99$\\$[-52.55,\,-7.43]$} \\

    16 & STE $-$ Soft
    & \shortstack[r]{$5.36$\\$[0.12,\,10.61]$}
    & \shortstack[r]{$-1.35$\\$[-9.21,\,6.50]$}
    & \shortstack[r]{$-2.21$\\$[-25.74,\,21.31]$} \\

    \midrule

    24 & \methodname $-$ \methodnamez
    & \shortstack[r]{$17.62$\\$[5.49,\,29.75]$}
    & \shortstack[r]{$22.84$\\$[12.98,\,32.70]$}
    & \shortstack[r]{$2.03$\\$[-2.51,\,6.58]$} \\

    24 & \methodnamez $-$ Soft
    & \shortstack[r]{$-22.08$\\$[-35.91,\,-8.26]$}
    & \shortstack[r]{$50.73$\\$[33.29,\,68.17]$}
    & \shortstack[r]{$-89.57$\\$[-152.35,\,-26.79]$} \\

    24 & \methodnamez $-$ STE
    & \shortstack[r]{$6.28$\\$[-39.97,\,52.53]$}
    & \shortstack[r]{$49.82$\\$[33.59,\,66.06]$}
    & \shortstack[r]{$-53.20$\\$[-86.49,\,-19.91]$} \\

    24 & STE $-$ Soft
    & \shortstack[r]{$-28.37$\\$[-70.36,\,13.63]$}
    & \shortstack[r]{$0.91$\\$[-11.69,\,13.50]$}
    & \shortstack[r]{$-36.37$\\$[-102.75,\,30.01]$} \\

    \midrule

    32 & \methodname $-$ \methodnamez
    & \shortstack[r]{$20.19$\\$[13.34,\,27.04]$}
    & \shortstack[r]{$24.81$\\$[14.86,\,34.76]$}
    & \shortstack[r]{$0.63$\\$[-3.49,\,4.75]$} \\

    32 & \methodnamez $-$ Soft
    & \shortstack[r]{$9.17$\\$[-33.97,\,52.31]$}
    & \shortstack[r]{$31.62$\\$[-2.70,\,65.94]$}
    & \shortstack[r]{$-65.22$\\$[-94.99,\,-35.45]$} \\

    32 & \methodnamez $-$ STE
    & \shortstack[r]{$-25.42$\\$[-37.09,\,-13.75]$}
    & \shortstack[r]{$33.86$\\$[28.24,\,39.47]$}
    & \shortstack[r]{$-63.64$\\$[-69.10,\,-58.19]$} \\

    32 & STE $-$ Soft
    & \shortstack[r]{$34.59$\\$[-8.87,\,78.05]$}
    & \shortstack[r]{$-2.24$\\$[-36.98,\,32.51]$}
    & \shortstack[r]{$-1.58$\\$[-36.53,\,33.37]$} \\

    \bottomrule
  \end{tabular}
\end{table}

\FloatBarrier
\paragraph{Key takeaway.}
The GuacaMol sweep supports the same pattern as ZINC250K: \methodname has higher distinct novel yield across the tested mixtures, and rarity provides an additional gain.
The gain does not imply better performance on every metric, since \methodnamez has the lowest mean feature MMD at each cap and some validity comparisons favor the gradient methods.

\FloatBarrier
\subsubsection{A CFM pilot on full GuacaMol}
\label{app:cfm-guacamol}

\paragraph{Motivation.}
We selected CFM-ECLD~\citep{roos2026categoricalflowmaps}
for its strong one-step graph generation results on both QM9 and
ZINC250K. Kernel-gradient drifting~\citep{esteban2026kernelgradient}
reports molecular experiments only on QM9, using SMILES.
We therefore adapted CFM-ECLD to full GuacaMol, absent from its
published evaluation, and ran a limited pilot to measure training
cost and sample quality early in training on larger graphs.

\paragraph{Setup.}
The adaptation uses the public CFM model and loss implementation, with
12 transformer layers, node, edge, and global hidden dimensions of
256, 64, and 64, respectively, and eight attention heads. The global
feedforward width is explicitly set to 128. This explicitly forwards a configuration argument that the public constructor call omitted, leaving its default at 2048. The adaptation therefore differs from that code path and is not a reproduction of an authenticated published checkpoint. The data use 58 atom states and a fixed canvas
of 88 nodes. Fixed padding preserves the global feature statistics of
the public model. Training uses AdamW with learning rate $10^{-4}$,
microbatches of 8, an effective batch size of 256, and EMA decay 0.999.

We train one seed, 42, for three epochs, corresponding to 14,919
optimizer updates. The cosine learning rate schedule remains configured
for 84 epochs. Evaluation uses the EMA parameters at the final pilot
checkpoint, rather than a checkpoint selected on validation. Sampling
uses one network evaluation per attempt and batches of 16. We apply
the same offline molecular evaluation protocol as for \methodname, with no
repair, replacement of invalid samples, or selection of the largest
fragment.

\paragraph{Results.}
Table~\ref{tab:cfm-guacamol-pilot} reports the completed evaluation.
The primary bank contains 10,000 raw attempts. A separate fixed bank
of 60,000 attempts produces 11,694 valid molecules, which supplies the
10,000 generated samples required for FCD and KL evaluation. FCD uses
the test reference, and KL uses the training reference. The table uses
the completed evaluation with the larger bank, with no additional training.

\begin{table}[htbp]
  \centering
  \caption{CFM-ECLD pilot on full GuacaMol, seed 42, after three epochs
    and 14,919 updates. This is a single pilot checkpoint without an
    estimate of variation across training seeds. Full validity allows
    disconnected molecules; strict validity also requires connectivity.}
  \label{tab:cfm-guacamol-pilot}
  \setlength{\tabcolsep}{6pt}
  \begin{tabular}{@{}lr@{}}
    \toprule
    Metric & CFM-ECLD pilot\\
    \midrule
    Full validity (\%) & 19.67\\
    Strict validity (\%) & 4.64\\
    Uniqueness among full valid molecules (\%) & 100.00\\
    Full VU (\%) & 19.67\\
    Strict VU (\%) & 4.64\\
    Full VUN (\%) & 19.66\\
    Strict VUN (\%) & 4.63\\
    FCD to test reference $\downarrow$ & 33.3561\\
    GuacaMol KL score $\uparrow$ & 0.6814\\
    \midrule
    Graph generation, 10,000 attempts (s) & 64.67\\
    Generation through SMILES/CSV, 10,000 attempts (s) & 69.59\\
    Full VU/s through SMILES/CSV & 28.27\\
    Estimated dense MFLOP per proposal & 13,312.918\\
    $E_{\mathrm{VU}}$ & 14.8\\
    \bottomrule
  \end{tabular}
  \vspace{3pt}
  \begin{minipage}{\linewidth}
    Validity, VU, and VUN use the primary bank. VUN additionally requires
    absence from the training reference. FCD and KL use the separate
    distribution bank. Timings come from the first evaluation of the
    same checkpoint and primary bank on one A100, include synchronized
    generation and transfer to CPU, and separate chemistry conversion
    and CSV export. They exclude subsequent FCD and KL computation;
    the sampler does not run a separate warmup. Dense cost is reconstructed
    from the local 58-state architecture at 88 nodes and one network
    evaluation, using the partial dense operation convention of this
    appendix. It is not a measured hardware cost.
  \end{minipage}
\end{table}

\paragraph{Training cost.}
The logs record approximately 19 to 24 seconds per optimizer update
on one A100-SXM-64GB. With 4,973 updates per epoch, extrapolating to
the configured 84 epochs gives
\[
T_{84}\approx
\frac{84\times4{,}973\times[19,24]}{3{,}600}
= [2{,}205,2{,}785]\ \text{hours}.
\]
This is the source of the rounded 2,200 to 2,800-hour projection in
Table~\ref{tab:molecular-training-time}. It assumes unchanged update
throughput and excludes setup, validation, checkpointing, final
evaluation, restarts, and queue time. The range describes observed
throughput scenarios, not a confidence interval. The exact cumulative
duration of the pilot across restarts was not recovered. The projection
does not establish that 84 epochs would be needed or sufficient for
convergence.

\FloatBarrier
\paragraph{Key takeaway.}
The pilot makes the missing GuacaMol comparison concrete, but it remains
an early checkpoint. All 1,967 valid molecules in the primary bank are
distinct, yet only 464 are connected. Thus, approximately 76.4\% of
valid outputs contain multiple fragments, and strict VUN is only
4.63\%. The run exposes a substantial training cost and a connectivity
limitation at this budget. It does not establish the quality of a fully
trained CFM model and is excluded from efficiency ratio and ranking
claims against \methodname.

\FloatBarrier
\subsection{Inference Cost and Training Time}
\label{app:external-accounting}

\FloatBarrier
\subsubsection{Estimated Dense Inference Work}

\paragraph{Why this analysis.}
The main comparison asks how many distinct valid candidates a generator produces relative to its inference work.
A single sampling step does not imply the same cost across architectures.
We therefore count the dense projections executed during generation and report measured training time separately.
The resulting estimates support Figure~\ref{fig:molecular-validity-efficiency} in the main text.

\paragraph{Definition.}
Let $N$ be the number of raw proposals, $U_N$ the number of distinct valid outputs, and $W_N$ the estimated dense work for that bank.
With $Y_{\mathrm{VU}}=U_N/N$ and average work $C_{\mathrm{dense}}=W_N/N$ per proposal,
\begin{equation}
 E_{\mathrm{VU}}=10^{12}\frac{U_N}{W_N}
 =10^{12}\frac{Y_{\mathrm{VU}}}{C_{\mathrm{dense}}}.
 \label{eq:molecular-dense-efficiency}
\end{equation}
Yield is a fraction in this equation.
Invalid and duplicate proposals consume work and remain in the denominator.
Because distinct yield depends on the number of attempts, this normalization does not predict how many distinct molecules a new run would find under a common compute budget.

\paragraph{Counting convention.}
We count two operations per multiply-accumulate in each executed learned weight projection.
This includes linear layers, recurrent projections, and convolutional projections where applicable, summed over all network evaluations.
For a linear map from width $a$ to width $b$ applied at $m$ positions, the count is $2mab$.
Convolutions use the corresponding kernel dimensions and output positions.
Bias additions, activations, normalization, attention products and reductions, graph aggregation, feature construction, sampling, molecular conversion, deduplication, and distribution evaluation are excluded.
For MoFlow, the count includes inverse $1\times1$ projections, but excludes weight-inverse setup and elementwise flow operations.
These are partial inference operation counts, not complete FLOPs or runtime measurements.

\paragraph{\methodname architectures.}
For an \methodname multilayer perceptron with input width $d$, $L$ hidden layers of width $H$, and output width $o$,
\begin{equation}
 C_{\mathrm{dense}}=2\left[dH+(L-1)H^2+Ho\right].
\end{equation}
The output width includes both atom and bond heads.
For QM9, $d=105$, $L=3$, $H=768$, and $o=252$, which gives $2{,}907{,}648$ dense operations per proposal.
Using the mean VU of $0.7821$ gives $E_{\mathrm{VU}}=268{,}980$.
The corresponding ZINC250K and full GuacaMol counts are $12{,}630{,}016$ and $33{,}991{,}680$ operations.
The corresponding $(d,L,H,o)$ values are $(129,3,1024,3990)$ and $(178,3,768,20416)$.
Their output canvases have 38 and 88 nodes, respectively; the size encoding widths are 33 and 82.
Table~\ref{tab:molecular-dense-audit} preserves the available efficiency estimates from the main molecular comparison and gives the counts supporting the \methodname versus CFM ratios.

\begin{table}[htbp]
 \centering
 \caption{Dense inference work and efficiency for the molecular comparison. VU is a percentage of raw proposals, and MFLOP denotes $10^6$ counted operations per proposal, including all network evaluations. \methodname counts use local architectures; published CFM counts use reconstructed architectures. The GuacaMol CFM count uses the local three-epoch pilot. Estimates marked $^{u}$ retain the values used in the main figure, but lack a complete recovered layer count. Dashes indicate unavailable counts. The pilot is excluded from ranking and ratio claims.}
 \label{tab:molecular-dense-audit}
 \setlength{\tabcolsep}{5pt}
 \begin{tabular}{@{}llrrr@{}}
 \toprule
 Dataset & Method & VU (\%) & Dense MFLOP & $E_{\mathrm{VU}}$ \\
 \midrule
 QM9 & GraphARM & $\approx86.30$ & --- & --- \\
 & G2PT-base & $\approx95.83$ & --- & $67.1^{u}$ \\
 & CatFlow (100) & $\approx99.76$ & --- & $169^{u}$ \\
 & DeFoG (500) & $\approx95.72$ & 86,234.240 & 11.1 \\
 & MELD & $\approx96.49$ & --- & $1.72^{u}$ \\
 & MolGAN & $\approx10.20$ & 0.714 & 142,945 \\
 & MoFlow & 99.20 & 47.024 & 21,096 \\
 & CFM-CSD & $\approx82.62$ & 167.689 & 4,927 \\
 & CFM-ECLD & $\approx85.45$ & 167.689 & 5,096 \\
 & \textbf{\methodname} & $78.21\pm0.84$ & 2.908 & \textbf{268,980} \\
 \midrule
 ZINC250K & GraphARM & $\approx87.75$ & --- & --- \\
 & CatFlow (100) & $\approx99.21$ & --- & $14.3^{u}$ \\
 & DeFoG (500) & $\approx99.21$ & 1,359,621.888 & 0.730 \\
 & MELD & $\approx99.99$ & --- & $0.265^{u}$ \\
 & MoFlow & 99.99 & --- & $666^{u}$ \\
 & CFM-CSD & $\approx73.43$ & 2,694.691 & 272 \\
 & CFM-ECLD & $\approx93.03$ & 2,694.691 & 345 \\
 & \textbf{\methodname} & $90.66\pm4.00$ & 12.630 & \textbf{71,781} \\
 \midrule
 GuacaMol & SMILES LSTM & $\approx95.9$ & --- & $190^{u}$ \\
 & G2PT-base & $\approx94.6$ & --- & $9.07^{u}$ \\
 & DeFoG (500) & 99.0 & 6,685,054.720 & 0.148 \\
 & MELD & $\approx100.0$ & --- & --- \\
 & CFM-ECLD, 3 epochs & 19.67 & 13,312.918 & 14.8 \\
 & \textbf{\methodname} & $77.61\pm9.35$ & 33.992 & \textbf{22,831} \\
 \bottomrule
 \end{tabular}
\end{table}

\paragraph{External architectures and sampling.}
Graph Transformer estimates use reference canvases of 9, 38, and 88 nodes for QM9, ZINC250K, and GuacaMol.
These describe a maximum-canvas workload, not an observed average over the sampled molecular sizes or padded batches.
Autoregressive costs must include the full generated sequence, and iterative costs must include all network calls, including any final denoiser.
We do not replace these counts with parameter count multiplied by nominal sampling steps.
The DeFoG reconstruction gives $86{,}234.240$, $1{,}359{,}621.888$, and $6{,}685{,}054.720$ MFLOP for 500 evaluations on the three datasets.
The audited QM9 counts for MolGAN and MoFlow are $0.714$ and $47.024$ MFLOP, respectively.
External configurations are associated with published quality, but the precise checkpoints that produced those values were not authenticated.
Estimates marked $^{u}$ in Table~\ref{tab:molecular-dense-audit} retain the values used in the main figure; a complete layer-level reconstruction was not recovered for those entries.
They do not support the \methodname versus CFM ratios.
The approximate QM9 VU for DeFoG uses the published relaxed validity of 99.4\% and uniqueness of 96.3\%; Table~\ref{tab:qm9-generators} retains the standard validity of 99.3\%.
We therefore do not multiply columns based on different validity rules.

\paragraph{Sensitivity of the CFM reconstruction.}
The QM9 reconstruction uses a global feed-forward width of 128 based on the paper description.
Using the constructor default of 2048 changes the parameter count from 5.925 to 8.154 million and dense work from 167.689 to 172.112 MFLOP.
Holding published quality fixed, the corresponding CFM-ECLD efficiencies are approximately 5,096 and 4,965.
The \methodname efficiency ratio is then 52.8 or 54.2, respectively.
Thus, the reported advantage is stable across these two assumptions, although neither scenario authenticates the original checkpoint.
The ZINC250K CFM reconstruction gives the approximately $208\times$ ratio reported in the main text.
The local GuacaMol configuration and its limited training budget are described in Section~\ref{app:cfm-guacamol}.

\FloatBarrier
\paragraph{Key takeaway.}
The compact \methodname architecture requires much less dense projection work per proposal than the reconstructed CFM architectures.
This yields the reported efficiency advantage despite lower VU and worse FCD on QM9 and ZINC250K.
The ratios describe the stated partial operation count and source protocols, not measured generation speedups.

\FloatBarrier
\subsubsection{Training Time Audit}

\paragraph{Why this comparison.}
Low inference cost can require substantial training work.
Table~\ref{tab:molecular-training-time} therefore reports the training durations used in the resource panel of the main text.
It provides context across \methodname, DeFoG, Cometh, and CFM while preserving differences in timing scope and hardware.

\begin{table}[htbp]
 \centering
 \caption{Training time in hours for the methods discussed in the main text. \methodname measures synchronized optimizer updates; external values use the reported elapsed training durations. The GuacaMol CFM entry is a constant-throughput projection to 84 epochs from the local three-epoch pilot. It is not a measured full training duration. A dash denotes unavailable evidence.}
 \label{tab:molecular-training-time}
 \begin{tabular}{@{}lrrr@{}}
 \toprule
 Method & QM9 & ZINC250K & GuacaMol \\
 \midrule
 \methodname & $0.2999\pm0.0012$ & 2.0983 & $14.8468\pm0.0697$ \\
 DeFoG~\citep{qin2025defog} & 6.5 & 14 & 141 \\
 Cometh~\citep{siraudin2025cometh} & 6 & --- & $\approx96$ \\
 CFM~\citep{roos2026categoricalflowmaps} & $\approx48$ & $\approx144$ & $\approx2{,}200$ to $2{,}800$ \\
 \bottomrule
 \end{tabular}
\end{table}

\paragraph{\methodname measurements.}
QM9 uses the $3\times768$ generator trained for 120,000 updates, and full GuacaMol uses the same depth and width for 240,000 updates.
Both report the mean and sample standard deviation across seeds 42, 43, and 44.
The ZINC250K duration is the available seed-42 measurement for the $3\times1024$ model at 120,000 updates, so no across-seed deviation is reported.
These timers cover the completed training budget even when validation selects an earlier checkpoint.
They exclude setup, compilation, warmup, validation, checkpointing, and final evaluation.
The historical ZINC timer also excludes input preparation and perturbation-scale calculation.
The recovered timing records do not certify the GPU model for every \methodname campaign, so we do not assign a common GPU specification to these durations.

\paragraph{Published measurements.}
DeFoG reports 6.5 hours for QM9 without explicit hydrogen nodes, 14 hours for ZINC250K, and 141 hours for GuacaMol on one A100-SXM4 80GB.
Cometh reports 6 hours on QM9 using a V100 or A10, without resolving which device produced that duration, and about four days on GuacaMol using two A100 GPUs.
The latter corresponds to about 192 allocated GPU-hours.
CFM reports about two days per QM9 model and six days per ZINC250K model with RTX A6000 hardware, with the GPU count unspecified.
Its reported schedules use 5,000 and 500 epochs, respectively.
These measurements follow the training-time sections of the original papers.
The local GuacaMol projection follows the calculation in Section~\ref{app:cfm-guacamol}; it is not a published CFM result.

\FloatBarrier
\paragraph{Key takeaway.}
\methodname has the lowest recorded training durations among these entries, but the table does not establish a speedup under matched hardware, timing scope, or final quality.
The controlled ZINC250K experiment in Section~\ref{app:zinc-time-budget} supplies the direct comparison under a common active training budget.

\section{RQ2: Scaffold Completion}
\label{app:scaffold}

Scaffold completion tests whether a compact generator can rapidly supply
molecular candidates that preserve a given core and satisfy screening
criteria. We train MoLeR~\citep{maziarz2022moler} from scratch on the same
ZINC scaffold split as \methodname. Both methods receive the same held-out
scaffold queries and are evaluated with the same screening rules.
Architectures, training objectives, and training budgets differ.

We evaluate two resource constraints. The primary experiment matches the
time available for generation and screening on each query. A complementary
experiment matches the number of proposals and measures the resulting
distinct candidate libraries. Generation-to-SMILES timing is recorded
separately for that fixed proposal bank.

\paragraph{Summary.}
At matched time, \methodname screens $7.39\times$ as many proposals on
average and returns $10.05\times$ as many filter-passing proposals. MoLeR covers more queries and more often
returns the higher-QED passing candidate. We report outcomes over all
100 queries in each replicate, including failures of either or both methods.
At a fixed budget of 100,000 proposals, \methodname returns
$20{,}245\pm755$ distinct filter-passing completions versus
$16{,}684\pm79$ for MoLeR, with an observed $44.3\times$ reduction in
generation-to-SMILES time under that separate protocol.
\Cref{tab:scaffold-matched-time-results,tab:scaffold-zinc-summary}
report the two evaluations.

\subsection{Shared Training and Scaffold-Only Inference}
\label{app:scaffold-protocol}

The benchmark contains 100 scaffold queries from the held-out ZINC partition.
Molecules are grouped by canonical, nonisomeric Bemis-Murcko scaffold identity before splitting.
The frozen split contains 68,296 training, 4,017 validation, and 8,035 test scaffold identities.
The query scaffold identities are therefore absent from the training partition of both methods.
The decoders receive only the scaffold, without the corresponding held-out molecule or its target size.

\methodname uses the scaffold-conditioned rooted forest decoder described in Appendix~\ref{app:architectures}.
Training combines the Drift distribution matching objective, structural constraints, and a rarity reward, as specified in Appendices~\ref{app:objectives} and~\ref{app:rarity}.
During training, the condition includes the number of atoms added to the scaffold in each example.
At inference, this number is sampled from a prior estimated only from the training partition.
The selected checkpoints are reused without further training.

The three \methodname checkpoints use training seeds 42, 43, and 44.
Seeds 43 and 44 complete 30,000 updates and use checkpoints selected at updates 21,000 and 20,000, respectively.
Seed 42 uses the archived checkpoint from the original experiment.
The original validation procedure supplies a target size and first prioritizes feasibility.
It then combines completion success, distinct completion yield, and scaffold coverage, with penalties for feature discrepancy and constraint violations.
The selected checkpoints for seeds 43 and 44 do not meet the original validation feasibility thresholds.
They are retained under the same selection rule and are not reselected using the screening results reported here.

The size prior records the number of heavy atoms added to each training scaffold.
It is conditioned on scaffold size when that size occurs in training, with the pooled distribution of training additions as a fallback.
Only positive additions that yield a supported total size of at most 38 heavy atoms are retained, and the distribution is renormalized.
The procedure uses neither the held-out target size nor clipping based on that target.
Core atom states and their ordering are reconstructed from the public scaffold SMILES.
Randomness is assigned deterministically from the generation seed, query identity, and attempt index.
Unsupported cores and failed conversions remain counted attempts.
Failed proposals are not replaced, and conversion uses the complete graph without chemical repair or rescue of the largest component.

MoLeR is trained from scratch, without external pretrained weights.
The training runner exports the original ZINC training and validation partitions using the frozen scaffold assignments.
It does not create a new random split.
The benchmark test partition is excluded from preprocessing and checkpoint selection.
The test input required by the MoLeR pipeline is populated with a copy of the validation partition, rather than the benchmark test molecules.
Training uses the MoLeR VAE procedure with validation-based checkpoint selection.

We evaluate one fixed MoLeR checkpoint with sampling seeds 41, 42, and 43 and beam size 1.
The decoder receives the scaffold and a latent sampled from the prior, without a requested final size.
The initial diagnostic run with sampling seed 123 is excluded from the reported means.
Thus, the \methodname results describe three trained checkpoints, while the MoLeR results describe sampling variability for one trained checkpoint.
The shared scaffold queries provide the pairing for comparisons within each query.
The repeats do not constitute paired training experiments.

\subsection{Shared Usability and Screening Criteria}
\label{app:scaffold-screening}

A usable completion must pass chemical sanitization of the complete molecule, be connected, and contain the supplied scaffold as a substructure.
An induced exact scaffold match is recorded separately and is not required by the primary criterion.
The completion does not need to match the heavy atom count of the held-out reference molecule.
For distinct-library metrics, usable outputs are deduplicated by canonical, nonisomeric SMILES within each query.
The same molecular identity can contribute to different queries, but repeated identities within a query count only once.

Screening requires QED $\geq0.7$, SA $\leq4$, and no additional PAINS alert types relative to the scaffold.
QED measures drug likeness \citep{bickerton2012qed}, and SA estimates synthetic accessibility \citep{ertl2009sa}.
We use the RDKit implementation of QED, the RDKit Contrib implementation of SA, and the RDKit PAINS filter catalog.
If $A(x)$ and $A(c)$ are the sets of alert types for completion $x$ and scaffold $c$, the PAINS rule is
\[
    A(x)\setminus A(c)=\emptyset.
\]
Alert types already present in the core are permitted.
The rule therefore does not require every completion to be free of PAINS alerts.
These screening criteria are applied during evaluation and are not included in the training objective of \methodname or used to select the reported checkpoints.

Novelty relative to training molecules is not required by the screening criterion.

\subsection{Generation and Screening at Matched Time}
\label{app:scaffold-matched-time}

This primary application experiment tests whether faster generation allows more candidates to be evaluated within a short time budget; its aggregate results appear in the main text.
The experiment uses the same 100 scaffold queries and the same selected checkpoints, with no further training.
The three replicates combine \methodname training seeds 42, 43, and 44 with MoLeR sampling seeds 41, 42, and 43, respectively.
Both methods run on the same NVIDIA A100-SXM4-40GB GPU, with models loaded and warmed up before timing.

For each scaffold and replicate, MoLeR generates one batch of 256 proposals with beam size 1 and six decoding workers.
The measured time $T$ includes query preparation, latent sampling, decoding to SMILES, communication with the scoring process, screening, and selection of the eligible candidate with the highest QED.
This measured interval defines the time budget for \methodname on the same query.
The budget therefore varies across queries and replicates.

\methodname generates batches of 256 graphs and converts them directly into RDKit molecules for screening.
It does not serialize every candidate to SMILES inside the timed loop.
Only candidates fully scored and selected by time $T$ are eligible.
Both methods apply the same usability checks and QED, SA, and scaffold relative PAINS filters.
A query with no eligible candidate is recorded as a failure.
The first eligible candidate attaining the maximum QED is retained in a tie, and comparisons between methods use a tolerance of $10^{-12}$.
Model loading, warmup, result I/O, and export of the selected \methodname molecule to SMILES are outside the timed interval.
The largest recorded overrun of a final \methodname operation is 0.005853 seconds, and results completed after $T$ are excluded.
This protocol measures candidate selection within a common time budget.
Its output representation and timing boundaries differ from the generation-to-SMILES measurement in Section~\ref{app:scaffold-timing}.

\Cref{tab:scaffold-matched-time-results} reports all three replicates over
the same 100-query panel. The mean of the three replicate-level mean
deadlines is $6.085\pm0.103$ seconds per query. The corresponding numbers
of proposals fully screened per query are $1{,}892.7\pm63.7$ for
\methodname and $256.0\pm0.0$ for MoLeR. The ratio of these means is
$7.39$.

The saved per-query records also contain the number of distinct proposals that
pass all screening criteria before the deadline. These counts are
$547.3\pm15.9$ per query for \methodname and $54.5\pm0.4$ for MoLeR.
Their ratio of means is $10.05$.

Let $P^{(m)}_{sr}$ and $B^{(m)}_{sr}$ denote passing and screened proposal
counts for method $m$, scaffold $s$, and replicate $r$. Each replicate
contains $S=100$ queries. Its mean passing count per query and passing
fraction are
\[
 \overline P^{(m)}_r=\frac{1}{S}\sum_{s=1}^{S}P^{(m)}_{sr},
 \qquad
 \pi^{(m)}_r=\frac{\sum_{s=1}^{S}P^{(m)}_{sr}}
 {\sum_{s=1}^{S}B^{(m)}_{sr}}.
\]
We report means and sample SDs of these replicate summaries.
Passing fractions are $28.95\pm1.58$\% for \methodname and
$21.27\pm0.14$\% for MoLeR. These are averages of the three replicate
fractions, rather than an average of per-query percentages.

All aggregate uncertainties in the main matched-time table are sample
standard deviations of the three replicate summaries, not standard
deviations across individual queries or standard errors. The means and
SDs are computed from the full per-query records before rounding. The same
scaffolds are reused across replicates, and the two methods' seeds
represent different sources of variation.

To account for every query, we distinguish outcomes where both methods return passing candidates, and one has a higher maximum QED, where only one method returns a passing candidate, and where neither succeeds. Equal-QED outcomes are recorded separately; none occur here. These
categories partition the 100 queries in each replicate. Coverage is a separate summary that counts all queries where the specified method
returns a passing candidate.

\begin{table}[tbp]
  \centering
  \caption{Generation and screening at matched time. Each column reports one replicate on the same 100 scaffold queries; headers identify the \methodname training seed and MoLeR sampling seed. Screened counts are means per query; passing counts are totals over the panel. Passing fractions use all screened proposals. The outcome block partitions all 100 queries. Higher-QED counts require passing candidates from both methods; exclusive successes and failures of both methods remain explicit.}
  \label{tab:scaffold-matched-time-results}
  \setlength{\tabcolsep}{4pt}
  \renewcommand{\arraystretch}{1.12}
  \begin{tabular*}{\linewidth}{@{\extracolsep{\fill}}lrrr@{}}
    \toprule
    Metric & 42 / 41 & 43 / 42 & 44 / 43 \\
    \midrule
    Mean deadline per query (s) & 6.176 & 6.105 & 5.974 \\
    Mean proposals screened, \methodname & 1,822.94 & 1,947.70 & 1,907.35 \\
    Mean proposals screened, MoLeR & 256.0 & 256.0 & 256.0 \\
    Filter-passing proposals, \methodname & 55,152 & 52,975 & 56,064 \\
    Filter-passing proposals, MoLeR & 5,434 & 5,485 & 5,417 \\
    Passing fraction (\%), \methodname & 30.25 & 27.20 & 29.39 \\
    Passing fraction (\%), MoLeR & 21.23 & 21.43 & 21.16 \\
    Queries with a hit, \methodname & 79 & 79 & 79 \\
    Queries with a hit, MoLeR & 85 & 85 & 85 \\
    \midrule
    \multicolumn{4}{@{}l}{\textit{Query outcomes (counts per 100 queries)}} \\
    Both pass: higher QED, \methodname & 23 & 34 & 33 \\
    Both pass: higher QED, MoLeR & 55 & 44 & 46 \\
    Equal QED with hits from both & 0 & 0 & 0 \\
    Hit from \methodname only & 1 & 1 & 0 \\
    Hit from MoLeR only & 7 & 7 & 6 \\
    No hit from either method & 14 & 14 & 15 \\
    \bottomrule
  \end{tabular*}
\end{table}

For the comparison of passing counts across scaffolds, we first average
the three replicate counts separately for each query and method.
\methodname has a higher mean passing count on 73 queries, MoLeR on 14,
and 13 queries tie. These are counts over the same 100-query panel.
Within individual replicates, the numbers of \methodname wins, MoLeR wins, and
ties are respectively $(71,14,15)$, $(72,14,14)$, and $(71,14,15)$.

Averaging the outcome counts across replicates, both methods pass, and
\methodname has a higher maximum QED on $30.0\pm6.1$ queries per 100;
MoLeR has higher maximum QED on $48.3\pm5.9$. Only \methodname passes
on $0.7\pm0.6$ queries, only MoLeR passes on $6.7\pm0.6$, and neither
passes on $14.3\pm0.6$. These five observed outcomes account for the
entire query panel, without assigning an artificial QED to failures or
combining feasibility and quality into a new score.

MoLeR covers 85 queries in every replicate, compared with 79 for
\methodname. Its higher coverage and more frequent higher-QED outcomes
coexist with the greater proposal throughput of \methodname. These are
descriptive comparisons on repeated evaluations of the same 100
scaffolds, not significance tests on 300 independent queries.

\Cref{fig:scaffold-matched-time} shows the selected candidates for scaffold 23 in all three replicates.
The scaffold was specified before examining the timed results.
The third replicate corresponds to the 5.13-second example in the main text.
Within this interval, \methodname generates 3,072 graphs and completes evaluation of 2,830 proposals, of which 469 pass screening.
MoLeR generates and evaluates 256 proposals, of which 102 pass screening.
The highest QED is 0.919 for \methodname and 0.946 for MoLeR, with corresponding SA scores of 3.77 and 3.32.
Generated graphs that remain unevaluated at the deadline do not enter the selection.

\begin{figure*}[t]
  \centering
  \includegraphics[width=0.9\textwidth]{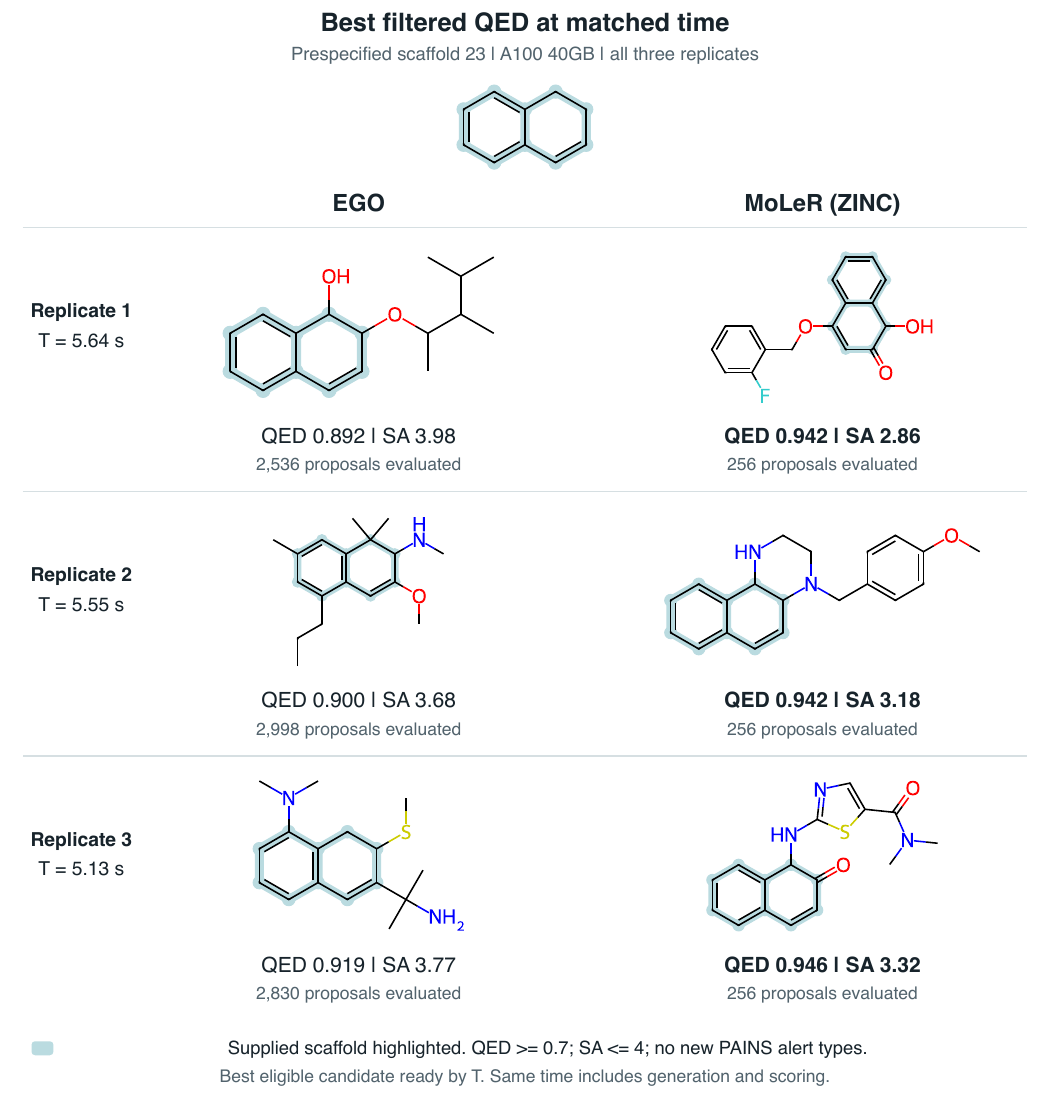}
  \caption{Highest QED completions that pass screening for the prespecified scaffold 23 at matched time. Each row shows one replicate, using \methodname training seeds 42, 43, and 44 and sampling seeds 41, 42, and 43 from the same MoLeR checkpoint trained on ZINC. The common budgets are 5.64, 5.55, and 5.13 seconds. The supplied scaffold is highlighted. \Cref{tab:scaffold-matched-time-results} reports results over all 100 queries.}
  \label{fig:scaffold-matched-time}
\end{figure*}

The matched-time experiment establishes higher screening throughput and
more filter-passing proposals for \methodname under the stated protocol. The next experiment
assesses distinct candidate libraries at a fixed proposal budget.
The screening filters are heuristic and do not establish experimental
usefulness.

\subsection{Distinct Candidate Libraries at a Fixed Proposal Budget}
\label{app:scaffold-results}

This complementary evaluation uses the same selected checkpoints and
100 held-out scaffold queries. Each method makes 1,000 proposals per
query, giving 100,000 proposals per run. There is no shared time deadline
in this experiment. Screening uses the criteria in
Section~\ref{app:scaffold-screening}, and distinct counts remove repeated
identities within each query.

Let $U_{sr}$ be the number of distinct usable completions for scaffold $s$ in run $r$, and let $F_{sr}$ be the number that also pass screening.
The distinct usable yield is $100\sum_s U_{sr}/100{,}000$.
The distinct filtered count is $\sum_s F_{sr}$, and the conditional fraction that passes screening is $100\sum_s F_{sr}/\sum_s U_{sr}$.
Coverage counts queries with $F_{sr}>0$.
The usable completion rate includes repeated proposals, while distinct usable yield removes them.
Invalid and failed attempts remain in all proposal-based denominators.

\Cref{tab:scaffold-zinc-summary} reports the aggregate results.
MoLeR produces more distinct usable completions, with a yield of $85.80\%$ compared with $66.78\%$ for \methodname.
After screening, the ordering reverses.
The fraction of distinct usable completions that passes the filters is approximately $30.3\%$ for \methodname and $19.4\%$ for MoLeR.
Consequently, \methodname returns $20{,}245$ distinct filtered completions on average, compared with $16{,}684$ for MoLeR.
The ratio of these mean counts is $1.21$.

\begin{table}[tbp]
  \centering
  \caption{Scaffold completion at the same proposal budget. Each run uses 100 queries and 1,000 proposals per query. Values are the mean $\pm$ sample SD over three \methodname training seeds and three sampling seeds from one MoLeR checkpoint trained from scratch on ZINC. Distinct-library counts are deduplicated within each query. The filter passing fraction is conditional on distinct usable completions. Generation time excludes screening.}
  \label{tab:scaffold-zinc-summary}
  \setlength{\tabcolsep}{4pt}
  \renewcommand{\arraystretch}{1.12}
  \begin{tabular*}{\linewidth}{@{\extracolsep{\fill}}lrr@{}}
    \toprule
    Metric & \methodname & MoLeR \\
    \midrule
    Usable completions (\%) & $77.67\pm2.57$ & $97.58\pm0.09$ \\
    Distinct usable yield (\%) & $66.78\pm1.15$ & $85.80\pm0.12$ \\
    Filter passing fraction (\%) & $30.3\pm1.1$ & $19.4\pm0.1$ \\
    Distinct filtered completions & $20{,}245\pm755$ & $16{,}684\pm79$ \\
    Scaffold coverage (/100) & $77.7\pm1.5$ & $86.3\pm0.6$ \\
    \midrule
    Generation to SMILES (s) & $41.82\pm8.39$ & $1{,}851.82\pm8.05$ \\
    \bottomrule
  \end{tabular*}
\end{table}

\Cref{tab:scaffold-zinc-seeds} gives the exact filtered counts for the individual runs.
Means are unweighted across runs, and sample SD uses denominator $n-1$.
Conditional fractions are calculated within each run before averaging.
Outputs from different runs are not pooled into a larger library.

\begin{table}[tbp]
  \centering
  \caption{Distinct completions that pass screening in each run. Each count is the sum of distinct counts within 100 queries at 1,000 proposals per query. MoLeR sampling seeds share one checkpoint trained with seed 42.}
  \label{tab:scaffold-zinc-seeds}
  \setlength{\tabcolsep}{5pt}
  \renewcommand{\arraystretch}{1.12}
  \begin{tabular*}{\linewidth}{@{\extracolsep{\fill}}llrr@{}}
    \toprule
    Method & Seed type & Seed & Distinct filtered count \\
    \midrule
    \methodname & Training & 42 & $19{,}527$ \\
    \methodname & Training & 43 & $20{,}175$ \\
    \methodname & Training & 44 & $21{,}033$ \\
    \midrule
    MoLeR & Sampling & 41 & $16{,}763$ \\
    MoLeR & Sampling & 42 & $16{,}683$ \\
    MoLeR & Sampling & 43 & $16{,}606$ \\
    \bottomrule
  \end{tabular*}
\end{table}

For the query comparison, we first average the distinct filtered counts across the three runs separately for each scaffold and method.
\methodname has a higher mean count on 51 scaffolds, with 13 ties and 36 MoLeR wins.
Its larger total library therefore coexists with lower coverage, at $77.7\pm1.5$ queries compared with $86.3\pm0.6$ for MoLeR.
These counts describe the observed query comparisons and are not significance tests.
The reported SD values also describe different sources of variation for the two methods.
They do not estimate uncertainty over independently trained MoLeR models.

\Cref{fig:scaffold_completions} shows examples of distinct completions around the same supplied core.
\Cref{fig:scaffold_large_extensions} illustrates completions with larger additions to the core.
Both figures retain qualitative examples from the earlier comparison with externally pretrained MoLeR.
The quantitative results in \Cref{tab:scaffold-zinc-summary} use the new MoLeR checkpoint trained from scratch on ZINC.

\begin{figure*}[t]
  \centering
  \includegraphics[width=\textwidth]{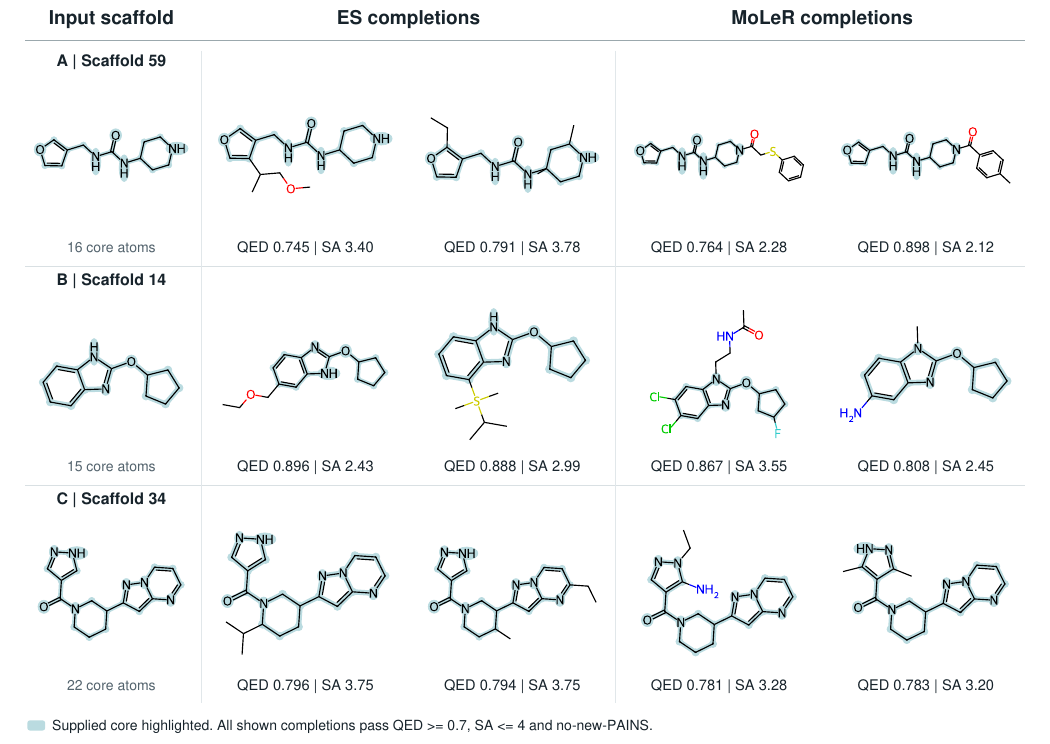}
  \caption{Examples of scaffold completions from \methodname with training seed 43 and externally pretrained MoLeR with sampling seed 42. The supplied core is highlighted. All displayed molecules pass the screening criteria. Three scaffolds were selected by a fixed hash ordering among queries with at least two distinct passing completions from both methods. The first two distinct passing completions in proposal order are shown for each method and scaffold.}
  \label{fig:scaffold_completions}
\end{figure*}

\begin{figure*}[t]
  \centering
  \includegraphics[width=\textwidth]{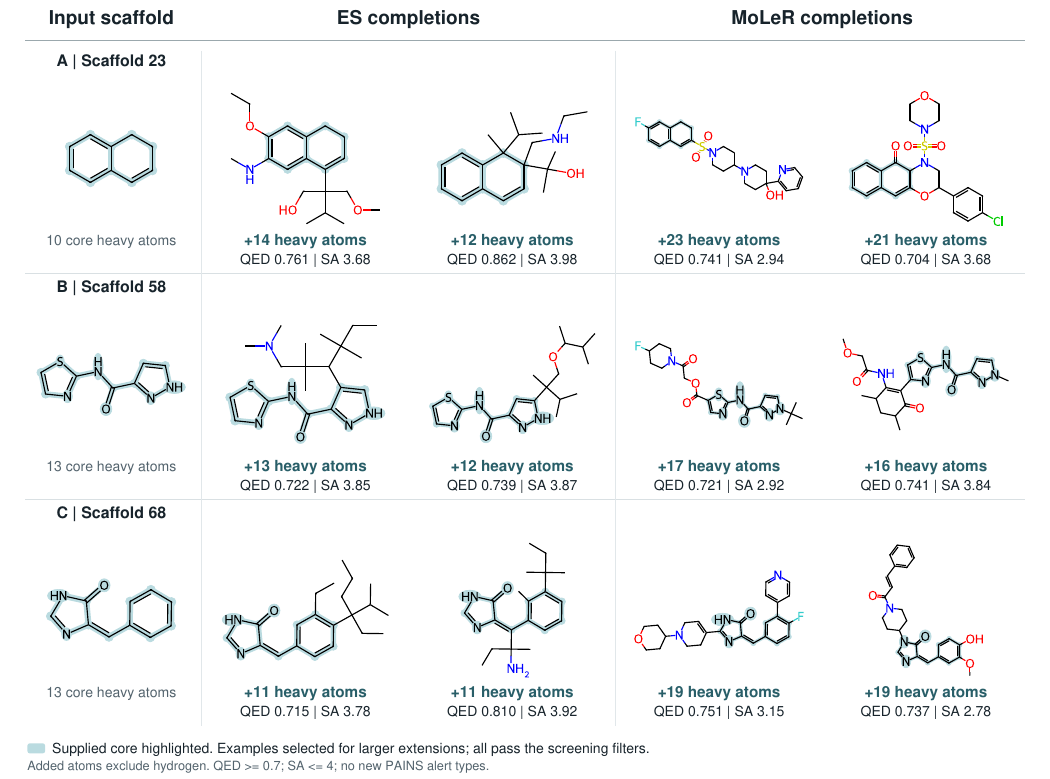}
  \caption{Selected examples of larger scaffold extensions from \methodname with training seed 44 and externally pretrained MoLeR with sampling seed 42. The supplied core is highlighted, and all displayed molecules pass screening. Scaffolds were ranked by the second-largest addition of heavy atoms from \methodname among queries with at least two distinct passing completions from both methods. The three selected scaffolds show the two largest passing extensions from each method. These examples illustrate larger extensions rather than typical outputs.}
  \label{fig:scaffold_large_extensions}
\end{figure*}

\subsection{Generation to SMILES Time}
\label{app:scaffold-timing}

The timing row in \Cref{tab:scaffold-zinc-summary} measures generation of the full proposal bank and conversion to SMILES.
For \methodname, the interval includes size sampling, synchronized graph generation, and conversion of the complete graphs.
It excludes setup, compilation, warmup, file I/O, and subsequent screening.
For MoLeR, the interval includes latent sampling and decoding to SMILES, including the first decoding call.
It excludes model loading, file I/O, and subsequent screening.

The recorded \methodname times are 32.140, 46.343, and 46.973 seconds for training seeds 42, 43, and 44.
Their mean is $41.82\pm8.39$ seconds, compared with $1{,}851.82\pm8.05$ seconds for the three MoLeR sampling runs.
The ratio of mean times is
\[
    \frac{1{,}851.82}{41.8187}\approx44.3.
\]
This ratio describes the recorded generation runs under the stated timing boundaries.
It is not the mean of ratios between seeds and does not measure the complete screening pipeline.
The experiment in Section~\ref{app:scaffold-matched-time} separately matches the available time for generation and screening on the same GPU.

\section{RQ3: Beyond Molecular Generation}
\label{app:rq3}

We test the same one-step generator framework on neural architectures and
binary images. NAS measures the yield of distinct high-performing candidates;
MNIST tests distribution matching when graph feasibility is no longer a
constraint. These experiments help identify where the training approach is
useful and where gradient-based alternatives perform better.

\subsection{NAS-Bench-101}
\label{app:nas}

\noindent\textbf{Summary.}
NAS-Bench-101~\citep{ying2019nasbench101} tests whether the candidate-library
result extends beyond chemistry. We train on high-performing architectures
and measure how many distinct held-out elite architectures each method
returns within 10,000 proposals. Validity, mean accuracy, and feature
distance reveal the trade-offs involved in producing a larger library.

\noindent\textbf{Takeaway.}
\methodname returns more distinct held-out elites than STE or Soft
(Table~\ref{tab:nas}). Soft achieves higher validity and mean accuracy,
and closer feature-distribution matching. The result supports \methodname when the
aim is to obtain more qualifying alternatives for subsequent comparison.

\paragraph{Shared protocol and fitness.}
We compare \methodname, the straight-through estimator (STE), and
Soft using seeds 42--44 and 15,000 updates per run. The generator,
initialization, sampling streams, and checkpoint-selection rule are shared
within each seed. Each generator maps 128-dimensional noise through three
512-unit hidden layers to 21 binary edge decisions and five operation
choices on a seven-vertex graph. Pruning removes vertices outside
input-to-output paths.

The fitness in Eq.~\eqref{eq:app-nas-fitness} combines negative Energy
with a structural penalty. Energy compares edge and operation choices,
degrees, and path structure with those of training elites. The penalty,
with fixed weight 0.5, discourages missing input-to-output paths and more
than nine edges after pruning. Its purpose is to enforce the benchmark's
feasibility rules while learning the structure of high-performing designs.
There is no rarity reward or explicit duplicate penalty; Energy itself
rewards spread among generated samples.

\methodname evaluates this fitness on hard graphs. STE uses the same hard forward
values with derivatives from relaxed features; Soft evaluates the relaxed
objective. Each objective evaluation uses 64 generated graphs and 192
training-elite references; \methodname evaluates a population of 192 candidates.
Appendix~\ref{app:method-nas} defines the features, normalization,
fitness terms, optimizer settings, and common selector.

\paragraph{Data split and checkpoint selection.}
We use the benchmark's 108-epoch records and split architectures 80/10/10
with seed 2026. An architecture is \emph{elite} when its benchmark
validation accuracy is at or above the 90th percentile in the architecture
training partition. This threshold is fixed for all three partitions.
Training elites supply the target distribution; validation-partition
elites and a fixed noise bank support checkpoint selection. Initialization
is not eligible for selection.

The architecture partition and benchmark accuracy split serve different
purposes. A \emph{held-out test elite} is an architecture in the test
partition that qualifies using its benchmark \emph{validation accuracy}.
Its benchmark \emph{test accuracy} is a separate final metric.
Test-partition architectures are excluded from the training and selection
reference sets, and their benchmark labels are used only for final
evaluation. Benchmark test accuracy is never used for training or
checkpoint selection. Feature normalization and kernel bandwidth are
fitted only on training elites.

\paragraph{Final metrics and interpretation.}
Each selected model produces 10,000 proposals. Held-out elite yield
divides the number of distinct generated test elites by all
10,000 attempts, including invalid graphs and duplicates. Valid-and-unique
yield in Table~\ref{tab:nas} likewise divides distinct valid architectures
by all attempts. Architectures are identified by the benchmark's canonical
hash after pruning, so equivalent graph encodings do not count as different
designs. All valid final proposals are found in the benchmark lookup.
Mean test accuracy averages over these valid proposals, retaining
duplicates.

Feature distance is the squared maximum mean discrepancy (MMD$^2$)
between raw concatenated graph features from up to 1,500 valid proposals
and 1,500 test-partition elites. Its radial basis function kernel has a
bandwidth fixed from training elites. It measures agreement with elite
structure, separately from accuracy and distinct candidate yield.

The held-out elite yields in Table~\ref{tab:nas} correspond to
$221.7\pm7.5$ distinct test elites for \methodname, compared with $138.0\pm28.9$
for STE and $143.7\pm18.6$ for Soft.
Among valid proposals, the fractions qualifying as elite across \emph{all}
architecture partitions are $42.81\pm2.97$\%, $60.27\pm2.93$\%, and
$64.35\pm1.96$\%, respectively. These fractions include repeated
architectures. Together, the metrics show why a higher concentration of
elites need not produce a larger collection of distinct held-out elites.
All $\pm$ values are sample standard deviations across the three runs.

\paragraph{Training work and timing.}
The runs match update counts, not objective evaluations or training time.
Over 15,000 updates, \methodname evaluates 184.32 million candidate graphs plus
0.96 million unperturbed diagnostic graphs. STE and Soft each evaluate
0.96 million candidate graphs. Each method draws 2.88 million reference
examples. Table~\ref{tab:nas-cost} separates optimizer-update time,
compilation, and checkpoint validation; initialization and final
evaluation are excluded.

\begin{table}[htbp]
  \centering
  \caption{NAS training time in seconds, mean $\pm$ sample standard
  deviation across seeds 42--44. Columns measure separate stages of the
  15,000-update runs; none alone is total runtime.}
  \label{tab:nas-cost}
  \begin{tabular}{@{}lrrr@{}}
    \toprule
    Method & Update time & Compilation & Validation \\
    \midrule
    \methodname & $46.12\pm0.59$ & $48.41\pm0.75$ & $39.58\pm1.91$ \\
    STE & $34.06\pm0.38$ & $54.18\pm0.88$ & $30.84\pm1.01$ \\
    Soft & $34.37\pm0.65$ & $54.51\pm1.18$ & $16.36\pm1.28$ \\
    \bottomrule
  \end{tabular}
\end{table}

\subsection{Binarized MNIST Control}
\label{app:mnist}

\noindent\textbf{Summary.}
Binarized MNIST tests whether the NAS result extends to image generation
when every sampled output is already a valid binary array. We compare
image-distribution distances and predicted digit coverage to assess what
the methods learn beyond producing distinct outputs. The generator learns
from image features without digit labels or classifier feedback.

\noindent\textbf{Takeaway.}
Soft achieves better distribution matching and broader predicted digit
coverage than \methodname in every paired seed
(Tables~\ref{tab:mnist-summary-energy} and~\ref{tab:mnist-extra}).
\methodname produces almost no exact duplicates, but small pixel differences do
not establish semantic diversity. Its NAS candidate-yield advantage
therefore does not imply better generation on every discrete task.

\paragraph{Shared protocol and fitness.}
Images are $28\times28$ and binarized at intensity 128 on the 0--255 scale.
With split seed 2026, we reserve 500 images per digit for validation,
leaving 55,000 training and 5,000 validation images. The official test
set is used only after checkpoint selection.

All methods use the same three-layer, 768-unit generator, with
96-dimensional noise and 784 pixel logits. One forward pass and parallel
Bernoulli sampling produce each binary image. Within each seed, the
initial parameters, latent noise, pixel-sampling noise, and reference
samples are shared. Each run completes 30,000 updates with 96 generated
images and 192 references per objective evaluation; \methodname uses 192 candidates.

The fitness is the negative average Energy over six feature families:
pixels, two pooling scales, row and column projections, neighboring-pixel
differences, and global image statistics. These features compare fine
detail and larger shapes without requiring a trained classifier.
Normalization is fixed from training images, and each family's Energy is
divided by the square root of its dimension so that a larger feature vector
does not dominate solely through its size. There are no extra penalties
or rewards. Table~\ref{tab:method-mnist-features} lists the features, and
Eq.~\eqref{eq:app-mnist-fitness} defines the complete fitness.

\methodname uses hard images and mean-centered fitness without rank shaping.
STE preserves the hard forward Energy while using relaxed derivatives;
Soft evaluates relaxed images. Both gradient methods use temperature
0.75, and all final evaluations use binary samples.
Appendix~\ref{app:method-mnist} gives the remaining training settings.

\paragraph{Checkpoint selection.}
Every 500 updates, the same selector scores 2,000 proposals from a fixed
noise bank. It combines exact image uniqueness with agreement between
generated and validation feature means and variances, as defined in
Eq.~\eqref{eq:app-mnist-selection-distance} and the accompanying selection
rule. This rewards variety and distribution matching without classifier
feedback. Initialization is excluded. Digit labels are used only to
stratify the split and train the separate evaluator.

\paragraph{Independent evaluation.}
One frozen LeNet-style classifier is shared across all nine generator runs.
It is trained on the 55,000 model-training images and labels using seed
2026, 2,500 updates, batch size 256, and learning rate $10^{-3}$.
Its accuracy on real test images is 98.98\%; this is not an accuracy
measurement for generated images.

Distribution distances use 2,000 generated and 2,000 real test images.
LeNet Fr\'echet distance compares means and covariances of raw
128-dimensional classifier embeddings. Kernel inception distance (KID)
is computed here on \emph{LeNet} embeddings, using the polynomial kernel
$k(a,b)=(a^\top b/128+1)^3$, where $a,b\in\mathbb R^{128}$ are
standardized embeddings. Thus, these scores do not use Inception features.
Table~\ref{tab:mnist-summary-energy} reports Fr\'echet distance alongside
predicted digit coverage.

\begin{table}[t]
  \centering
  \caption{Controlled Energy comparison on binarized MNIST, mean $\pm$
  sample standard deviation over seeds 42--44. Distances use 2,000
  generated and 2,000 test images. Coverage uses all 10,000 generated
  images, with at least 50 predictions required to count a digit class.}
  \label{tab:mnist-summary-energy}
  \begin{tabular}{@{}lrr@{}}
    \toprule
    Method & LeNet Fr\'echet $\downarrow$
    & Predicted classes / 10 $\uparrow$ \\
    \midrule
    \methodname   & $599.8\pm47.1$ & $6.33\pm0.58$ \\
    STE  & $522.4\pm95.6$ & $7.67\pm0.58$ \\
    Soft & $\mathbf{420.6\pm89.1}$ & $\mathbf{8.33\pm0.58}$ \\
    \bottomrule
  \end{tabular}
\end{table}

We also compute MMD$^2$ using a radial basis function (RBF) kernel, both
on standardized LeNet embeddings and on the 49 raw means from
nonoverlapping $4\times4$ image blocks. The latter provides an image
distance independent of the classifier. For feature vectors $a$ and $b$
from either representation, the kernel is
$k(a,b)=\exp[-\|a-b\|_2^2/(2\beta)]$.

For each representation, $\beta$ is the median positive squared pairwise
distance on a fixed subset of 2,000 training images (seed 2028), floored
at $10^{-8}$. Embedding means and standard deviations are fitted on the
same subset, with a standard-deviation floor of 0.05. KID and RBF-MMD$^2$
use unbiased estimators. All metric settings are fixed before test
evaluation.

\paragraph{Distribution matching and digit coverage.}
Table~\ref{tab:mnist-extra} reports complementary distances and class-balance
diagnostics. Class metrics use the most likely LeNet label for each of
10,000 generated images, without confidence filtering. A digit is covered
when at least 50 predictions belong to it. Total variation (TV) is half
the sum of absolute differences between predicted class frequencies and
real test-label frequencies. Lower TV indicates closer agreement.
Class entropy is Shannon entropy divided by $\log(10)$, so one denotes
uniform predicted frequencies across all ten digits.

\begin{table}[htbp]
  \centering
  \caption{Additional MNIST diagnostics, mean $\pm$ sample standard
  deviation across seeds 42--44. LeNet KID uses classifier embeddings;
  pooled-image MMD$^2$ uses raw block means. Class metrics use predicted
  labels on 10,000 images.}
  \label{tab:mnist-extra}
  \begin{tabular}{@{}lrrr@{}}
    \toprule
    Metric & \methodname & STE & Soft \\
    \midrule
    LeNet KID $\downarrow$ & $0.4517\pm0.0490$ & $0.3868\pm0.0853$ & $0.3069\pm0.0718$ \\
    Pooled-image MMD$^2$ $\downarrow$ & $0.0414\pm0.0008$ & $0.0235\pm0.0094$ & $0.0145\pm0.0062$ \\
    Class-balance TV $\downarrow$ & $0.545\pm0.020$ & $0.463\pm0.045$ & $0.401\pm0.099$ \\
    Normalized class entropy $\uparrow$ & $0.628\pm0.035$ & $0.738\pm0.044$ & $0.754\pm0.108$ \\
    \bottomrule
  \end{tabular}
\end{table}

Soft improves on \methodname in Fr\'echet distance, KID, both RBF-MMD$^2$ measures,
coverage, entropy, and class-balance TV in each paired seed. STE also
improves mean Fr\'echet distance and coverage over \methodname
(Table~\ref{tab:mnist-summary-energy}). Coverage remains incomplete for
all methods (Table~\ref{tab:mnist-selection}).

\methodname's exact image uniqueness is $99.98\pm0.03$\%, despite its lower
predicted digit coverage. No generated image exactly matches a training
image in any run, but this pixel-level test cannot rule out copies with
small changes. Figure~\ref{fig:mnist-samples} presents sample grids for
the three methods; the quantitative comparison uses the full evaluation
sets described above.

\paragraph{Selected checkpoints and cost.}
Table~\ref{tab:mnist-selection} records the selected update for each seed.
Some STE and Soft checkpoints are selected early, but every run still
completes 30,000 updates. The selected iteration therefore differs from
the completed training budget.

\begin{table}[htbp]
  \centering
  \caption{MNIST checkpoint selection and predicted digit coverage.
  Entries follow seed order 42, 43, 44. Every run completes 30,000 updates;
  coverage is measured on 10,000 final images.}
  \label{tab:mnist-selection}
  \begin{tabular}{@{}lrr@{}}
    \toprule
    Method & Selected update & Predicted classes / 10 \\
    \midrule
    \methodname & 29,000 / 30,000 / 30,000 & 6 / 6 / 7 \\
    STE & 500 / 30,000 / 500 & 8 / 8 / 7 \\
    Soft & 1,500 / 28,000 / 1,000 & 8 / 9 / 8 \\
    \bottomrule
  \end{tabular}
\end{table}

\methodname spends $118.60\pm3.50$ seconds on optimizer updates, compared with
$67.27\pm0.23$ for STE and $67.70\pm0.15$ for Soft. These times exclude
compilation, checkpoint validation, evaluator training, and final
evaluation. \methodname evaluates 552.96 million candidate images plus 2.88 million
unperturbed diagnostic images; each gradient method evaluates 2.88 million
candidate images. All methods draw 5.76 million reference images.

Generating 10,000 binary images takes about 29--30 milliseconds for each
method. This includes sampling and transfer to host arrays, but excludes
warmup, classifier evaluation, metrics, and file writing. All three methods
retain the same one-step inference procedure; their differences concern
training cost and the distribution learned.

\begin{figure}[t]
  \centering
  \includegraphics[width=0.5\linewidth]{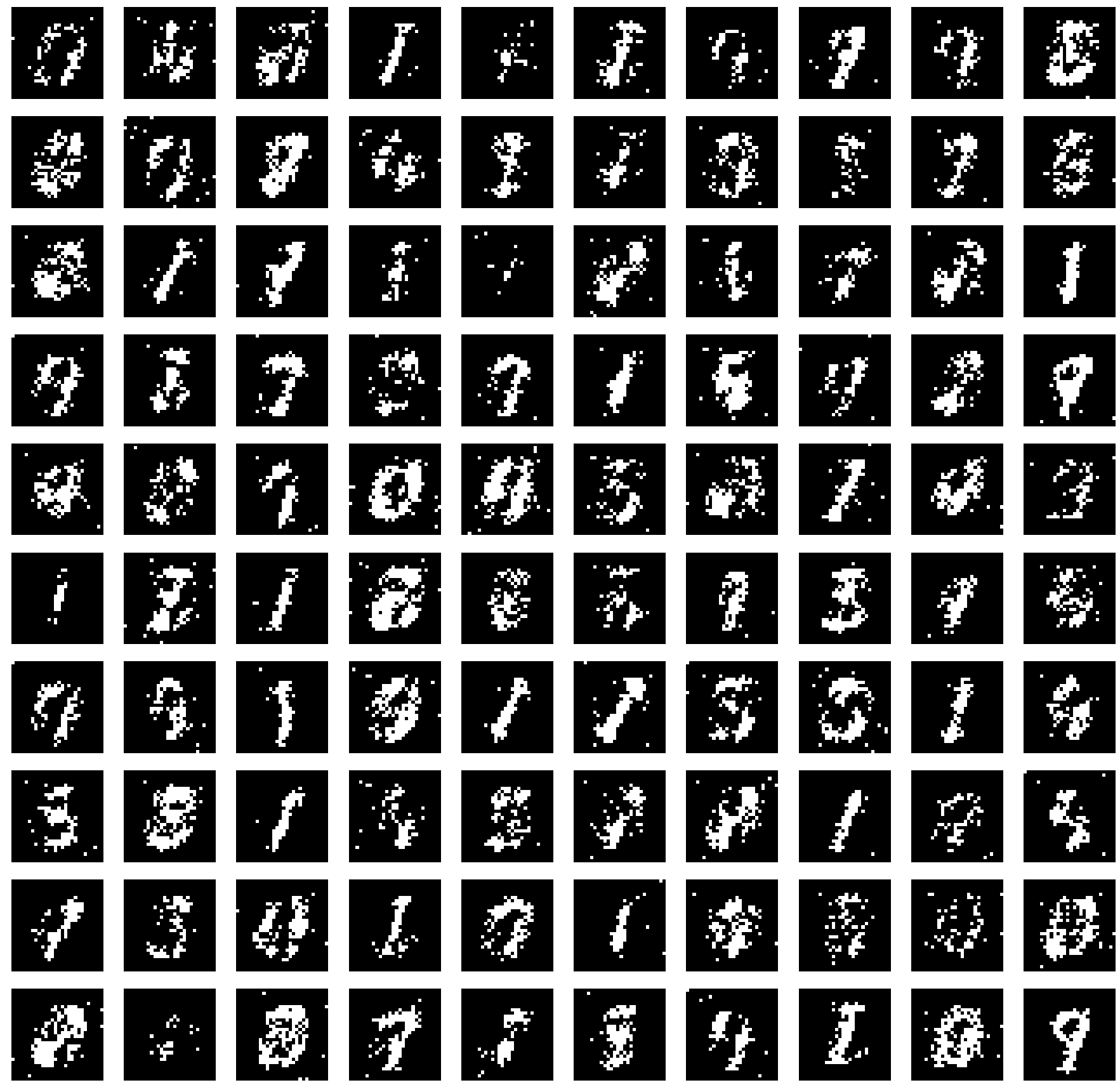}
  \par{\small \methodname}\medskip

  \includegraphics[width=0.5\linewidth]{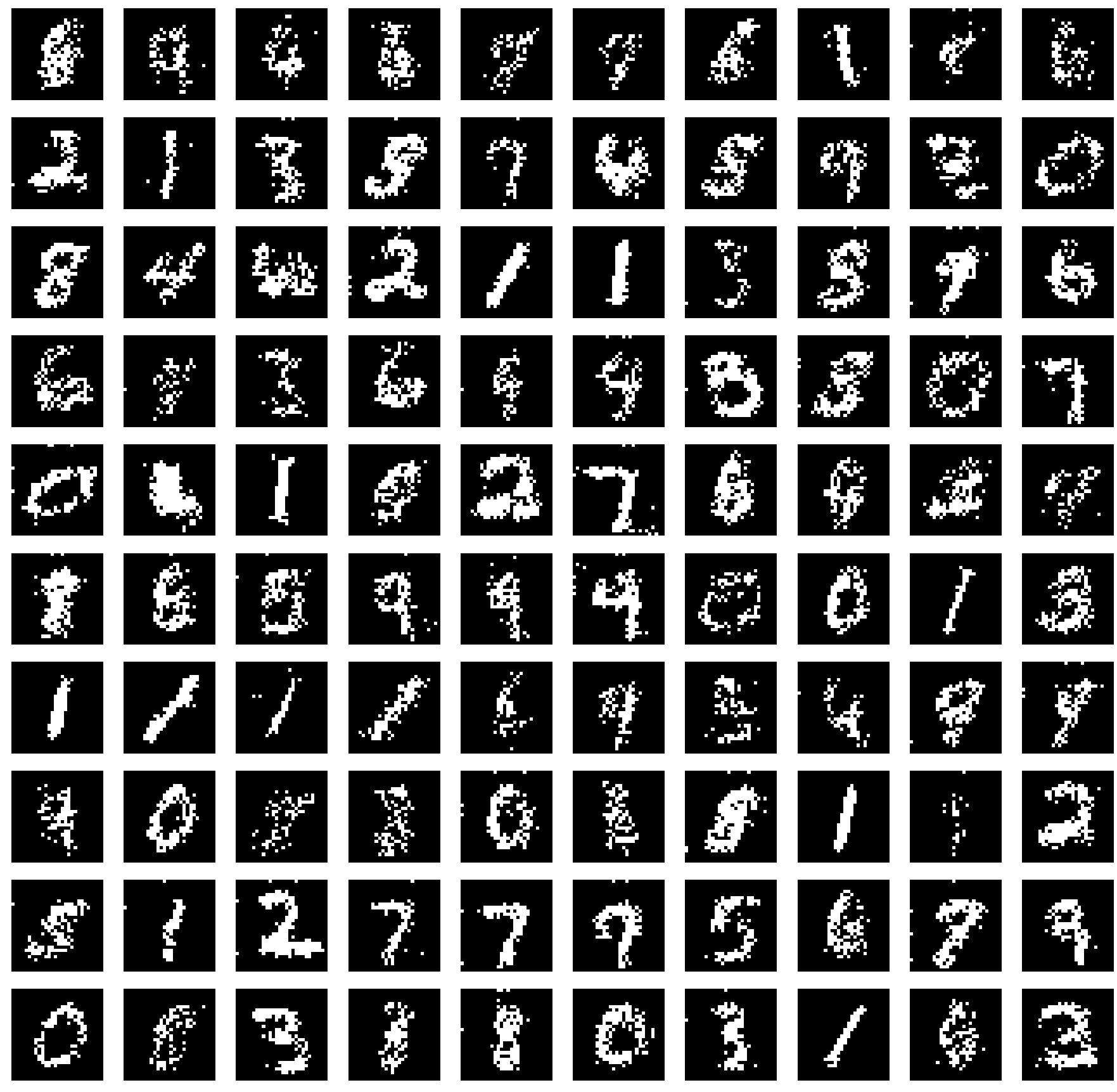}
  \par{\small STE}\medskip

  \includegraphics[width=0.5\linewidth]{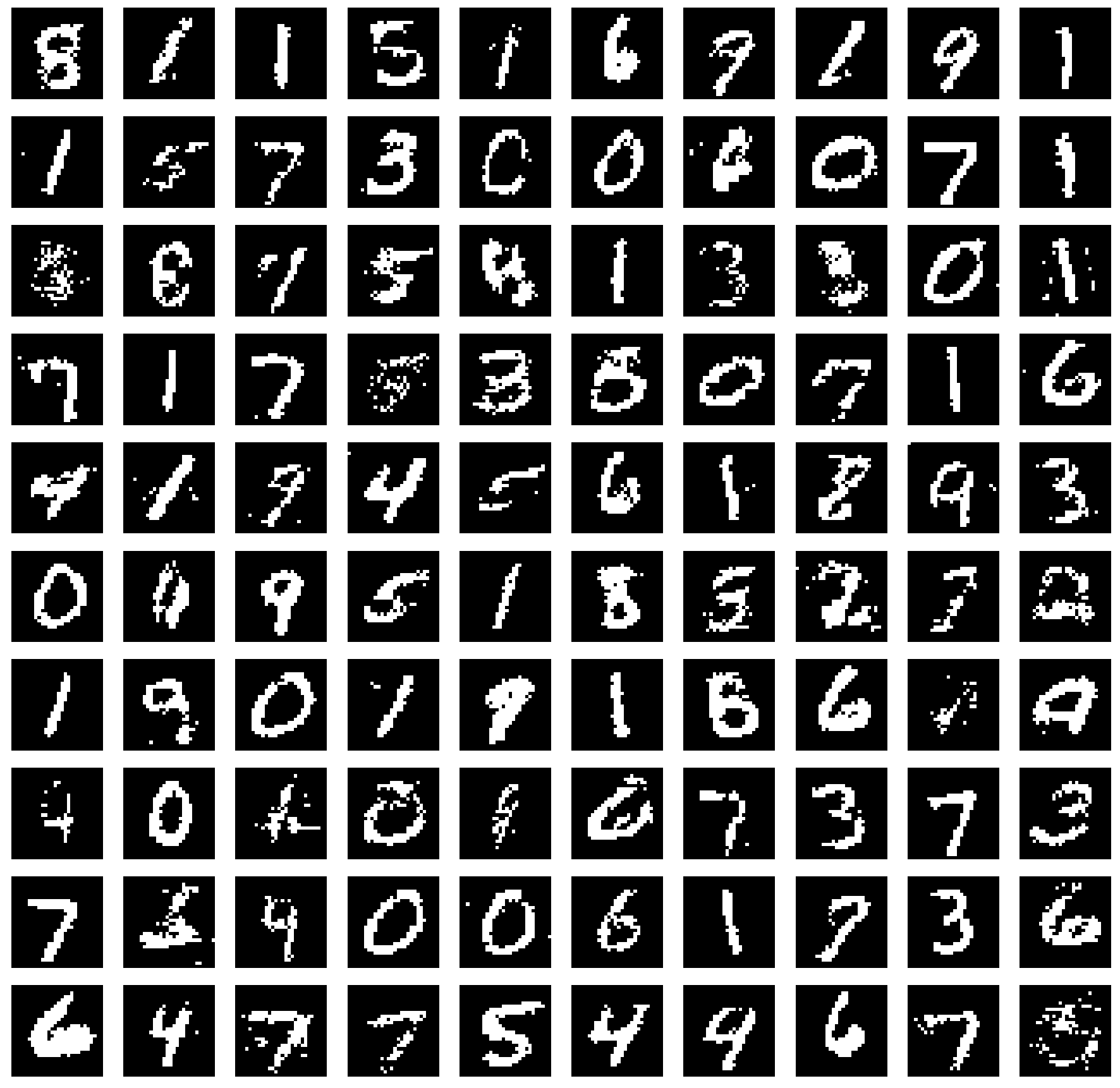}
  \par{\small Soft}

  \caption{Binarized MNIST samples generated by \methodname, STE, and
  Soft, from top to bottom.}
  \label{fig:mnist-samples}
\end{figure}

\end{document}